\pdfoutput=1
\documentclass{article}
\usepackage{style/iclr2027_conference}

\usepackage{times}

\iclrfinalcopy

\usepackage[T1]{fontenc}
\usepackage{amsmath, amssymb, amsthm}
\usepackage{graphicx}
\usepackage{subcaption}
\usepackage{float}
\usepackage{booktabs}
\usepackage{hyperref}
\usepackage{url}
\usepackage{cleveref}
\crefname{appsec}{Appendix}{Appendices}
\Crefname{appsec}{Appendix}{Appendices}

\usepackage{listings}
\let\ttdefaultsaved\ttdefault
\usepackage[scaled=0.96,var0,varl,varqu,noupquote]{zi4}
\let\ttdefaultcode\ttdefault
\let\ttdefault\ttdefaultsaved
\newcommand{\codefont}{\let\ttdefault\ttdefaultcode\ttfamily}
\definecolor{codekeyword}{HTML}{1F4E9C}
\definecolor{codecomment}{HTML}{4F7355}
\definecolor{codestring}{HTML}{8F3A32}
\lstdefinestyle{teasercode}{
  language=Python,
  morekeywords={None,True,False,as,with,lambda,yield,assert,nonlocal},
  basicstyle=\codefont\small,
  identifierstyle=\color{black},
  keywordstyle=\color{codekeyword},
  commentstyle=\color{codecomment},
  stringstyle=\color{codestring},
  columns=fixed,
  basewidth=0.5em,
  keepspaces=true,
  showstringspaces=false,
  numbers=none,
  frame=none,
  tabsize=4,
  aboveskip=2pt, belowskip=2pt,
}

\newtheorem{theorem}{Theorem}
\newtheorem{lemma}{Lemma}
\newtheorem{proposition}{Proposition}

\theoremstyle{definition}
\newtheorem{definition}{Definition}
\theoremstyle{remark}
\newtheorem{remark}{Remark}
\theoremstyle{plain}

\newcommand{\sgn}{\operatorname{sgn}}
\newcommand{\ip}[2]{\langle #1, #2\rangle}
\newcommand{\norm}[1]{\lVert #1 \rVert}
\newcommand{\ind}[1]{\mathbf{1}\{#1\}}
\renewcommand{\O}{\mathcal{O}}
\newcommand{\R}{\mathbb{R}}
\newcommand{\N}{\mathbb{N}}
\newcommand{\float}{\mathbb{F}}

\newcommand{\pc}{\mathtt{pc}}
\newcommand{\mem}{\mathtt{mem}}
\newcommand{\bits}{\operatorname{bits}}
\newcommand{\enc}{\operatorname{enc}}
\newcommand{\poly}{\operatorname{poly}}
\newcommand{\voc}{\mathcal{V}}
\newcommand{\dff}{d_{\mathrm{ff}}}
\DeclareMathOperator{\sbop}{sb}
\DeclareMathOperator{\rd}{rd}
\DeclareMathOperator{\relu}{ReLU}
\DeclareMathOperator{\lemaop}{LEMA}
\DeclareMathOperator*{\argmax}{arg\,max}

\title{Latest Exact Match Attention}
\author{Moritz Br\"osamle \\
Department of Mathematics, University of T\"ubingen, Germany \\
\texttt{moritzbroesamle@gmail.com}
}
\date{}

\begin{document}
\maketitle

\begin{abstract}
We introduce \emph{latest exact match attention} (LEMA), an attention variant for transformers where queries and keys are binarized and each query attends only to the latest exactly matching key. We prove that LEMA transformers with chain of thought can simulate word-RAMs, as was recently shown for the less restrictive rightmost hard attention. In contrast to prior hard attention variants, the restriction to exact matches enables an efficient converse direction: word-RAMs can simulate LEMA transformers at a cost per token independent of the context length. Together, these results yield a close correspondence between the two computational models in terms of both compute and memory. Beyond the theory, we propose a training method for LEMA transformers that handles their non-differentiable operations with a straight-through estimator for the binarization and a soft attention surrogate annealed towards LEMA. On a synthetic associative recall task, LEMA models trained this way use their growing state to store and recall a large number of associations, outperforming gated DeltaNet (GDN) with its fixed state size. As a first scaling test, we train LEMA language models with up to $834$ million parameters. They match softmax transformers of around half their size in loss and, on repeated rare phrases and a needle-retrieval task, remain behind softmax transformers but recall across longer distances than GDN models of comparable size. Finally, we implement dictionary-based inference for LEMA transformers and show constant generation speed comparable to GDN despite their growing state, with the dictionaries residing in main memory rather than VRAM. Code is available at \url{https://github.com/moritzbroe/latest_exact_match_attention}.

\end{abstract}

\section{Introduction}
During autoregressive generation, softmax transformers typically face two costs that grow with context length: reading more keys and values slows generation, and storing them consumes limited GPU memory (VRAM). State space models and linear attention variants---collectively referred to as fixed-state models throughout this work---instead compress the context into a state of fixed size, making both memory and compute per token independent of context length. However, this fixed memory capacity limits recall when the amount of information to be retained grows~\citep{arora2023zoology,arora2024based,waleffe2024mamba,afendulev2026hybrid}. We seek to combine a state that can grow but resides in main memory with the constant generation speed of fixed-state models by using direct lookup operations instead of a scan over the state.

Abstractly, softmax attention can be viewed as a differentiable dictionary lookup. A query is compared with all previous keys and returns a mixture of all values weighted by similarity to their keys. This makes the lookup differentiable, but also involves every stored key and value in the computation. We propose to replace it with the following discrete lookup operation during inference:

\begin{definition}[Latest exact match attention]\label{def:lema}
    For binary queries and keys $q_i,k_i\in\{-1,1\}^{d_h}$ and values $v_i\in\R^{d_h}$, latest exact match attention (LEMA) returns
    \[
        \lemaop\bigl((q_i,k_i,v_i)_{i=1}^n\bigr)=(o_i)_{i=1}^n,
        \qquad
        o_i=
        \begin{cases}
            v_{\ell_i}, & \ell_i=\max\{j<i\mid k_j=q_i\}\text{ exists},\\
            0, & \text{otherwise}.
        \end{cases}
    \]
    A LEMA head applies this operation to $q_i=\sgn(W_Qx_i)$, $k_i=\sgn(W_Kx_i)$ and $v_i=W_Vx_i$ for hidden states $x_i\in\R^d$, with $\sgn$ applied coordinatewise and $\sgn(0)=1$. A LEMA transformer replaces softmax attention heads with LEMA heads.
\end{definition}

\begin{figure}[t]
\centering
\begin{minipage}[t]{0.47\textwidth}
\begin{lstlisting}[style=teasercode]
def softmax_step(q, k, v, C):
    # C: kv-cache, two tensors in VRAM
    C.K.append(k)         # grows with t
    C.V.append(v)
    a = softmax(C.K @ q)  # scores t keys
    return a @ C.V        # mixes t values
\end{lstlisting}
\end{minipage}\hfill
\begin{minipage}[t]{0.47\textwidth}
\begin{lstlisting}[style=teasercode]
def lema_step(q, k, v, C):
    # C: kv-cache, dict in main memory
    q, k = binarize(q), binarize(k)
    o = C.get(q, 0)       # one lookup
    C[k] = v              # one insert
    return o
\end{lstlisting}
\end{minipage}
\vspace{-4pt}
\caption{Simplified code for the kv-cached generation step of a softmax head and a LEMA head.}
\label{fig:teaser}
\vspace{-8pt}
\end{figure}

In words, a LEMA transformer binarizes keys and queries and each query attends only to the latest exactly matching key, retrieving zeros if no key matches. During autoregressive inference of a LEMA transformer, the kv-cache of each attention head can be implemented as a dictionary as illustrated in \Cref{fig:teaser}. Processing one token consists of one lookup and one insert to this dictionary, and updating an existing key overwrites its old value, allowing for a content-dependent state size. The dictionary for a LEMA head with head dimension $d_h$ contains at most $2^{d_h}$ entries, as only that many distinct keys exist, so the state size is bounded in theory but practically unbounded for large $d_h$. Prior works have used attention to the nearest or latest token satisfying some condition~\citep{csordas2022ndr, yang2024masked, friedman2023learning} or even to the tokens with exactly matching keys~\citep{liu2025tale,yang2025pencil} as a formal device or in small trained models, but the form of \Cref{def:lema} with binarized queries and keys has, to our knowledge, not been proposed.

In \Cref{sec:ramsimulation}, we establish a correspondence between LEMA transformers with chain of thought and word-RAMs, an abstraction of modern computers, which, to our knowledge, has not been established for other attention variants. Recently,
\citet{li2026wordram} showed that $t$ steps of a word-RAM with word size $w$ can be simulated by a transformer with rightmost hard attention using $\O(t\poly(w))$ chain of thought steps. In \Cref{thm:ramsim} we show that an analogous statement holds for the more restrictive LEMA and bound the number of distinct keys used by the LEMA heads in terms of the word-RAM's space usage. Using LEMA instead of rightmost hard attention enables an efficient converse of this statement: \Cref{thm:lemasim} shows that a word-RAM can perform autoregressive generation of an $N$-parameter LEMA transformer using $\O(N)$ steps per processed token with space usage corresponding to the number of distinct keys. The kv-caches are implemented as trie-based dictionaries on the word-RAM with lookup and insert operations taking time independent of the number of kv-entries.
Apart from input/output costs, these results yield a round trip with space overhead polynomial in the word size and time overhead polynomial in the word size and program length.

The hard attention variants of prior expressivity results serve as theoretical abstractions of softmax attention~\citep{hahn2020theoretical, merrill2024cot}, whose faithfulness is debated~\citep{merrill2022saturated, velickovic2025softmax}, and those works do not address training models with the analyzed attention rules. In contrast, we propose in \Cref{sec:training} a way to train the very attention rule we analyze despite its non-differentiable operations. Gradients through the binarization of queries and keys come from a straight-through estimator~\citep{bengio2013estimating, courbariaux2016binarized}, and for the latest-match operation we use a surrogate based on stick-breaking attention~\citep{tan2025stickbreaking, raffel2017monotonic} during training, slowly annealing it towards LEMA. Training with this surrogate still requires computation quadratic in sequence length.

Using these techniques, we train LEMA transformers on a synthetic associative recall task where a growing number of associations between tokens is presented and then queried. We find that tiny LEMA transformers are able to store a large number of associations, just as softmax transformers, while the linear attention variant gated DeltaNet (GDN)~\citep{yang2025gateddelta} fails once the number of associations becomes too large. Next, we test whether the method scales by training LEMA transformers with up to $834$ million parameters on FineWeb-Edu~\citep{penedo2024fineweb}. At the larger sizes, they match softmax transformers with slightly more than half their parameters in language modeling loss. On two proxies for long-range recall, namely the loss on repeated rare phrases and the single-needle task of the RULER benchmark~\citep{hsieh2024ruler} with repetitive filler, they remain clearly behind softmax transformers but outperform GDN at long distances. On both the synthetic recall task and in language modeling, we observe shortcomings of our training method and hence consider it merely a first attempt at training LEMA transformers.

In \Cref{sec:inference}, we present and benchmark an implementation of LEMA transformers using hash tables as dictionaries for the kv-caches. This keeps the generation speed constant, close to that of GDN, until the hash table nears its capacity. Furthermore, these kv-caches are stored in main memory rather than VRAM, so that the VRAM holds only the model's parameters and activations.

\section{Related work}
In most sequence models, every generated token is computed from the whole context-dependent state, so that the compute per token grows with the state, and models differ in how the state grows with the context. Many models use a fixed state size, most prominently recurrent networks~\citep{elman1990finding, hochreiter1997long}, state space models~\citep{gu2022s4, gu2024mamba, dao2024mamba2}, linear attention variants~\citep{katharopoulos2020transformers, schlag2021linear, liu2022ecoformer, yang2024deltanet, yang2025gateddelta} and recent test-time training methods~\citep{sun2024ttt, behrouz2025titans}. Other methods bound the state of transformers: \citet{lingle2024transformervq} quantizes keys to a small fixed codebook and accumulates the values per code, while \citet{zhang2023h2o} and \citet{xiao2024streamingllm} evict entries from the kv-cache beyond a budget, and \citet{cui2026hola} adds a small fixed cache of exact key-value pairs to GDN. Softmax transformers~\citep{vaswani2017attention} instead grow their state linearly in the context length, while hybrid architectures~\citep{poli2024mad}, grouped-query and latent attention~\citep{shazeer2019fast, ainslie2023gqa, deepseek2024v2} and cache quantization~\citep{liu2024kivi, hooper2024kvquant} reduce the slope of this growth, and binarized queries and keys~\citep{horton2025hamming, xiao2026binaryattention} reduce attention computation. Log-linear attention~\citep{guo2025loglinear} grows the state logarithmically. \citet{pal2026distinct} cache keys far from all stored keys, thus sharing LEMA's content-dependent state size, but read the full cache with softmax attention. Some methods also merge cache entries at a learned, content-dependent rate~\citep{nawrot2024dmc} or let transformers summarize or erase their chain of thought~\citep{yan2026inftythink, yang2025pencil, aghajohari2025markovian}, which allows simulating Turing machines space-efficiently~\citep{yang2025pencil, broesamle2026expressive}.

Other methods use only part of the state to compute the next token. Several content-based sparse-attention methods~\citep{tang2024quest, desai2025hashattention, gong2025hata, yuan2025nsa, lu2025moba, deepseek2025v32} use a cheap scan over keys or block summaries to select a small subset for attention, so their per-token compute still grows linearly with context length. Reformer~\citep{kitaev2020reformer} and Routing Transformers~\citep{roy2021routing} select keys by locality-sensitive hashing or learned clustering, then mix their values with softmax attention. Other methods use indexed memory access. Neural Random-Access Machines~\citep{kurach2016neural} train neural controllers over differentiable arithmetic and memory-access primitives, with constant-time memory access after discretization. Neural Episodic Control~\citep{pritzel2017neural} grows a dictionary of key-value pairs and updates the value of an existing key in place, Sparse Access Memory~\citep{rae2016sam} reads and writes a few slots of a fixed-size memory per step, Memorizing Transformers~\citep{wu2022memorizing} attend to the nearest neighbors of the query among past keys, and RetrievalAttention~\citep{liu2024retrievalattention} and MagicPIG~\citep{chen2025magicpig} keep the kv-cache of trained softmax transformers in main memory and read a small fraction of the keys per query, found by a graph index or sampled through locality-sensitive hashing. FwPKM~\citep{zhao2026fwpkm} and Sparse Delta Memory~\citep{cabannes2026sdm} maintain a fixed-size state of $m$ value vectors per head in VRAM, selecting entries using two vectors of $\sqrt{m}$ scores. A LEMA head instead addresses up to $2^{d_h}$ entries with a $d_h$-bit key, storing only keys that occur.

Finally, Engram~\citep{cheng2026engram} and memory layers~\citep{lample2019pkm, berges2024memorylayers} access a large table of parameters through a hashed $n$-gram or a product key, at a cost independent of or sublinear in its size. As this table contains parameters rather than context-dependent state, these methods are conceptually closer to mixture-of-experts layers~\citep{shazeer2017moe} than to LEMA.

\section{Computational equivalence with word-RAMs}\label{sec:ramsimulation}
A \emph{word-RAM}~\citep{fredman1993fusion, hagerup1998wordram} $M=(P,r,w)$ consists of a program $P$, a register count $r$ and a word size $w$. Its $r$ registers and $2^w$ memory cells hold $w$-bit \emph{words}, and $P$ is a sequence of arithmetic, branching and memory access instructions reading from and writing to these cells. Time $t_M(x)$ is the number of executed instructions on an input $x$ before halting, while space $s_M(x)$ counts the registers and memory cells used by that computation. Precise definitions are given in \Cref{app:ram-model}.

For this section, a \emph{LEMA transformer} is a transformer decoder with LEMA heads as defined by \Cref{def:lema} using ReLU MLPs and no normalization or positional encodings. All operations in its forward pass use standard IEEE-style floating-point arithmetic, and we write $p$ for the total number of bits of the format, its \emph{precision}. Generation is greedy. We say that $T$ generates $y$ from $x$ if its autoregressive generation from $x$ ends in $\texttt{<out>}\;y\;\texttt{<eos>}$. The tokens before $\texttt{<out>}$ are the chain of thought. $t_T(x)$ then denotes the total number of generated tokens and $s_T(x)$ the sum over all heads of the number of distinct keys used by each head. See \Cref{app:lema-model} for the formal model. 

\subsection{LEMA transformers simulate word-RAMs}
The following result parallels Theorem~1 of \citet{li2026wordram}, which establishes word-RAM simulation capability for the less restrictive rightmost hard attention, where queries and keys are not binarized and each query selects the key with the largest inner product, using rightmost tie-breaking. Notably, many parts of their construction and of constructions in other transformer expressivity works~\citep{merrill2024cot, li2024cot, liu2025tale, yang2025pencil, broesamle2026expressive} attend through exact matches of encoded addresses or indices, which partly motivated this work.
The encoding $\enc$ in the statement writes out words with corresponding memory addresses bit by bit (\Cref{def:enc}). Every $\O$-expression hides only a universal constant.
\begin{theorem}\label{thm:ramsim}
    Let $M=(P,r,w)$ be a word-RAM. Then there exists a LEMA transformer $T$ with an $\O(1)$-size vocabulary, $\O(1)$ heads per layer, depth $\O(w)$, model dimension $d=\O(w)$, head dimension $d_h=\O(w)$, MLP width $\dff=\O(w+|P|)$ and precision $p=\O(\log w)$ such that the following holds. If $M$ on an input word sequence $x$ halts with output sequence $y$, then $T$ generates $\enc(y)$ from $\enc(x)$ and
    \[
        t_T(\enc(x)) = \O\bigl((t_M(x)+|y|)w\bigr)
        \qquad\text{and}\qquad
        s_T(\enc(x)) = \O(s_M(x)+w)\;.
    \]
\end{theorem}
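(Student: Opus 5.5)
We would prove \Cref{thm:ramsim} by following the structure of Theorem~1 of \citet{li2026wordram}. The plan is to check that every attention operation in a suitable chain-of-thought simulation can be phrased as an exact match of a binary pattern with the latest earlier occurrence. We would also count the distinct keys.

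\textbf{Chain-of-thought format.} The chain of thought is a sequence of \emph{step blocks}, one per executed instruction, each of length $\O(w)$. A block writes out bit by bit:
\begin{itemize}
\item the program counter (in binary, $\O(\log|P|)$ bits, or one token per instruction class);
\item the index and new content of the register that was modified;
\item for store instructions, the accessed address and the stored word.
\end{itemize}
This is the same format as $\enc$, so the input $\enc(x)$ already looks like a sequence of memory writes. Emitting the output $\enc(y)$ takes $\O(|y|w)$ further tokens. Hence $t_T=\O((t_M+|y|)w)$.

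\textbf{Positional bookkeeping without positional encodings.} Every token of a block carries its bit index $j\in\{0,\dots,w-1\}$ in the hidden state, stored as a one-hot or binary code with $\O(w)$ coordinates. This index is computed by a LEMA head whose query and key are constant vectors, so it attends to the immediately preceding token; the previous index is copied and incremented by an MLP. The same trick gives every token its immediate predecessor's hidden state. That is all we need for local structure, and it uses only $\O(1)$ distinct keys.

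\textbf{Reading registers and memory.} This is where exact match replaces rightmost hard attention.
\begin{itemize}
\item \emph{Register reads.} To find the current value of bit $j$ of register $a$, the query is $\sgn$ of an encoding of the pair $(a,j)$. Each written bit of register $a$ emits the key $(a,j)$ and carries the bit as its value. The latest exact match is then the most recent write, which is exactly the current content.
\item \emph{Memory reads.} A load of bit $j$ at address $u$ queries with the binary vector $(\text{tag}_{\mathrm{mem}},u,j)$. Every token that writes bit $j$ of cell $u$, whether in $\enc(x)$ or in a store block, emits that key. If no key matches, LEMA returns $0$, which correctly encodes the default value $0$ of an unwritten cell.
\item \emph{Getting $u$ into one token.} The address $u$ has $w$ bits, but they are spread over the tokens of the current block. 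We assemble $u$ inside one hidden state with $w$ heads, or with $\O(w)$ layers each copying one bit from a fixed relative offset. This is why the depth is $\O(w)$.
\end{itemize}

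\textbf{Key count.} Distinct keys are the pairs $(a,j)$ and $(u,j)$ that actually occur, together with $\O(1)$ auxiliary keys per head. Each register or cell that is touched contributes at most $w$ keys per head. The stated bound $s_T=\O(s_M+w)$ therefore requires folding the bit index into the value instead. Concretely, only the final token of a word's block emits the key $(u)$, and its value carries all $w$ bits, which costs $d_h=\O(w)$. The value packing in turn requires the whole word to be assembled in one hidden state first, which we do with the same $\O(w)$ layers of predecessor copying. The additive $w$ covers the constant and positional keys, whose number is $\O(w)$ because the one-hot index varies.

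\textbf{Arithmetic.} Once the operand words sit in one hidden state, an MLP of width $\O(w+|P|)$ decodes the instruction from the program counter via a lookup of width $|P|$. A depth-$\O(w)$ stack of ReLU layers then computes addition, comparison, and bitwise operations, for example ripple-carry addition with one layer per bit. Multiplication and division are the hard case: either they are excluded by the instruction set of \Cref{app:ram-model}, or they are spread over $\O(w)$ chain-of-thought tokens. Since all intermediate values are small integers or bits, precision $p=\O(\log w)$ suffices; we only need to verify that $\sgn$ with $\sgn(0)=1$ binarizes the $0/1$ encodings correctly, for example by using the affine map $2b-1$.

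\textbf{Main obstacle.} The hardest step is reconciling three requirements with only predecessor-style attention and no positional encodings:
\begin{itemize}
\item the key-count bound, which forces one key per word rather than one per bit;
\item the assembly of words into single tokens;
\item the alignment of input words with their addresses.
\end{itemize}
Our first attempt would be to make $\enc$ put the address before each word's bits, so that a running accumulator can carry the partial address and word forward token by token. That accumulator needs only $\O(w)$ coordinates and depth $\O(w)$.
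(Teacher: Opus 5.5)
Your skeleton matches the paper's proof. The transcript logs every write as an address-then-value record, so $\enc(x)$ is already such a log. Each operand is read by a latest-exact-match lookup keyed by (type, address), with the key emitted only at the token that closes a record. Words are assembled in one hidden state by $\O(w)$ layers of previous-position copying, and decoding uses one neuron per instruction.

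\textbf{The gap is multiplication.} The instruction set of \Cref{def:wordram} contains $\times$, so it cannot be excluded. Your fallback, spreading it over $\O(w)$ chain-of-thought tokens, is not worked out and does not come for free. Chain-of-thought tokens come from an $\O(1)$ vocabulary, so writing a shift-and-add accumulator into the chain of thought costs $w$ tokens per row and $\Theta(w^2)$ tokens per multiplication, which breaks $t_T=\O((t_M+|y|)w)$. Handing a word from one position to the next through hidden states instead costs one layer per hand-off. Either way you must exhibit a concrete multiplier of depth $\O(w)$ and width $\O(w)$. Naive row-by-row ripple-carry addition of the $w$ partial products costs $w$ layers per row, so it does not give one.

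The missing idea is the carry-save invariant \eqref{eq:mul-invariant}. Keep the running high part as two words with $\mathtt{X}+\mathtt{Y}=\lfloor a(b\bmod2^s)/2^s\rfloor$. Add the row $ab_s$ with one layer of independent full adders, setting $\mathtt{X}_j:=u_{j+1}$, $\mathtt{Y}_j:=c_j$ and $\mathtt{Q}_s:=u_0$. After $w$ layers, $\mathtt{Q}=ab\bmod2^w$, using $\O(w)$ neurons per layer. The variable shifts and $\operatorname{msb}$, which you do not mention, likewise must be spread over the $w$ stages (one neuron per shift amount per stage) to stay within $\dff=\O(w+|P|)$.

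\textbf{Smaller points.}
\begin{itemize}
\item A layer cannot run a recurrence along positions. A head copying the predecessor's index at layer $\ell$ sees only the predecessor's layer-$(\ell-1)$ state. So your bit-index counter and running accumulator advance one position per layer and need $w$ layers (the paper's $\mathtt{dist}_j$ flags and $\mathtt{win}$ window), not a single head.
\item For the additive $\O(w)$ in $s_T$, each of the $\Theta(w)$ non-dictionary heads must have $\O(1)$ distinct keys. Positional information may therefore enter keys only as single flags, not as the full one-hot index in many layers, which would give $\Theta(w^2)$ keys.
\item The output phase is still unspecified: reading $m=\mu_t(0)$, incrementing the address, and comparing with $m$ to emit \texttt{<eos>}.
\item So is the broadcast of a step's result to the tokens that spell it out; the paper uses a head keyed on a control-position flag. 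Both follow the same pattern as the rest of your construction.
\end{itemize}
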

The proof is given in \Cref{app:ramsim-proof}. During generation, the constructed transformer produces tokens encoding the changes of each word-RAM step to the word-RAM's registers, memory cells and program counter.

\subsection{Word-RAMs simulate LEMA transformers}
The next result shows that a word-RAM can evaluate a LEMA transformer with per-token compute independent of the context length and memory growing with the number of distinct keys. Again, every $\O$-expression hides only a universal constant.

\begin{theorem}\label{thm:lemasim}
    There is a universal constant $C>0$ such that, for every LEMA transformer $T$ with $N$ parameters, model dimension $d$, head dimension $d_h$ and precision $p$, there exist a program $P$, a register count $r$ and a word size threshold $w_0$ satisfying
    \[
        |P|=\O(N),
        \qquad r=\O(1),
        \qquad w_0=\O(p+\log N)\;
    \]
    such that the following holds for every $w\ge w_0$. If $T$ on input $x$ generates output $y$ and
    \[
        \max\{|x|,|y|\}+C\bigl(d+s_T(x)d_h\bigr)<2^w\;,
    \]
    then the word-RAM $M=(P,r,w)$ outputs the token indices of $y$ from the token indices of $x$ and
    \[
        t_M(x) = \O\bigl((|x|+t_T(x))N\bigr) 
        \qquad\text{and}\qquad
        s_M(x) = \O\bigl(|x|+|y|+d+s_T(x)d_h\bigr)\;.
    \]
\end{theorem}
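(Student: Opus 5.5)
We will build the word-RAM program $P$ by straight-line compilation of one forward pass of $T$, wrapped in a loop over tokens. Each of the $N$ parameters becomes a word stored in memory, or an immediate constant embedded in $P$; either way $|P|=\O(N)$. Every floating-point number of precision $p$ fits in one word once $w\ge p$, and arithmetic on floats is realized by fixed-size integer instruction sequences (unpack sign/exponent/mantissa, integer multiply/add, round, repack). These sequences have length $\O(1)$ in units of word operations, with constants independent of $p$ provided $w\ge w_0=\O(p+\log N)$. The $\log N$ term lets addresses of parameters and activations fit in a word.

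\textbf{Memory layout.} We first place the parameters, then a scratch buffer of $\O(d)$ words for the current hidden state, MLP activations and per-head queries, keys and values. Next come the input and output token index arrays of length $\O(|x|+|y|)$. All remaining cells form a heap for the kv-dictionaries, allocated by a bump pointer. The hypothesis $\max\{|x|,|y|\}+C(d+s_T(x)d_h)<2^w$ says exactly that this layout fits in the $2^w$ cells.

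\textbf{Per-token step.} For each processed token (the $|x|$ prompt tokens and the $t_T(x)$ generated ones), we embed it, then run each layer's heads and MLP, then apply the unembedding and a greedy argmax. All of this costs $\O(N)$ instructions, since every parameter is touched $\O(1)$ times. Matrix products are unrolled, so no per-parameter loop overhead arises beyond a constant.

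\textbf{LEMA heads.} For each head we compute $q=\sgn(W_Qx)$ and $k=\sgn(W_Kx)$ as bit strings of length $d_h$, at cost $\O(d\,d_h)$, which is already charged to $W_Q,W_K$. The kv-cache is a binary trie of depth $d_h$ per head. Each node is a pair of child pointers, and each leaf points to a block of $d_h$ words storing the value. Lookup of $q$ walks $d_h$ nodes, returning the zero vector on a missing child, then copies $d_h$ words. Insert of $k$ walks or creates at most $d_h$ nodes and overwrites the leaf's value block. Overwriting is exactly the ``latest match'' semantics, because the lookup precedes the insert for the same position and keys are inserted in order. Each of these operations costs $\O(d_h)$, which is at most $\O(N)$ per head since $W_V$ alone has $d\,d_h\ge d_h$ parameters, so summing over heads keeps the total at $\O(N)$. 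Space per head is $\O(\#\text{distinct keys}\cdot d_h)$ for trie nodes plus value blocks. Summed over heads this gives $\O(s_T(x)d_h)$, which together with the scratch and I/O arrays yields $s_M(x)=\O(|x|+|y|+d+s_T(x)d_h)$. The time bound $\O((|x|+t_T(x))N)$ follows directly.

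\textbf{Main obstacle.} The delicate step is the floating-point emulation. We must ensure that IEEE-style rounding, including subnormals, overflow and $\sgn(0)=1$, is reproduced bit-exactly by $\O(1)$ word instructions whose count does not depend on $p$ or $w$. This is needed so that the simulated greedy argmax, and hence the generated tokens, agree exactly with $T$. We plan to handle it by choosing $w_0\ge 2p+\O(1)$ so that a mantissa product fits in one word, and by using a constant-length rounding routine built from shifts and comparisons; we rely on the instruction set of \Cref{app:ram-model} including shifts, or on simulating them via multiplication by powers of two stored as constants. A secondary point is accounting: parameters can be counted in registers-versus-memory terms under the model's space measure, with $r=\O(1)$ registers serving as temporaries. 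We will also check that the $C$ in the hypothesis absorbs the trie node overhead.
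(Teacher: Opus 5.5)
Your overall plan matches the paper's: a straight-line compiled forward pass, one depth-$d_h$ binary trie per head with lookup before insert, a bump allocator, and constant-length float routines. The gap is in the space accounting, on which both the bound on $s_M(x)$ and the use of the capacity hypothesis rest. As written, it does not go through.

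First, ``either way'' is false. Parameters written to memory occupy $N$ cells, and $N$ appears neither in $\O(|x|+|y|+d+s_T(x)d_h)$ nor in the hypothesis. They must therefore live only as immediate constants of $P$, loaded into registers and never stored.

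Second, a scratch buffer of $\O(d)$ words cannot hold the MLP activations. $\dff$ is not bounded in terms of $d$, $d_h$ or $s_T(x)$, so storing $z\in\float^{\dff}$ breaks the bound. The missing step is to stream the hidden layer: compute $z_j$, immediately perform $\mathtt{S}_i\leftarrow\mathtt{S}_i\oplus\bigl((W_2)_{ij}\otimes z_j\bigr)$ for $i=1,\dots,d$, and discard $z_j$. Since $j$ increases, every output coordinate is still summed left to right as \Cref{def:float} prescribes. Likewise, keep only a running maximum over the $|\voc|$ logits, replaced on strict improvement to respect the tie-breaking order. (Your head buffers have size $\O(d_h)$, not $\O(d)$. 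This is harmless because $s_T(x)\ge LH\ge 1$, since every head inserts a key at the first prompt token.)

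Third, the layout ignores the I/O convention of \Cref{def:ramcomp}. The prompt already occupies cells $0,\dots,n$, the output must end in cells $0,\dots,m$, and $|y|$ is not known in advance. Moreover, the hypothesis pays only for $\max\{|x|,|y|\}$, not $|x|+|y|$. With $|x|$ and $|y|$ both close to $2^w$ and $s_T(x)$ small, separate input and output arrays do not fit, so the hypothesis does not ``say exactly'' that your layout fits. The paper instead does the following:
\begin{itemize}
\item It places the fixed workspace of $S_0=3d+4d_h+LH+c$ cells, with $c$ a universal number of scalar working cells, at the top of memory starting at $\mathtt{base}=(0-S_0)\bmod 2^w$, so that $P$ does not depend on $w$.
\item It grows the trie heap downward from there.
\item It writes each output token over the already consumed prompt from cell $1$ upward, storing $m$ in cell $0$ at \texttt{<eos>}.
\end{itemize}
Memory use is then $\max\{|x|,|y|\}+1+S_0+3s_T(x)d_h\le\max\{|x|,|y|\}+(c+4)d+8s_T(x)d_h$, which gives the theorem with $C\ge\max\{c+4,8\}$.

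Finally, your float plan covers products via $w_0\ge 2p+\O(1)$, but not addition. The exponent difference $D$ can be of order $2^{p_e}$, which is not $\O(p+\log N)$, so aligning and adding in one word fails. You need the shortcut that for $D>p_m+2$ the rounded sum is just the larger operand; after that, all intermediates have $\O(p)$ bits. Overflow never needs emulating, since the forward pass is defined only when finite.
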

The proof in \Cref{app:lemasim-proof} constructs a word-RAM that performs standard autoregressive inference, with the kv-cache of each head stored as a binary trie mapping keys to their latest values. A lookup or update costs $\O(d_h)$ time, and the dictionary operations are dominated by the $\O(N)$ operations per token needed for matrix-vector multiplications. A trie with $s$ distinct keys uses $\O(s d_h)$ words, which gives the stated space bound after accounting for reusable buffers and the input and output. The capacity condition ensures that these fit in memory, while the threshold $w_0$ provides enough bits for scalar arithmetic and instruction indices.

To see the correspondence between LEMA transformers and word-RAMs, the two results can be combined into a round trip: for a word-RAM $M=(P,r,w)$, applying \Cref{thm:ramsim} and then \Cref{thm:lemasim} yields a single word-RAM $M'$ with word size $w'=\O(w)$ that maps $\enc(x)$ to $\enc(y)$ whenever $M$ halts on $x$ with output $y$, identifying tokens with their vocabulary indices. Its time and space satisfy $t_{M'}(\enc(x))=\O((t_M(x)+|x|+|y|)\,|P|\poly(w))$ and $s_{M'}(\enc(x))=\O(s_M(x)\poly(w))$. See \Cref{app:roundtrip} for details.

The head dimension $d_h$ loosely corresponds to the word size $w$: a word-RAM can access $2^w$ memory cells through $w$-bit addresses, while a LEMA head can access up to $2^{d_h}$ key-value pairs through $d_h$-bit keys. The correspondence appears to be specific to LEMA transformers and word-RAMs. Using rightmost hard attention instead of LEMA would replace the exact-match lookups with maximum inner product searches for which no comparably efficient exact method is known. Likewise, Turing machines, the model of many prior chain of thought expressivity results~\citep{perez2021attention, merrill2024cot}, lack the random memory access that makes the dictionary operations efficient.

\section{Training LEMA transformers}\label{sec:training}
\subsection{Training method}
Here we introduce the training method we will use to train LEMA transformers. The two non-differentiable operations, namely the binarization of queries and keys and the latest exact match operation, are treated separately.

\paragraph{Binarizing queries and keys.}
The binarization of queries and keys in LEMA transformers is analogous to how activations are binarized in binarized neural networks. We adopt the most common training technique used in that literature, the straight-through estimator, which uses the gradient of a similarly shaped differentiable function in the backward pass of the sign function~\citep{courbariaux2016binarized, yin2019understanding}. In particular, for the forward pass we obtain queries and keys of a LEMA head for hidden state $x \in \R^d$ as
$$q = \sgn(\beta \operatorname{RMSNorm}(W_Q x))$$
where in the backward pass the sign function is treated as having derivative $\tanh'$. Neither the normalization $\operatorname{RMSNorm}(y) = y \big/ \sqrt{\textstyle\sum_i y_i^2/d_h}$~\citep{zhang2019rmsnorm} nor the multiplication by the hyperparameter $\beta > 0$ changes the forward pass, as $\sgn$ is invariant under positive rescaling, but helps keep activations in a region with gradient magnitudes controlled by $\beta$ instead of drifting.

\paragraph{Annealing towards LEMA.}
Just as the sign function is approximated by $\tanh$ as a differentiable surrogate, we approximate the latest exact match operation with a smooth surrogate. As this surrogate we choose stick-breaking attention~\citep{tan2025stickbreaking}, where a high attention score to one position suppresses the attention weights on earlier positions, and extend it with a threshold parameter $c$:
$$\sbop_{\alpha, c}((q_i,k_i,v_i)_{i=1}^n) = (o_i)_{i=1}^n$$
where the output at position $i$ is defined as
\begin{equation}
o_i = \sum_{j < i} w_{ij} v_j,\; w_{ij} = \sigma(\alpha (\langle q_i, k_j\rangle -c)) \prod_{l=j+1}^{i-1} (1-\sigma(\alpha(\langle q_i, k_l\rangle - c)))
\end{equation}
with $\sigma$ the sigmoid function. With queries and keys binarized to $\{-1,1\}^{d_h}$, the score $\ip{q_i}{k_j}$ lies in $\{d_h, d_h-2, d_h-4, \dots\}$, so setting $c = d_h - 1$ makes the sigmoid argument $\alpha$ for an exact match and at most $-\alpha$ otherwise. Increasing $\alpha$ hence sharpens the surrogate towards LEMA:
\begin{lemma}\label{lem:surrogate}
    Let $q_i, k_i \in \{-1,1\}^{d_h}$ and $v_i \in \R^{d_h}$ for $i \le n$, and let $\alpha > 0$. Then for every position $i$,
    \[
        \bigl\|\lemaop\bigl((q_j,k_j,v_j)_{j=1}^n\bigr)_i - \sbop_{\alpha, d_h-1}\bigl((q_j,k_j,v_j)_{j=1}^n\bigr)_i\bigr\|
        \le 2n e^{-\alpha} \max_{j<i}\norm{v_j}.
    \]
\end{lemma}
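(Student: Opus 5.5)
Fix a position $i$ and write $s_j=\sigma(\alpha(\ip{q_i}{k_j}-(d_h-1)))$ for $j<i$, so that $w_{ij}=s_j\prod_{l=j+1}^{i-1}(1-s_l)$. Since $\ip{q_i}{k_j}\in\{d_h,d_h-2,\dots\}$, we have $s_j=\sigma(\alpha)$ if $k_j=q_i$ and $s_j\le\sigma(-\alpha)$ otherwise. The key elementary facts are $1-\sigma(\alpha)=\sigma(-\alpha)\le e^{-\alpha}$ and $\sigma(-\alpha)\le e^{-\alpha}$. Writing $V=\max_{j<i}\norm{v_j}$, the difference is bounded by the triangle inequality as a weighted sum of the form $\sum_j |\lambda_j - w_{ij}|\,V$, where $\lambda_j$ are the LEMA weights (an indicator of $j=\ell_i$, or all zero). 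So the plan is to show $\sum_{j<i}|\lambda_j-w_{ij}|\le 2ne^{-\alpha}$, splitting into two cases according to whether $\ell_i$ exists.

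\emph{Case 1: no matching key.} Then $\lambda\equiv 0$ and every $s_j\le e^{-\alpha}$, so $w_{ij}\le s_j\le e^{-\alpha}$ (products of factors in $[0,1]$), and $\sum_{j<i}w_{ij}\le (i-1)e^{-\alpha}\le ne^{-\alpha}$.

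\emph{Case 2: $\ell=\ell_i$ exists.} Split the positions into $j>\ell$, $j=\ell$, and $j<\ell$. For $j>\ell$, no key matches, so $w_{ij}\le s_j\le e^{-\alpha}$, contributing at most $(i-1-\ell)e^{-\alpha}$. For $j<\ell$, the product defining $w_{ij}$ contains the factor $1-s_\ell=\sigma(-\alpha)\le e^{-\alpha}$, so $w_{ij}\le e^{-\alpha}$, contributing at most $(\ell-1)e^{-\alpha}$. For $j=\ell$, I need $1-w_{i\ell}=1-\sigma(\alpha)\prod_{l=\ell+1}^{i-1}(1-s_l)$. Using $1-\prod(1-a_m)\le\sum a_m$ (union bound / induction) with $a_0=1-\sigma(\alpha)$ and $a_l=s_l$ for $l>\ell$, this is at most $e^{-\alpha}(1+(i-1-\ell))$. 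Summing the three contributions gives at most $e^{-\alpha}\bigl((i-1-\ell)+(\ell-1)+1+(i-1-\ell)\bigr)\le 2ne^{-\alpha}$, as required.

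The only step needing any care is the $j=\ell$ term. There the product inequality $1-\prod_m(1-a_m)\le\sum_m a_m$ for $a_m\in[0,1]$ is the tool, proved by a one-line induction. Everything else is bookkeeping, and the factor $2$ in the bound comes from the positions after $\ell$ being counted twice, once in their own weights and once in the deficit of $w_{i\ell}$.
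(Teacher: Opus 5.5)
Your proof is correct. The case split and the estimate for the matched weight are the same as in the paper: your inequality $1-\prod_m(1-a_m)\le\sum_m a_m$ does the job of the paper's $w_{i\ell}\ge(1-e^{-\alpha})^{i-\ell}\ge1-(i-\ell)e^{-\alpha}$.

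The real difference is how you control the off-target weights.
\begin{itemize}
\item You bound each $w_{ij}$ with $j\neq\ell$ by $e^{-\alpha}$ separately. For $j>\ell$ you use $s_j$, and for $j<\ell$ you use the factor $1-s_\ell$.
\item The paper uses the telescoping identity $\sum_{j<i}w_{ij}=1-\prod_{m<i}(1-s_m)$. This gives $\sum_{j\neq\ell}w_{ij}\le1-w_{i\ell}$ in one step.
\end{itemize}

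Your route is purely termwise and needs no identity. Its cost is that it yields $(2i-\ell-2)e^{-\alpha}V$. It pays $e^{-\alpha}$ for every position before the match, even though those weights together are at most the leftover stick $\prod_{m=\ell}^{i-1}(1-s_m)\le1-\sigma(\alpha)$. The paper's argument gives $2(i-\ell)e^{-\alpha}V$, which depends only on the distance to the latest match. Both bounds lie within the stated $2ne^{-\alpha}V$.
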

The proof is given in \Cref{app:surrogate}.
With this differentiable approximation of the latest exact match operation, one could try to use the same technique as for the binarization, i.e.\ using hard LEMA in the forward pass with the gradients from $\sbop_{\alpha, d_h-1}$ in the backward pass. Unless $d_h$ is very small, however, at initialization most queries then match no key exactly, leaving attention outputs at zero and degrading learning as discussed in \Cref{app:hardening}. Instead, training starts with the stick-breaking surrogate at $c=0$ and hardens it in two phases. After learning rate warmup, $c$ increases linearly from $0$ to $d_h-1$ while $\alpha$ stays at its initial value $\frac{1}{\sqrt{d_h}}$. Then $\alpha$ increases linearly to $10$ at the end of training, gradually aligning the surrogate with LEMA. The timing of both ramps is given per experiment in \Cref{app:recall-details,app:lm-details}. For the backward pass, however, we cap $\alpha$ at $2$ to retain gradient flow, so forward and backward pass differ again once $\alpha$ exceeds the cap. We use LEMA with head dimension $d_h=64$; smaller head dimensions are discussed in \Cref{app:head-sizes}.

\subsection{Associative recall}\label{sec:recall}
We train LEMA transformers and two baselines used throughout the paper, softmax transformers with rotary positional encodings (RoPE)~\citep{su2024roformer} and GDN, on a synthetic associative recall task. Similar tasks have been used previously to contrast the memory capacity of transformers and fixed-state models~\citep{arora2023zoology, arora2024based, jelassi2024repeat, okpekpe2025revisiting}.

\paragraph{Task and models.}
We use a variation of the multi-query associative recall task introduced in \citet{arora2023zoology}. Each sample is of the form
$(a_1,b_1,a_2,b_2,\dots,a_n,b_n,\mathrm{SEP},a_{\pi(1)},\dots,a_{\pi(n)})$
where $a_1,\dots,a_n$ are sampled without repeats from a vocabulary of size $4096$, $b_1,\dots,b_n$ are sampled uniformly from a disjoint vocabulary of the same size, $\pi$ is a random permutation of $\{1,\dots,n\}$ and $\mathrm{SEP}$ is a separator token. Hence, the total vocabulary size is $2\cdot 4096+1$ and a sequence with $n$ associations has length $3n+1$. After each $a_{\pi(i)}$ following the separator, the model must predict $b_{\pi(i)}$. Consistent with prior work, we train tiny models: each model has $2$ layers, model dimension $d=64$ and a single head of dimension $d_h=64$.

\begin{figure}[t]
\centering
\includegraphics[width=\linewidth]{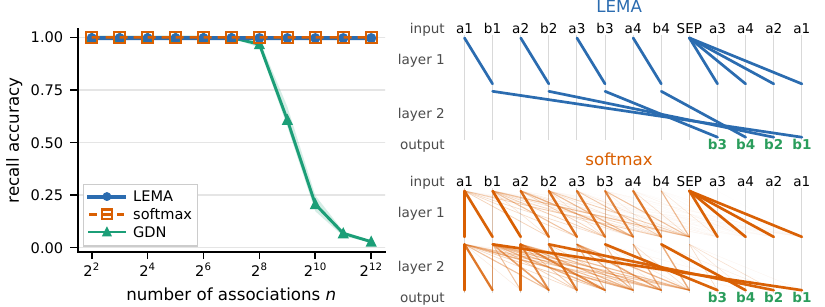}
\caption{\emph{Left:} mean recall accuracy at $n$ associations over three seeds, with shaded minimum-to-maximum ranges. LEMA is trained only at $n=8$, the others on a curriculum. \emph{Right:} attention weights of a LEMA transformer and a softmax transformer on one $n=4$ sample input.}
\label{fig:recall}
\end{figure}

\paragraph{Softmax transformers and GDN.}
Trained directly at larger $n$, softmax transformers do not discover a solution in our experiments, so we train them and GDN on a curriculum of increasing $n$, at each stage using the best checkpoint for evaluation at that $n$ and to initialize the next stage. \Cref{fig:recall} shows the mean accuracy over three seeds, with individual runs in \Cref{app:recall-details}. Solving the task requires holding all $n$ pairs in the state once they are read. Softmax transformers do so with their linearly growing state, while GDN's fixed-size state holds a limited number of associations and degrades beyond it.

\paragraph{LEMA.}
LEMA transformers can grow their state like softmax transformers if they assign distinct key codes to tokens, and the task tests in a controlled setting whether our training method realizes this ability. On this task, we find hardening sensitive to high learning rates and therefore lower the learning rate during hardening (\Cref{app:recall-details}). Training then succeeds and we can avoid combining a curriculum with hardening, as training only at $n=8$ extrapolates almost perfectly to $n=4096$ (\Cref{fig:recall}). The underlying stick-breaking attention already extrapolates to a few hundred pairs when trained at $n=8$. Zeroing small stick-breaking gates at evaluation largely restores extrapolation to $n=4096$, showing that leakage through these gates limits recall (\Cref{app:recall-sb}). LEMA's discrete rule eliminates this leakage.

\paragraph{Mechanistic interpretability.}
As a LEMA head allows each position to attend to at most one prior position, the value routing between positions is explicit. The attention weights in \Cref{fig:recall} (right) reveal an induction-head circuit, a mechanism hypothesized to underlie much of in-context learning~\citep{olsson2022induction}. For each presented pair $a_i b_i$, the first layer's head copies $a_i$ into the representation at $b_i$. When $a_i$ reappears after $\mathrm{SEP}$, the second layer's head can then attend to $b_i$ to retrieve it. The softmax transformer implements the same mechanism but also puts attention mass on other tokens, which makes the value routing harder to trace (\Cref{app:recall-mechanism}). The mechanism also predicts behavior not seen in training: when the same $a_i$ is paired with different following tokens, LEMA transformers predict the most recently paired token, while softmax transformers and GDN spread their predictions over the alternatives (\Cref{tab:repeated-keys}). We leave it to future work to investigate whether LEMA offers interpretability benefits on more complex tasks including natural language modeling.

\subsection{Language modeling}\label{sec:lm}
Next, we train LEMA transformers of $29$ to $834$ million parameters on the FineWeb-Edu dataset of high-quality text documents~\citep{penedo2024fineweb} and compare them to softmax transformers and GDN with the same model dimensions and depths. All models are trained on around $20$ tokens per parameter~\citep{hoffmann2022training} at context length $2048$. Details can be found in \Cref{app:lm-details}, ablations and further analyses in \Cref{app:lm-ablations,app:lm-analysis}. The cross-entropy losses of each model on the validation set are shown in \Cref{fig:lm} (left) with the precise numbers in \Cref{tab:lm-ce}. LEMA transformers stay clearly behind softmax transformers, matching those with $55$ to $57\%$ of their parameters from $77$M on.

\begin{figure}[t]
\centering
\includegraphics[width=\linewidth]{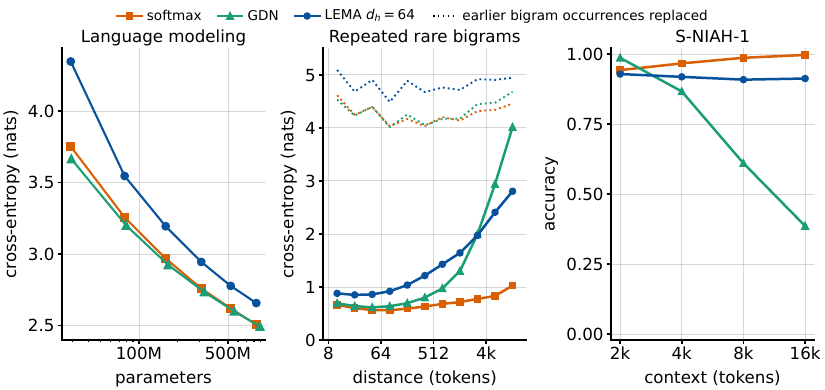}
\caption{Results for language models trained on FineWeb-Edu. \emph{Left:} validation loss against parameter count. \emph{Middle:} loss on the second token of repeated bigrams occurring rarely in training against the distance to the earlier occurrence of the bigram for the largest models, softmax transformers and GDN after context extension to $16$k and LEMA without. \emph{Right:} accuracy on the single-needle task S-NIAH-1 of RULER against the context length for the same models. }
\label{fig:lm}
\end{figure}

\paragraph{Long-range recall.}
Production-scale models combine fixed-state layers like GDN with attention layers~\citep{minimax2025minimax01, kimi2025linear}, as the poor recall of pure fixed-state models isolated in the synthetic recall task also shows in practice despite their competitive loss~\citep{waleffe2024mamba}. To be a viable architecture, LEMA transformers need better long-range recall than fixed-state models. We assess this with two proxies for long-range recall in which softmax transformers outperform fixed-state architectures. In order to measure recall over longer distances, we extend the context length of softmax transformers and GDN by retraining for $1$B tokens at context length $16\,384$, which is crucial for softmax transformers and clearly improves GDN. Context extension worsened LEMA's recall, so we use its original checkpoints. See \Cref{app:lm-extension} for details.

\paragraph{Repeated rare bigrams.}
A simple measure of recall that needs no additional data is the loss on tokens that can be copied from context~\citep{arora2023zoology}. In particular, we average the loss of each model over the second token $y$ of every repeated occurrence of a bigram $xy$ in a document of the validation set, i.e.\ in sequences of the form $(\dots, x, y, \dots, x, y, \dots)$, where $x$ does not reappear with another continuation in between and the bigram $xy$ occurs at most $100$ times per $10$B training tokens.
This tests a simple form of recall: copying the latest observed continuation of $x$, as in the induction-head circuit of \Cref{sec:recall}. These tokens are then mostly parts of rare phrases like names or technical terms. \Cref{fig:lm-recall-examples} shows examples. The measured loss resolved by the distance between the two occurrences of the bigram is shown in \Cref{fig:lm} (middle), together with each model's baseline (dotted), its loss after replacing earlier occurrences of the bigram by random tokens.
The softmax transformer excels on this task even at large distances, while GDN declines sharply with distance. LEMA transformers are behind softmax transformers and GDN at short distances but degrade less with distance than GDN: in the farthest bucket their loss is still $2.1$ nats below their baseline, while GDN is only $0.7$ nats below its baseline and thus makes less use of earlier occurrences to improve its prediction. With intervening competing continuations, LEMA's long-range advantage over GDN mainly holds relative to each model's baseline. Details on the task and further numbers, including for the smaller model sizes where results are mostly similar, can be found in \Cref{app:lm-recall}.

\paragraph{Single-needle retrieval.}
We consider the easiest needle-in-a-haystack task from RULER, S-NIAH-1, which places a seven-digit number in a repeated filler sentence and asks for it at the end. The accuracy of generating the number, averaged over random placements of it between the filler sentences, is shown in \Cref{fig:lm} (right) for context lengths up to $16$k. The exact protocol and further numbers are given in \Cref{app:sniah1}. Consistent with the literature, our softmax transformer performs almost perfectly up to its (extended) context length. Unlike the synthetic task, this task is solvable with a small state size in principle, but GDN fails to solve it at long contexts anyway. For our LEMA transformers, the dictionaries of an $L$-layer model stop changing after at most $L$ repetitions of the filler sentence, so further repetitions leave the generated answer unchanged (\Cref{app:sniah1}). Thus, successful retrieval persists indefinitely through this filler without further state growth.
On further RULER tasks (\Cref{app:ruler-more}), GDN and LEMA rarely generate the answer at this scale, GDN more often than LEMA, while the cross-entropy of the correct answer mostly has GDN ahead at short and LEMA at long contexts.

\paragraph{Training limitations.}
Lower language-modeling loss does not consistently improve recall: training the $77$M model on five times the tokens improves its loss yet worsens its bigram recall at every distance (\Cref{app:lm-tokens}). Further, around $10\%$ of the heads of the $834$M model never find a match (\Cref{app:lm-analysis}). Lastly, the gap to softmax transformers does not stem from stick-breaking attention: training the $77$M model with continuous queries and keys and regular stick-breaking attention gives slightly lower loss than softmax and comparable bigram recall (\Cref{app:lm-ablations}).

\section{Inference with LEMA transformers}\label{sec:inference}
\subsection{Autoregressive generation}
We benchmark the generation speed of GDN, LEMA and softmax transformers of different sizes across context lengths.
The time per generated token is shown in \Cref{fig:inference} (top) with some exact numbers in \Cref{app:inf-benchmark}. As we do not train models at the larger sizes, all models here are untrained, which is irrelevant for softmax transformers and GDN. For LEMA transformers, we replace queries and keys with random ones during generation, which is close to the worst case for the table's occupancy and cache locality (\Cref{app:inf-lema}). Grouped-query attention (GQA)~\citep{ainslie2023gqa}, common in modern language models but not used in \Cref{fig:inference}, reduces the kv-cache by the group size and flattens the softmax decode curves (\Cref{app:inf-state}). \Cref{app:inf-batched} uses $8$-fold GQA for softmax and shows LEMA throughput comparable to GDN at large batch sizes.

\begin{figure}[t]
\centering
\includegraphics[width=0.9\linewidth]{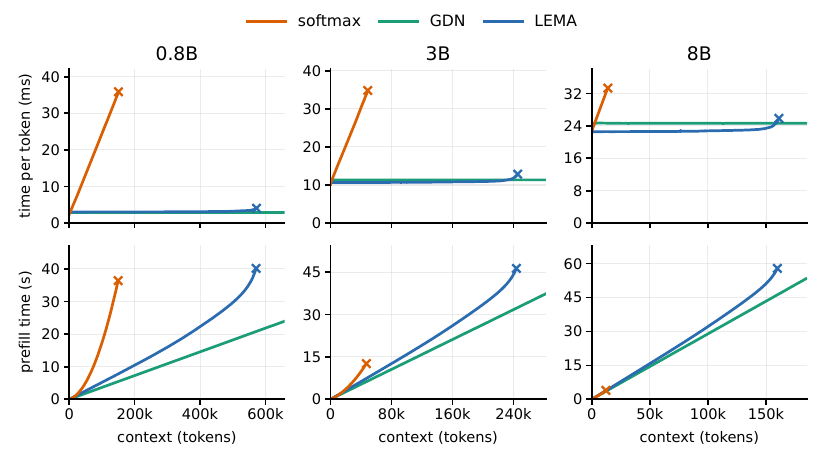}
\caption{Time per generated token (top) and prefill time (bottom) against context length at batch size $1$ on an RTX~3090. LEMA uses random queries and keys. Crosses mark the VRAM limit for softmax and $95\%$ occupancy of LEMA's $50$\,GB hash table in main memory.}
\label{fig:inference}
\end{figure}
\paragraph{Softmax transformer and GDN implementation.} We use vLLM~\citep{kwon2023efficient} as a performant baseline (\Cref{app:inf-softmax}). When generating a single sequence, generation speed is bottlenecked by the streaming of model weights and kv-cache from VRAM to the GPU cores~\citep{shazeer2019fast, pope2023efficiently}. The linearly growing kv-cache in VRAM then explains the observed linear growth of the time per generated token with context length, while the fixed-size state of GDN keeps it constant.

\paragraph{LEMA implementation.} We implement the kv-caches of all heads as a single open-addressing hash table with linear probing in main memory, in place of the trie-based dictionaries used for \Cref{thm:lemasim}. For each generated token, only the model's weights then need to be streamed from VRAM to the GPU cores. Queries, keys and values for each layer are moved to the CPU and used to query and update the kv-cache there as shown in the code in \Cref{fig:teaser}. As long as the hash table is not near its capacity limit, LEMA transformers generate at a constant speed comparable to that of GDN despite their growing state. For the trained $834$M model, the dictionaries hold on average $1.7$k entries per head after $16$k tokens and $18$k after $256$k tokens (\Cref{tab:state-sizes}). Trained models therefore reach the table's capacity at larger context lengths than the random codes used here (\Cref{app:inf-lema}).

\subsection{Prefill}
Prefill processes many prompt tokens together, amortizing the GPU's weight reads across them. For softmax transformers, the attention compute for processing a prompt is quadratic in its length, while LEMA transformers achieve linear-time prefill away from table capacity with alternating lookups and inserts, as does GDN with its chunked kernel. \Cref{fig:inference} (bottom) shows the measured prefill times, with exact numbers in \Cref{app:inf-benchmark}.

\section{Discussion and future work}\label{sec:discussion}
LEMA transformers simulate word-RAMs with chain of thought and are in turn simulated by word-RAMs at a cost per token independent of the context length, closely connecting the two computational models. In practice, generation speed is comparable to that of GDN, and the cache can live in main memory rather than VRAM.

The proposed training method successfully exploits the growing state of LEMA on synthetic recall. The general language models trained with it, while showing some promising results relative to GDN on our long-range recall proxies, remain far behind softmax transformers. We believe the training method contributes to this gap: hardening can be sensitive to the learning rate, some attention heads never find matches, and longer training can worsen recall. Improving the training method is therefore a priority for future work. Finally, training with the surrogate still requires quadratic compute. Unlocking extreme context lengths therefore calls for robust length generalization, as observed on synthetic recall and S-NIAH-1, or a subquadratic training method.

\subsubsection*{Acknowledgments}
The author is grateful for support by the German Research Foundation through Project 553088969 as well as the Cluster of Excellence ``Machine Learning---New Perspectives for Science'' (EXC 2064/1 number 390727645).

\bibliographystyle{style/iclr2027_conference}
\bibliography{main}

\appendix
\crefalias{section}{appsec}
\crefalias{subsection}{appsec}
\crefalias{subsubsection}{appsec}
\section{Additional material for \texorpdfstring{\Cref{sec:ramsimulation}}{Section 3}}\label{app:ram}

\subsection{Word-RAMs}\label{app:ram-model}
There are many definitions of word-RAMs, differing e.g.\ in the instruction set or how input and output are handled. Our definitions choose these aspects in ways that make \Cref{thm:ramsim} and \Cref{thm:lemasim} particularly clean, but changing the word-RAM definition to another common one would mainly change the results by moving factors of $w$ and changing where the input and output lengths enter the scalings.

For $k\in\N$, write $[k]:=\{0,\dots,k-1\}$ and identify $[2^w]$ with the set of $w$-bit words $\{0,1\}^w$.

\begin{definition}[Word-RAM]\label{def:wordram}
    A \emph{word-RAM} is a triple $M=(P,r,w)$ consisting of integers $w\ge2$ and $1\le r\le 2^w$, and a nonempty program $P=(I_0,\dots,I_{|P|-1})$ with $|P|\le 2^w$. Every instruction has one of the forms
    \[
      \begin{array}{lll}
        R_i \leftarrow c, &
        R_i \leftarrow R_j \mathbin{\odot} R_{j'}, &
        R_i \leftarrow \operatorname{msb}(R_j), \\[2pt]
        R_i \leftarrow \mem[R_j], &
        \mem[R_i] \leftarrow R_j, &
        \texttt{if }R_i\neq 0\texttt{ goto }R_j, \\[2pt]
        \texttt{halt}, &&
      \end{array}
    \]
    where $i,j,j'\in[r]$, $c\in[2^w]$, and
    \[
        \odot\ \in\ \{+,-,\times,\ll,\gg,\wedge,\vee,\mathbin{\mathrm{xor}},<,\le,=,\neq\}.
    \]
    Arithmetic operations are unsigned modulo $2^w$. Shift operations are defined as $a\ll b=(a\cdot2^b)\bmod 2^w$ and $a\gg b=\lfloor a/2^b\rfloor$. Boolean operations $\wedge, \vee, \mathbin{\mathrm{xor}}$ act bitwise, and comparisons return $0$ or $1$. Finally,
    \[
        \operatorname{msb}(a):=
        \begin{cases}
            \lfloor\log_2 a\rfloor,&a>0,\\
            0,&a=0.
        \end{cases}
    \]
\end{definition}

\begin{definition}[Execution]\label{def:ramexec}
    A configuration is a triple $\gamma=(\pc,\rho,\mu)$ with $\pc\in[|P|]\cup\{\bot\}$, register state $\rho:[r]\to[2^w]$, and memory state $\mu:[2^w]\to[2^w]$. It is halting exactly when $\pc=\bot$.

    For $g\in[|P|]$, let $\operatorname{next}(g)=g+1$ if $g+1<|P|$, and $\operatorname{next}(g)=\bot$ otherwise. From a non-halting configuration, first set $\pc'=\operatorname{next}(\pc)$, $\rho'=\rho$ and $\mu'=\mu$, and then apply the relevant update:
    \[
    \begin{array}{ll}
      R_i\leftarrow c & \rho'(i)=c,\\
      R_i\leftarrow R_j\mathbin{\odot}R_{j'}&
        \rho'(i)=\rho(j)\mathbin{\odot}\rho(j'),\\
      R_i\leftarrow\operatorname{msb}(R_j)&
        \rho'(i)=\operatorname{msb}(\rho(j)),\\
      R_i\leftarrow\mem[R_j]&\rho'(i)=\mu(\rho(j)),\\
      \mem[R_i]\leftarrow R_j&\mu'(\rho(i))=\rho(j),\\
      \texttt{if }R_i\neq0\texttt{ goto }R_j&
        \pc'=\rho(j)\text{ if }\rho(i)\neq0\text{ and }\rho(j)<|P|,\\
      &\pc'=\bot\text{ if }\rho(i)\neq0\text{ and }\rho(j)\ge|P|,\\
      \texttt{halt}&\pc'=\bot.
    \end{array}
    \]
    An untaken conditional jump keeps the default value $\operatorname{next}(\pc)$.
\end{definition}

\begin{definition}[Computation, time and space]\label{def:ramcomp}
    Let $x=(x_1,\dots,x_n)\in[2^w]^n$ with $n<2^w$. The initial configuration is
    \[
        \gamma_0(x)=(0,\rho_0,\mu_0),\qquad
        \rho_0\equiv0,\qquad
        \mu_0=(n,x_1,\dots,x_n,0,0,\dots).
    \]
    If $t$ is the first time at which the induced execution reaches a halting configuration, then $M$ \emph{halts in $t$ steps}. Its output is $y=(\mu_t(1),\dots,\mu_t(m))$, where $m=\mu_t(0)$, and
    \begin{align*}
        t_M(x)&:=t,\\
        s_M(x)&:=r+\left|\left\{a\in[2^w]:
        \begin{array}{l}
            a\le\max(n,m),\text{ or cell }a\text{ is accessed}\\[-1pt]
            \text{during the execution}
        \end{array}\right\}\right|,
    \end{align*}
    where an access is either a load from or a store to the cell.
\end{definition}

\subsection{LEMA transformers}\label{app:lema-model}
In order to simulate LEMA transformers with word-RAMs, all computational intermediates are rounded to finite precision and we hence define floating-point formats in a standard way. Furthermore, addition with rounding is not associative anymore and hence the order of all operations needs to be fixed in order for a LEMA transformer's forward pass to be well-defined. 

\begin{definition}[Floating-point formats]\label{def:float}
    Consider integer \emph{mantissa} and \emph{exponent} precisions $p_m\ge1$ and $p_e\ge2$, and let $e_{\max}=2^{p_e-1}-1$. The corresponding floating-point format is
    \begin{align*}
      \float(p_m,p_e):={}&\{0\}\\
      &{}\cup\left\{(-1)^s f\,2^{1-e_{\max}-p_m}:s\in\{0,1\},\ 1\le f<2^{p_m}\right\}\\
      &{}\cup\left\{(-1)^s(1+f2^{-p_m})2^{E-e_{\max}}:
          s\in\{0,1\},\ 1\le E\le2^{p_e}-2,\ 0\le f<2^{p_m}\right\}.
    \end{align*}
    Here $f,E\in\mathbb{Z}$. The second line consists of the subnormal values and the third of the normal values. The format's precision is $p=1+p_e+p_m$ bits. 

    Let $\Omega=(2-2^{-p_m-1})2^{e_{\max}}$. For a real number $z$ with $|z|<\Omega$, let $\rd(z)$ be a nearest element of $\float$, with ties broken toward the value whose least significant stored fraction bit is zero, where $0$ is stored with all fields zero. For $|z|\ge\Omega$ we say that rounding $z$ \emph{overflows} and leave $\rd(z)$ undefined. On its domain, $\rd$ coincides with IEEE round-to-nearest ties-to-even, including gradual underflow, since $\Omega$ is the midpoint between the largest element of $\float$ and $2^{e_{\max}+1}$ and IEEE rounding hence returns a finite value exactly when $|z|<\Omega$. We write
    \[
        a\oplus b:=\rd(a+b),\qquad a\otimes b:=\rd(ab).
    \]
    A floating-point computation is \emph{finite} if none of its operations overflows.

    Every scalar sum required in the matrix-matrix and matrix-vector multiplications below is evaluated from left to right, starting at $0$, and every product is rounded before it is added. Coordinates in a dot product are processed in increasing order, and attention heads are accumulated in increasing head order. Bias and residual additions are performed in the order in which they are displayed.
\end{definition}

For the simulations, we use ReLU MLPs and omit normalization and positional encodings, as specified in \Cref{sec:ramsimulation}. The transformers constructed in \Cref{thm:ramsim} have residual states in $\{-1,1\}^d$ at every sublayer boundary (\Cref{app:construction}), so normalization by their root mean square would be the identity in exact arithmetic. The architecture used in the experiments is described in \Cref{app:model-details}. Throughout, $\sgn(u)=1$ for $u\ge0$ and $\sgn(u)=-1$ otherwise, applied coordinatewise.

\begin{definition}[LEMA transformer]\label{def:lematransformer}
    A LEMA transformer $T$ consists of a finite vocabulary $\voc$ containing distinct tokens $\texttt{<out>}$ and $\texttt{<eos>}$, positive integers $L,H,d,d_h,\dff$, a format $\float$ as in \Cref{def:float}, and parameters
    \[
      \begin{array}{ll}
        \operatorname{emb},\operatorname{unemb}\in\float^{|\voc|\times d},&\\
        W_Q^{\ell,h},W_K^{\ell,h},W_V^{\ell,h}\in\float^{d_h\times d},&
        W_O^{\ell,h}\in\float^{d\times d_h},\\
        W_1^\ell\in\float^{\dff\times d},\ b^\ell\in\float^{\dff},&
        W_2^\ell\in\float^{d\times\dff}
      \end{array}
    \]
    for $\ell\in\{1,\dots,L\}$ and $h\in\{1,\dots,H\}$. The number $N$ of \emph{parameters} is the total number of entries in these matrices and vectors.
\end{definition}

\begin{definition}[Forward pass]\label{def:lemaforward}
    Let $\tau=(\tau_1,\dots,\tau_K)\in\voc^K$ with $K\ge1$, and set $x_i^{(0)}=\operatorname{emb}_{\tau_i}$. For $\ell=1,\dots,L$, $h=1,\dots,H$ and $i=1,\dots,K$, compute
    \begin{align*}
      q_i^{\ell,h}&=\sgn(W_Q^{\ell,h}x_i^{(\ell-1)}),&
      k_i^{\ell,h}&=\sgn(W_K^{\ell,h}x_i^{(\ell-1)}),&
      v_i^{\ell,h}&=W_V^{\ell,h}x_i^{(\ell-1)},
    \end{align*}
    and
    \[
      \bigl(o_i^{\ell,h}\bigr)_{i=1}^K=\lemaop\bigl((q_i^{\ell,h},k_i^{\ell,h},v_i^{\ell,h})_{i=1}^K\bigr)
    \]
    as in \Cref{def:lema}, followed by
    \begin{align*}
      x_i^{(\ell-\frac12)}
        &=x_i^{(\ell-1)}+\sum_{h=1}^H W_O^{\ell,h}o_i^{\ell,h},\\
      z_i^\ell&=\relu(W_1^\ell x_i^{(\ell-\frac12)}+b^\ell),\\
      x_i^\ell&=x_i^{(\ell-\frac12)}+W_2^\ell z_i^\ell.
    \end{align*}
    Here $\relu(u)=\max(u,0)$ coordinatewise. All scalar operations and sums use the order fixed in \Cref{def:float}. The forward pass is defined only if it is finite. Its prediction is
    \[
        T(\tau):=\argmax_{\sigma\in\voc}
        \left\langle\operatorname{unemb}_\sigma,x_K^{(L)}\right\rangle,
    \]
    with ties broken by a fixed total order on $\voc$. This order also identifies $\voc$ with $[|\voc|]$ when tokens are supplied to a word-RAM.
\end{definition}

\begin{definition}[Generation, time and space]\label{def:lemagen}
    Let $\voc_0=\voc\setminus\{\texttt{<out>},\texttt{<eos>}\}$ and let the prompt $\tau_{1:n}\in\voc^n$ be nonempty. Autoregressive generation appends
    \[
        \tau_{k+1}=T(\tau_{1:k}),\qquad k\ge n.
    \]
    We say that $T$ \emph{generates} $y\in\voc_0^m$ from $\tau_{1:n}$ in $t$ steps if $t$ is minimal with $\tau_{n+t}=\texttt{<eos>}$ and, for some $z\in\voc_0^*$,
    \[
        (\tau_{n+1},\dots,\tau_{n+t})
        =(z,\texttt{<out>},y,\texttt{<eos>}).
    \]
    This factorization is unique because neither $z$ nor $y$ contains \texttt{<out>} or \texttt{<eos>}. We set $t_T(\tau):=t$ and
    \[
      s_T(\tau):=\sum_{\ell=1}^L\sum_{h=1}^H
      \left|\{k_i^{\ell,h}:1\le i\le n+t-1\}\right|.
    \]
    The final \texttt{<eos>} token is generated but not processed and therefore contributes no key.
\end{definition}

Finally, we need to define the encoding of word-RAM inputs and outputs into transformer tokens, which is used in \Cref{thm:ramsim} but not specified there.
\begin{definition}[Encoding]\label{def:enc}
    Write $\bits_w(a)\in\{\texttt{0},\texttt{1}\}^w$ for the binary representation of $a\in[2^w]$, most significant bit first, and let
    \[
        C_j(v):=(\texttt{mem},\bits_w(j),\texttt{:},\bits_w(v))
    \]
    be the block encoding memory cell $j$ with content $v$. A word sequence $a=(a_1,\dots,a_k)\in[2^w]^k$ with $k<2^w$ is encoded as its length followed by its words, one block per memory cell and separated by \texttt{\#} tokens,
    \[
        \enc(a):=C_0(k)\ \texttt{\#}\ C_1(a_1)\ \texttt{\#}\ \cdots\ \texttt{\#}\ C_k(a_k).
    \]
    Note that prepending the length of input $x$ and output $y$ is precisely how word-RAMs in \Cref{def:ramcomp} handle inputs and outputs. The tokens of $\enc$ belong to the constant vocabulary
    \[
      \voc=\{\texttt{0},\texttt{1},\texttt{\#},\texttt{:},
        \texttt{mem},\texttt{reg},\texttt{pc},
        \texttt{<out>},\texttt{<eos>},\texttt{step}\}
    \]
    of the transformer in \Cref{thm:ramsim}.
\end{definition}

\subsection{Proof of \texorpdfstring{\Cref{thm:ramsim}}{Theorem 1}}\label{app:ramsim-proof}
In order to prove \Cref{thm:ramsim}, we start by defining the token sequence that the LEMA transformer simulating a given word-RAM will produce for each input and show that it has the right length. The theorem is then restated, asserting the existence of a LEMA transformer with precisely specified dimensions that predicts this token sequence (\Cref{prop:predictor}). The proof consists of constructing this LEMA transformer, which is done in \Cref{app:construction}.

\subsubsection{Token sequence and exact statement}\label{app:transcript}
Throughout, fix a word-RAM $M=(P,r,w)$ and an input $x=(x_1,\dots,x_n)$ on which $M$ halts, with execution $\gamma_0,\dots,\gamma_t$ and output $y$ of length $m$. Write $\pc_k$ for the program counter of $\gamma_k$, so that $\pc_0=0$ and $\pc_k=\bot$ only for $k=t$.

The transformer keeps the state of $M$ in the token sequence as a log of all writes. A write of the value $v$ to the register or memory cell $a$ is spelled out as the \emph{record}
\[
    D\ \bits_w(a)\ \texttt{:}\ \bits_w(v)\ \texttt{\#}
    \qquad\text{with } D\in\{\texttt{reg},\texttt{mem}\},
\]
which we read as ``$D$-address $a$ now holds $v$''. The prompt $\enc(x)$ followed by \texttt{\#} is a sequence of such records, one for every memory cell $0,\dots,n$ of the initial memory $\mu_0$. Every step of $M$ is then represented by the token \texttt{step}, followed by the record of the write the step performs, if it performs one, and by \texttt{pc} and the bits of the new program counter. At any point, the current content of a register or memory cell is the value of the latest record with that address, or $0$ if there is none, since $M$ starts from the all-zero state except for the input. Finding the latest record with a given address is exactly what a LEMA head does when the address is its key.

\begin{definition}[Transcript]\label{def:transcript}
For $k\in\{1,\dots,t\}$ let $W_k$ be the empty sequence if step $k$ does not write, and otherwise
\[
    W_k=(D_k,\bits_w(a_k),\texttt{:},\bits_w(v_k),\texttt{\#})
\]
if step $k$ writes the value $v_k$ to the address $a_k$ of type $D_k\in\{\texttt{reg},\texttt{mem}\}$. The \emph{transcript} of $M$ on $x$ is the token sequence
\begin{equation}\label{eq:ram-transcript}
\begin{split}
 \tau(x)={}&\enc(x)\ \texttt{\#}\ \texttt{pc}\,\bits_w(0)\\
 &\bigl(\texttt{step}\ W_1\ \texttt{pc}\,\bits_w(\pc_1)\bigr)\cdots
   \bigl(\texttt{step}\ W_{t-1}\ \texttt{pc}\,\bits_w(\pc_{t-1})\bigr)\\
 &\texttt{step}\ W_t\ \texttt{<out>}\ \enc(y)\ \texttt{<eos>}
\end{split}
\end{equation}
over the vocabulary of \Cref{def:enc}.
\end{definition}

\begin{figure}
\centering\small
\begin{tabular}{@{}ll@{}}
\toprule
part & tokens\\
\midrule
$\enc(x)$ & \texttt{mem 0 0 0 : 0 0 1 \# mem 0 0 1 : 0 1 0}\\
initial program counter & \texttt{\# pc 0 0 0}\\
step 1, $R_1\leftarrow1$ & \texttt{step reg 0 0 1 : 0 0 1 \# pc 0 0 1}\\
step 2, $R_0\leftarrow\mem[R_1]$ & \texttt{step reg 0 0 0 : 0 1 0 \# pc 0 1 0}\\
step 3, $R_0\leftarrow R_0+R_0$ & \texttt{step reg 0 0 0 : 1 0 0 \# pc 0 1 1}\\
step 4, $\mem[R_1]\leftarrow R_0$ & \texttt{step mem 0 0 1 : 1 0 0 \# pc 1 0 0}\\
step 5, \texttt{halt} & \texttt{step <out>}\\
$\enc(y)$ & \texttt{mem 0 0 0 : 0 0 1 \# mem 0 0 1 : 1 0 0 <eos>}\\
\bottomrule
\end{tabular}
\caption{The transcript for $w=3$, the program $P=(R_1\leftarrow1,\ R_0\leftarrow\mem[R_1],\ R_0\leftarrow R_0+R_0,\ \mem[R_1]\leftarrow R_0,\ \texttt{halt})$ and the input $x=(2)$, which produces the output $y=(4)$. Consecutive tokens are separated by one space, so every word consists of three bit tokens, and the rows are to be read as one sequence. Everything after $\enc(x)$ is generated.}
\label{fig:transcript}
\end{figure}

\Cref{fig:transcript} shows an example. The initial program counter block has $w+2$ tokens, every step except the last contributes at most $3w+5$ tokens, the last step contributes at most $2w+6$ tokens plus $\enc(y)$, and $|\enc(y)|=(m+1)(2w+3)-1$. Hence
\begin{equation}\label{eq:transcript-length}
    |\tau(x)|-|\enc(x)|\le t(3w+5)+(m+1)(2w+3)+2.
\end{equation}

The transcript is designed so that the current state of $M$ can be read off with latest-match lookups. Before \texttt{<out>}, the records of the transcript are the initial memory cells $0,\dots,n$ and then the writes of the steps in the order in which they are executed, and every record is complete before the next \texttt{step} token. This gives the following lemma, which is the heart of the construction.

\begin{lemma}[Latest write]\label{lem:latestwrite}
Let $i$ be the position of the \texttt{step} token of step $k$, or any position after \texttt{<out>}, in which case put $k=t+1$. Fix $\theta\in\{\texttt{reg},\texttt{mem}\}$ and an address $a$, and consider the records of $\tau(x)$ that lie before $i$ and before \texttt{<out>}, and have type $\theta$ and address $a$. If there is such a record, the value of the latest one is the content of $\theta$-address $a$ in $\gamma_{k-1}$. If there is none, this content is $0$.
\end{lemma}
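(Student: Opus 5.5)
We will argue by induction on $k$, tracking the sequence of records that precede position $i$. The first step is to describe this sequence exactly: by \Cref{def:transcript}, the records of $\tau(x)$ before \texttt{<out>} consist of the $n+1$ records of $\enc(x)$, which encode $\mu_0(0),\dots,\mu_0(n)$, followed by the blocks $W_1,\dots,W_t$ in this order. Each $W_j$ lies strictly between the \texttt{step} token of step $j$ and the \texttt{step} token of step $j+1$ (or \texttt{<out>} for $j=t$). Consequently, the records before the \texttt{step} token of step $k$ are exactly the prompt records together with the nonempty blocks among $W_1,\dots,W_{k-1}$. For a position after \texttt{<out>}, the records before both $i$ and \texttt{<out>} are the prompt records together with all of $W_1,\dots,W_t$, which matches the case $k=t+1$. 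We also need every record to be completely contained in this range, so that no partial record is ever counted. This follows because each record ends with \texttt{\#} before the next \texttt{step} or \texttt{<out>}.

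The main claim then reduces to the following invariant. For $0\le k'\le t$, the content of $\theta$-address $a$ in $\gamma_{k'}$ equals the value of the latest record of type $\theta$ and address $a$ among the prompt records and $W_1,\dots,W_{k'}$, or $0$ if no such record exists. Taking $k'=k-1$ gives the lemma.

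For the base case $k'=0$, registers are all zero under $\rho_0\equiv0$, and no prompt record has type \texttt{reg}. For memory, $\mu_0(a)$ equals $n$ for $a=0$, equals $x_a$ for $1\le a\le n$, and equals $0$ otherwise. The prompt contains exactly one record $C_a(\mu_0(a))$ for each $a\le n$ and none for larger $a$, so the invariant holds.

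For the inductive step from $k'-1$ to $k'$, we use \Cref{def:ramexec}. Step $k'$ changes at most one register or memory cell, namely the one it writes, if any. If it writes value $v_{k'}$ to $\theta$-address $a_{k'}$, then $W_{k'}$ is precisely that record, so it becomes the latest record for $(\theta,a)=(D_{k'},a_{k'})$ and carries the new content $v_{k'}$. Every other address keeps both its content and its latest record. If step $k'$ does not write (a jump or a \texttt{halt}), then $W_{k'}$ is empty and neither side changes.

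The main obstacle is essentially bookkeeping: we must make sure the transcript's notion of a ``write'' matches the semantics exactly. In particular, instructions such as $R_i\leftarrow c$ whose new value equals the old one still produce a record. This is harmless, since the record then carries the correct current value.
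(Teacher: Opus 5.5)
Your proof is correct and follows the paper's argument. Both proofs identify the records before $i$ and before \texttt{<out>} as the initial-memory records followed by the writes of steps $1,\dots,k-1$, and then use that a sequence of writes leaves at each address its last written value, or $0$. The only difference is presentation: you make that last fact explicit by induction on $k'$ with the initial memory as the base case, whereas the paper treats the prompt records as writes applied to the all-zero state.
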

\begin{proof}
The records before $i$ and before \texttt{<out>} are, in this order, $(\texttt{mem},j,\mu_0(j))$ for $j=0,\dots,n$ and then the writes of steps $1,\dots,k-1$. By \Cref{def:ramexec,def:ramcomp}, $\gamma_{k-1}$ arises from the all-zero register and memory state by exactly these writes in this order, since the cells $0,\dots,n$ are the only nonzero cells of $\mu_0$. A sequence of writes leaves at every address the value of the last write to it, or $0$ if there is none.
\end{proof}

We now state precisely what will be constructed, with explicit sizes.

\begin{proposition}\label{prop:predictor}
There is a LEMA transformer $T$ over the vocabulary of \Cref{def:enc} with
\[
L=2w+6,\qquad H=3,\qquad d=9w+62,\qquad d_h=w+5,\qquad \dff\le|P|+21w+23
\]
and integer parameters of magnitude at most $\max(w+1,5)$, such that, for every input $x$ on which $M$ halts, the following holds with $\tau=\tau(x)$ and the forward pass of $T$ evaluated in exact arithmetic:
\begin{enumerate}
\item[(i)] $T(\tau_{1:i})=\tau_{i+1}$ for all $|\enc(x)|\le i<|\tau|$, i.e.\ $T$ continues $\enc(x)$ to $\tau$,
\item[(ii)] every intermediate value, including every partial sum in the evaluation order of \Cref{def:float}, is an integer of magnitude at most $\max(2w+3,11)$, and
\item[(iii)] the keys of $T$ on $\tau$ satisfy $s_T\le3s_M(x)+6w+25$.
\end{enumerate}
\end{proposition}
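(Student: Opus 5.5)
The plan is an explicit construction whose forward pass, on every prefix of $\tau(x)$ that ends at or after $\enc(x)$, computes the next transcript token. Every value in the residual stream will be a $\pm1$ code, so that (ii) reduces to bounding individual sums of integer weights against $\pm1$ inputs.

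\textbf{Positional bookkeeping.} Without positional encodings, position information has to be built from content. The first layers will use LEMA heads keyed on token type to find the latest \texttt{step}, \texttt{pc}, \texttt{:}, \texttt{mem}/\texttt{reg} or \texttt{\#} token. The offset within the current record (which bit of which word is being emitted) is a counter in $[w]$. I will propagate it layer by layer: each token attends to its predecessor, using a key equal to a one-hot/binary encoding of its own index inside the block and a query equal to the encoding of the index minus one, and a ReLU MLP then increments the counter. This also explains the extra $O(w)$ depth. Concretely, I would store $\bits(\text{offset})$ as $\pm1$ coordinates and let each token look up the preceding token via the latest match on a "previous marker" key. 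Repeated over $2w$ layers, this gathers the bits of the current address and value words onto the current position: the bit at offset $j$ is copied forward by attending to the latest token carrying key $(\text{field},j)$. With this in place, every position holds the full current $\pc$, the address/value words being written, and its own offset.

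\textbf{Simulating one step.} At a \texttt{step} token and the following tokens, the transformer has to know which instruction $I_{\pc}$ to execute. The MLP with $|P|$ hidden units will decode $\pc$ via one unit per instruction, firing only on an exact bit match. The operands $R_j, R_{j'}$ and the memory content $\mem[R_j]$ are then fetched with LEMA heads whose key is $(\text{type},\bits_w(a))$ plus a couple of flag bits, set only at the \texttt{:} token that closes a record's address. That is $d_h=w+5$. By \Cref{lem:latestwrite}, the value the head returns is the current content, or $0$ when there is no match, and a bias shift turns that into the $\pm1$ encoding of $0$.

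Arithmetic is the subtle part. The result bits of $+,-,\times,\ll,\gg$, comparisons and msb cannot all be computed in one MLP of width $O(w)$. The plan avoids this by computing the output bit by bit while the bits are emitted. At the position emitting bit $b$ of $v_k$, the transformer already has the previously emitted bits (via lookups), both operands, and carry information. Addition and subtraction then need a carry, recomputed from the operands and the emitted prefix. Shifts select the operand bit at offset $b\pm R_{j'}$ by a lookup. Multiplication is the real obstacle: the $O(w)$ width and $O(\log w)$ precision allow only partial products of bounded size. I would therefore emit the LSB-first accumulation implicitly, with a constant number of heads per layer over $2w$ layers building the column sums of $\sum_i a_i b_{b-i}$ plus carry. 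Alternatively, I would fall back to encoding intermediate carries in extra chain-of-thought tokens; but the transcript is fixed, so the in-depth approach is the one to pursue. I expect this step, together with keeping all partial sums within $\max(2w+3,11)$, to be the main difficulty.

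\textbf{Output and keys.} After \texttt{halt}, the output phase emits $\enc(y)$ by iterating cell indices $0,\dots,m$, using the same memory lookup and an increment circuit. Finally, (iii) comes from counting keys: each of the three heads per layer uses keys that are either a structural constant (at most $O(w)$ distinct values over all layers) or an address of a register or memory cell actually accessed. This gives $3s_M(x)+6w+25$.
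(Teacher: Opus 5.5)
There is a genuine gap, and it is where you expected it: you give no arithmetic construction that meets the stated sizes, and your bit-at-emission scheme cannot work.

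Since $\bits_w$ writes the most significant bit first, the first result bit emitted is the top bit of $A\pm B$. That bit depends on the carry through all lower positions, yet at that moment the emitted prefix is still empty. In general, the higher bits already emitted, together with the operands, fix the carry \emph{out of} bit $s$ but not the carry into it. For $\times$ you only gesture at column sums $\sum_i a_ib_{s-i}$ plus carries. These are not $\pm1$ codes, and you say neither how the carry chain is resolved nor how it fits $\dff\le|P|+21w+23$, the bound in (ii), and $L=2w+6$ on top of your $2w$ gathering layers.

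The missing idea is to compute the full result once, in depth, at the \texttt{step} token where the operands were fetched. There, $w$ consecutive layers serve as stages, each handling one bit:
\begin{itemize}
\item for $\pm$, a ripple-carry full adder with a $\mathtt{carry}$ flag;
\item for comparisons, $\operatorname{msb}$ and the jump-range test, an MSB-first scan with flags $\mathtt{eq},\mathtt{lt}$;
\item for $\times$, one row of carry-save full adders on two words with invariant $\mathtt{X}+\mathtt{Y}=\lfloor a(b\bmod2^s)/2^s\rfloor$, while a third word $\mathtt{Q}$ collects the bits of $ab\bmod2^s$.
\end{itemize}
Every neuron then tests at most $\max(w+2,6)$ binary coordinates, which is exactly what yields (ii). The assembled record and next program counter are broadcast from the latest control position to every later token by a head with the flag $\mathtt{ctrl}$ in its key. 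An emitting token then only selects the bit named by its distance $\mathtt{dist}_j$ to its marker.

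Two further steps fail as written. First, your dictionary key sits at the \texttt{:} token, but a record's value is written after that token, so a match there cannot return the stored word. The entry belongs at the closing \texttt{\#}, whose window of $w$ preceding tokens holds the value and which copies type and address from the latest \texttt{:}.

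Second, (iii) cannot be settled by an $O(w)$ estimate. The $3(2w+6)=6w+18$ heads contribute at least one key each. So for a machine with few registers on a short input, where $s_M(x)$ is constant, the bound $3s_M(x)+6w+25$ leaves room for only constantly many extra keys outside the dictionary heads, for every $w$. Gathering heads whose keys encode an offset $j\in[w]$, repeated over $2w$ layers, produce $\Theta(w^2)$ distinct keys on every transcript, since $\enc(x)$ already contains full words. Keying on an index the counter has not yet computed is also circular. The construction in the paper instead uses constant-key heads with $q=k=\mathtt{one}$, which always match the immediately preceding position. Layers $1,\dots,w$ each contain one such head, and together they shift a window so that each token holds the $w$ preceding bits and its $\mathtt{dist}_j$ flags, with one key per head. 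The bound then follows from an exact head-by-head count.
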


\begin{proof}[Proof of \Cref{thm:ramsim} from \Cref{prop:predictor}]
Choose the format $\float(p_m,p_e)$ with $p_m=\lceil\log_2\max(2w+3,11)\rceil$ and $p_e=\lceil\log_2(p_m+2)\rceil+1$. Its normal numbers with exponents up to $p_m$ include every integer of magnitude below $2^{p_m+1}$, and $e_{\max}\ge p_m+1$, so by (ii) every parameter and every value of the forward pass on a prefix of $\tau$ is representable. Hence $\rd$ acts as the identity, no operation overflows, and the floating-point forward pass coincides with the exact one, so (i) holds for the floating-point transformer with the chosen format as well. The precision of this transformer is $p=1+p_m+p_e=\O(\log w)$, and its other sizes are the stated $\O$-bounds. By (i), greedy generation from the prompt $\enc(x)$ produces $\tau_{|\enc(x)|+1},\tau_{|\enc(x)|+2},\dots$ up to the final \texttt{<eos>}, which is the first \texttt{<eos>} of $\tau$. In particular, the generation is of the form $(z,\texttt{<out>},\enc(y),\texttt{<eos>})$ with $z$ and $\enc(y)$ containing neither \texttt{<out>} nor \texttt{<eos>}, so $T$ generates $\enc(y)$ from $\enc(x)$ in the sense of \Cref{def:lemagen}, in $|\tau|-|\enc(x)|=\O((t+|y|)w)$ steps by \eqref{eq:transcript-length}, and with $\O(s_M(x)+w)$ distinct keys by (iii).
\end{proof}

\subsubsection{The construction}\label{app:construction}
\noindent\textit{Proof of \Cref{prop:predictor}.}
\paragraph{Overview.}
Consider the \texttt{step} token of some step $k$ in \Cref{fig:transcript}. When the transformer processes this token, it has to decide what comes next: the first token of the record of step $k$, or \texttt{pc} if the step does not write, or \texttt{<out>} if the machine halts. For this it needs the current instruction, whose index $\pc_{k-1}$ is spelled out in the $w$ tokens right before it, the contents of the registers the instruction reads, and for a load the content of a memory cell. All of this is available in the transcript: by \Cref{lem:latestwrite}, the current content of any register or cell is the value in the latest earlier record with that address. The transformer therefore proceeds as in \Cref{tab:layers}. Layers $1$ to $w+1$ parse the transcript, so that afterwards every \texttt{\#} holds the record it closes and every \texttt{step} token its decoded instruction, layers $w+2$ and $w+3$ look up the operands, which is the latest exact match rule with the type and address of a record as key, and layers $w+4$ to $2w+3$ compute the operation one bit per layer, which layer $2w+4$ finalizes. Layer $2w+5$ assembles the write of the step, the next program counter and whether the machine halts, and layer $2w+6$ emits the next token, where every token after a \texttt{step} token copies the assembled result from it.
The output phase works in the same way, with the \texttt{\#} tokens between the output records in place of the \texttt{step} tokens: such a \texttt{\#} looks up the next memory cell and emits its record. We call the \texttt{step} tokens, the \texttt{mem} token directly after \texttt{<out>} and the \texttt{\#} tokens after \texttt{<out>} the \emph{control positions}, since everything that is emitted is decided at them. The constructed transformers lie in a subclass of all LEMA transformers using the residual stream, MLPs and attention heads only in specific limited ways as explained in the next three paragraphs.

\begin{table}[t]
\caption{The layers of the construction.}
\label{tab:layers}
\centering\small
\begin{tabular}{@{}llp{7.6cm}@{}}
\toprule
layers & module & what is known afterwards\\
\midrule
$1$ & structure & token kinds, latest marker, roles of the positions\\
$1,\dots,w$ & window & at every position: the bits of the $w$ tokens before it\\
$w+1$ & records and decoding & at every \texttt{\#} before \texttt{<out>}: type, address and value of its record. At every \texttt{step}: the instruction, the registers to read. At every output \texttt{\#}: the next cell\\
$w+2,w+3$ & reading the state & at every \texttt{step}: the operands, and the loaded cell for a load. At every output control position: the cell to emit\\
$w+4,\dots,2w+4$ & arithmetic & at every \texttt{step}: the result of the operation\\
$2w+5$ & assembly & at every control position: the record to emit, the next program counter, halting\\
$2w+6$ & emission & at every position: the next token\\
\bottomrule
\end{tabular}
\end{table}

\paragraph{The residual stream.}
Every coordinate of the residual stream holds $+1$ or $-1$ at every sublayer boundary. We call such a vector a \emph{binary state}, and we partition its coordinates into named \emph{fields}. These fields for our construction are listed in \Cref{tab:fields}. A field of length one is a \emph{flag} and holds a predicate, $+1$ for true. A field of length $w$ is a \emph{word} and holds a number $a\in[2^w]$ as $\bits_w(a)$, most significant bit first, with $-1$ for the bit $0$. A field is at its \emph{default} if all its coordinates are $-1$, so a word at its default holds $0$. We call the tokens \texttt{mem}, \texttt{reg}, \texttt{:} and \texttt{pc} \emph{markers}, since each is followed by a word of $w$ bit tokens. The embedding of a token sets the fields $\mathtt{one}$, its own coordinate of $\mathtt{tok}$, the flag $\mathtt{eq}$ and the coordinate $\mathtt{next}_{\texttt{0}}$ to $+1$, and $\mathtt{dist}_0$ to $+1$ if the token is a marker, while every other coordinate starts at $-1$. The unembedding of a token $\sigma$ is the indicator vector of the coordinate $\mathtt{next}_\sigma$, so as long as $\mathtt{next}$ is one-hot, the prediction is the token whose coordinate is $+1$.

\begin{table}[t]
\caption{The fields of the residual stream, $d=9w+62$ coordinates in total.}
\label{tab:fields}
\centering\small
\begin{tabular}{@{}p{4.6cm}lp{7.2cm}@{}}
\toprule
field & size & content \\
\midrule
$\mathtt{one}$ & $1$ & the constant $+1$\\
$\mathtt{tok}$ & $10$ & one-hot of the token, with coordinates $\mathtt{tok}_\sigma$\\
$\mathtt{dist}_0,\dots,\mathtt{dist}_w$ & $w+1$ & $\mathtt{dist}_j$: the token $j$ positions earlier is a marker\\
$\mathtt{near}$ & $4$ & one-hot of the latest marker before this position, with coordinates $\mathtt{near}_\sigma$, at its default if there is none\\
$\mathtt{prevout}$, $\mathtt{out}$ & $2$ & the previous token is \texttt{<out>}, and some earlier token is \texttt{<out>}\\
$\mathtt{exec}$, $\mathtt{commit}$, $\mathtt{outsep}$, $\mathtt{outstart}$, $\mathtt{ctrl}$ & $5$ & roles: the token is \texttt{step}, and see the structure module for the rest\\
$\mathtt{win}$ & $w$ & $\mathtt{win}_k$: the token $w+1-k$ positions earlier is \texttt{1}, so the word holds the bits of the $w$ preceding tokens\\
$\mathtt{addr}$, $\mathtt{wtype}$ & $w+1$ & at a \texttt{\#} before \texttt{<out>}: address and type of the record it closes, $\mathtt{wtype}$ for \texttt{reg}\\
$\mathtt{A}$, $\mathtt{B}$ & $2w$ & the operands\\
$\mathtt{X}$, $\mathtt{Y}$, $\mathtt{Z}$, $\mathtt{Q}$ & $4w$ & workspace: the result and then the value to emit, the address to emit, the next program counter, the product\\
$\mathtt{q1}$, $\mathtt{q2}$, $\mathtt{load}$, $\mathtt{store}$, $\mathtt{jnz}$, $\mathtt{cmp}$, $\mathtt{op}$ & $19$ & the decoded instruction: lookup guards, instruction kinds, one-hot of the operation\\
$\mathtt{carry}$, $\mathtt{eq}$, $\mathtt{lt}$, $\mathtt{iszero}$ & $4$ & arithmetic flags\\
$\mathtt{write}$, $\mathtt{wreg}$, $\mathtt{stop}$, $\mathtt{output}$, $\mathtt{last}$ & $5$ & the assembled step: it writes, it writes a register, the machine halts, output phase, last output cell\\
$\mathtt{next}$ & $10$ & one-hot of the predicted token, with coordinates $\mathtt{next}_\sigma$\\
\bottomrule
\end{tabular}
\end{table}

\paragraph{MLPs.}
Every neuron of the construction has incoming weights in $\{-1,0,1\}$, bias $1-k$ where $k$ is the number of nonzero incoming weights, and outgoing weights in $\{-2,0,2\}$. On a binary state the neuron outputs $1$ if the state agrees with the signs of its incoming weights on all $k$ coordinates and $0$ otherwise, since every disagreement lowers the pre-activation by $2$. We say that the neuron \emph{fires} on this condition, and write conditions as conjunctions of flags, negated flags and equations $\mathtt{X}=a$ for words $\mathtt{X}$. A firing neuron adds $\pm2$ to the coordinates it writes. Throughout, $+2$ is only added to coordinates that are $-1$ and $-2$ only to coordinates that are $+1$, and no two neurons of a layer that write the same coordinate fire together, so an MLP flips some coordinates and the state stays binary. To \emph{copy} a word into another word, potentially gated on additional coordinates having specific values, takes $2w$ neurons: for each bit position, one neuron adds $2$ to the target bit if the source bit is $1$ and the target bit is $-1$, and one subtracts $2$ if the source bit is $-1$ and the target bit is $1$. To \emph{clear} a word, i.e.\ set all its entries to $-1$, takes $w$ neurons, one per bit that subtracts $2$ when it is $+1$, with the same optional gating. To \emph{write} a constant into a word at its default takes one neuron that sets its $1$-bits.

\paragraph{Attention heads.}
Every head of the construction copies entries of the state from an earlier position, and its projections only select coordinates. The query and key of a position are lists of entries $\pm x_c$ of its state, given by rows $\pm e_c$ of $W_Q$ and $W_K$, so they are already $\pm1$ and the binarization does nothing. The value is a list of entries $x_c+1\in\{0,2\}$, given by rows $e_c+e_{\mathtt{one}}$ of $W_V$, and $W_O$ adds every value entry to one destination coordinate. We arrange that the destination coordinates are at their default $-1$ whenever the head can attend. So the head sets them to the copied entries of the latest earlier position whose key equals the current query, and leaves the state unchanged if there is none. The number of distinct keys of a head is the number of distinct values its key takes over the positions. We use three kinds of heads. If query and key both consist of $\mathtt{one}$, all keys agree and the head copies from the previous position. A flag $\mathtt{f}$ in the key against $\mathtt{one}$ in the query restricts the copy to the latest position with $\mathtt{f}$, since only keys with $\mathtt{f}=+1$ match. A flag $\mathtt{g}$ in the query against $\mathtt{one}$ in the key restricts the head to positions with $\mathtt{g}$, since a query with $\mathtt{g}=-1$ matches no key. A word in the query against a word in the key gives a dictionary lookup, which copies from the latest position whose key word equals the query word.

\Cref{def:lematransformer} fixes one head dimension and one number of heads for all layers, so shorter queries and keys are padded with $\mathtt{one}$, shorter values with zero rows, layers with fewer heads are completed by heads whose query and key consist of $\mathtt{one}$ and whose value is zero, which contribute one key each, and unused neurons have zero weights.

\paragraph{Structure (layer 1).}
The first layer sets flags describing each token's role and surroundings, for example distinguishing a bit token inside an address from one inside a value, and positions before and after \texttt{<out>}. Three heads act at every position: one copies $\mathtt{dist}_0$, $\mathtt{tok}_{\texttt{1}}$ and $\mathtt{tok}_{\texttt{<out>}}$ from the previous position into $\mathtt{dist}_1$, $\mathtt{win}_w$ and $\mathtt{prevout}$, one copies the four coordinates of $\mathtt{tok}$ for \texttt{mem}, \texttt{reg}, \texttt{:} and \texttt{pc} from the latest position with $\mathtt{dist}_0$ into $\mathtt{near}$, and one copies $\mathtt{one}$ from the latest position with $\mathtt{tok}_{\texttt{<out>}}$ into $\mathtt{out}$. Where there is no such position, the destinations keep their default, which is the correct value. Four neurons then set the roles:
\[
\begin{array}{ll}
 \mathtt{exec}:\ \mathtt{tok}_{\texttt{step}}, &
 \mathtt{commit}:\ \mathtt{tok}_{\texttt{\#}}\wedge\neg\mathtt{out},\\
 \mathtt{outsep}:\ \mathtt{tok}_{\texttt{\#}}\wedge\mathtt{out}, &
 \mathtt{outstart}:\ \mathtt{tok}_{\texttt{mem}}\wedge\mathtt{prevout},\\
 \mathtt{ctrl}:\ \mathtt{exec}\vee\mathtt{outsep}\vee\mathtt{outstart}, &
\end{array}
\]
where the neurons for $\mathtt{exec}$, $\mathtt{outsep}$ and $\mathtt{outstart}$ also set $\mathtt{ctrl}$. Their firing conditions are mutually exclusive, so this realizes the displayed disjunction without additional neurons. The execution flag $\mathtt{exec}$ is simply a copy of $\mathtt{tok}_{\texttt{step}}$; $\mathtt{commit}$ marks the \texttt{\#} tokens before \texttt{<out>}, which close the input and execution records, $\mathtt{outsep}$ those after it, $\mathtt{outstart}$ the \texttt{mem} after \texttt{<out>}, and $\mathtt{ctrl}$ the control positions.

\paragraph{Window (layers 1 to $w$).}
This module collects the preceding word at each \texttt{:}, \texttt{\#} and \texttt{step} token and records each bit token's distance from its marker. In layer $j$, for $j=1,\dots,w$, one head copies $\mathtt{dist}_{j-1}$ and $\mathtt{win}_{w+2-j}$ from the previous position into $\mathtt{dist}_j$ and $\mathtt{win}_{w+1-j}$, where $\mathtt{win}_{w+1}$ stands for $\mathtt{tok}_{\texttt{1}}$. For $j=1$ this is the first head of the structure module. By induction on $j$, after layer $j$ the flag $\mathtt{dist}_j$ of a position says whether the token $j$ positions earlier is a marker, and $\mathtt{win}_{w+1-j}$ whether it is \texttt{1}. Each of these coordinates is written in exactly one layer, so it is at its default before. Hence after layer $w$ every position holds in $\mathtt{win}$ the bits of the $w$ tokens before it, most significant first, where every token other than \texttt{1} counts as $0$.

On a transcript this means the following. Every marker is followed by exactly $w$ bit tokens, its word. So a bit token at distance $j$ from its marker has $\mathtt{dist}_j$ set and no other $\mathtt{dist}_{j'}$ with $1\le j'\le w$, and $\mathtt{near}$ names its marker, while every other token has none of $\mathtt{dist}_1,\dots,\mathtt{dist}_w$ set. Moreover, a \texttt{:} token holds in $\mathtt{win}$ the address of its record and in $\mathtt{near}$ the marker of its record, a \texttt{\#} holds in $\mathtt{win}$ the value of its record and has $\mathtt{near}=\texttt{:}$, and a \texttt{step} token holds in $\mathtt{win}$ the program counter of the block before it and has $\mathtt{near}=\texttt{pc}$. Later modules use $\mathtt{win}$ only at these three kinds of tokens.

\paragraph{Records and decoding (layer $w+1$).}
Two heads collect the records. At positions with $\mathtt{commit}$, one copies $\mathtt{win}$ and $\mathtt{near}_{\texttt{reg}}$ from the latest position with $\mathtt{tok}_{\texttt{:}}$ into $\mathtt{addr}$ and $\mathtt{wtype}$. The latest \texttt{:} before a \texttt{\#} is the \texttt{:} of its record, so afterwards every \texttt{\#} before \texttt{<out>} holds in $(\mathtt{wtype},\mathtt{addr},\mathtt{win})$ the type, address and value of its record, with $\mathtt{wtype}=+1$ for a register. At positions with $\mathtt{outsep}$, the other head copies $\mathtt{win}$ from the latest position with $\mathtt{tok}_{\texttt{:}}$ into $\mathtt{X}$, so a \texttt{\#} closing the output record of cell $a-1$ holds $a-1$ in $\mathtt{X}$. The fields $\mathtt{addr}$ and $\mathtt{wtype}$ are written nowhere else, and $\mathtt{win}$ is never written again.

The MLP decodes the instruction with one neuron per instruction $g\in[|P|]$: it fires on $\mathtt{tok}_{\texttt{step}}\wedge\mathtt{win}=g$, sets the flags of \Cref{tab:decode} and writes the indices of the registers to be read into $\mathtt{X}$ and $\mathtt{Y}$. Exactly one of these neurons fires at a \texttt{step} token and none elsewhere. The same MLP prepares the output phase. At a \texttt{\#} with $\mathtt{outsep}$ it sets $\mathtt{q1}$ and $\mathtt{q2}$ and increments $\mathtt{X}$, with one neuron per bit $s$ that fires if bit $s$ of $\mathtt{X}$ is $0$ and all lower bits are $1$, and then sets bit $s$ and clears the lower bits. Exactly one of them fires, since $a-1<m<2^w$ is not all ones, so afterwards $\mathtt{X}=a$ is the address of the next cell to emit. At the \texttt{mem} with $\mathtt{outstart}$ it sets $\mathtt{q1}$ only, so $\mathtt{X}=\mathtt{Y}=0$ there.

\begin{table}[t]
\caption{Decoding: the flags and the register indices written at a \texttt{step} token by the form of its instruction. Words not listed stay $0$.}
\label{tab:decode}
\centering\small
\begin{tabular}{@{}lll@{}}
\toprule
instruction & flags set & $\mathtt{X}$, $\mathtt{Y}$\\
\midrule
$R_i\leftarrow c$ & $\mathtt{write}$, $\mathtt{wreg}$ & \\
$R_i\leftarrow R_j\odot R_{j'}$ & $\mathtt{write}$, $\mathtt{wreg}$, $\mathtt{q1}$, $\mathtt{q2}$, $\mathtt{op}_\odot$, and $\mathtt{cmp}$ if $\odot\in\{<,\le,=,\neq\}$ & $j$, $j'$\\
$R_i\leftarrow\operatorname{msb}(R_j)$ & $\mathtt{write}$, $\mathtt{wreg}$, $\mathtt{q1}$, $\mathtt{op}_{\operatorname{msb}}$ & $j$\\
$R_i\leftarrow\mem[R_j]$ & $\mathtt{write}$, $\mathtt{wreg}$, $\mathtt{q1}$, $\mathtt{load}$ & $j$\\
$\mem[R_i]\leftarrow R_j$ & $\mathtt{write}$, $\mathtt{q1}$, $\mathtt{q2}$, $\mathtt{store}$ & $i$, $j$\\
\texttt{if }$R_i\neq0$\texttt{ goto }$R_j$ & $\mathtt{q1}$, $\mathtt{q2}$, $\mathtt{jnz}$ & $i$, $j$\\
\texttt{halt} & $\mathtt{stop}$ & \\
\bottomrule
\end{tabular}
\end{table}

\paragraph{Reading the state (layers $w+2$ and $w+3$).}
This module performs the memory reads. Each \texttt{\#} with $\mathtt{commit}$ offers its record as a dictionary entry with the key $(-\mathtt{wtype},\mathtt{addr})$, whose first coordinate is $+1$ for a memory cell and $-1$ for a register, and the value $\mathtt{win}$. Layer $w+2$ has two heads. At positions with $\mathtt{q1}$, the first copies $\mathtt{win}$ into $\mathtt{A}$ from the latest position with $\mathtt{commit}$ whose $(-\mathtt{wtype},\mathtt{addr})$ equals $(\mathtt{out},\mathtt{X})$, and at positions with $\mathtt{q2}$, the second copies $\mathtt{win}$ into $\mathtt{B}$ from the latest position with $\mathtt{commit}$ whose $(-\mathtt{wtype},\mathtt{addr})$ equals $(\mathtt{out},\mathtt{Y})$. Since $\mathtt{addr}$ and $\mathtt{wtype}$ are written only at positions with $\mathtt{commit}$, all other positions have the same key. At a \texttt{step} token, $\mathtt{out}$ is $-1$ and $\mathtt{X}$ holds a register index, so the first head finds the latest record of a write to this register, and by \Cref{lem:latestwrite} copies its current content into $\mathtt{A}$, where a miss leaves $\mathtt{A}$ at $0$, which is the content in that case. The same holds for $\mathtt{B}$. At an output control position, $\mathtt{out}$ is $+1$ and $\mathtt{X}$ holds the address $a$ of the cell to emit, so $\mathtt{A}$ becomes $\mu_t(a)$, and at a \texttt{\#} with $\mathtt{outsep}$, where $\mathtt{Y}=0$, also $\mathtt{B}$ becomes $\mu_t(0)=m$, the length of the output. The MLP of layer $w+2$ sets $\mathtt{iszero}$ with a neuron firing on $\mathtt{jnz}\wedge \mathtt{A}=0$ and clears $\mathtt{X}$ and $\mathtt{Y}$ at \texttt{step} tokens, where the register indices are no longer needed.

Layer $w+3$ has one head for loads: at positions with $\mathtt{load}$, it copies $\mathtt{win}$ into $\mathtt{X}$ from the latest position with $\mathtt{commit}$ whose $(-\mathtt{wtype},\mathtt{addr})$ equals $(\mathtt{load},\mathtt{A})$. At a load, $\mathtt{A}$ holds the address of the cell to load and $\mathtt{X}$ was just cleared, so $\mathtt{X}$ becomes $\mu_{k-1}(\mathtt{A})$ by \Cref{lem:latestwrite}. After layer $w+3$, the \texttt{step} token of step $k$ holds the operands of its instruction in $\mathtt{A}$ and $\mathtt{B}$, the loaded value in $\mathtt{X}$ for a load and $0$ otherwise, $\mathtt{Y}=\mathtt{Z}=\mathtt{Q}=0$, and $\mathtt{iszero}=\ind{\mathtt{A}=0}$ for a conditional jump. The words $\mathtt{A}$ and $\mathtt{B}$ are never written again. Every position that is not a control position has $\mathtt{X}$, $\mathtt{Y}$, $\mathtt{Z}$ and $\mathtt{Q}$ at their defaults, and this remains true until the emission layer, since every neuron from here on is guarded by a flag that is set only at control positions.

\paragraph{Arithmetic (layers $w+4$ to $2w+4$).}
This module computes $\mathtt{A}\odot \mathtt{B}$ at the \texttt{step} tokens of arithmetic instructions, with the operations of \Cref{def:wordram}, and it also performs the comparisons needed for jumps and for the output. It consists of $w$ \emph{stages}, stage $s$ in layer $w+4+s$, and a finalization layer. Every neuron of a stage is guarded by a coordinate of $\mathtt{op}$ or by $\mathtt{cmp}$, $\mathtt{jnz}$, $\mathtt{outsep}$ or $\mathtt{outstart}$, and $\mathtt{A}$ and $\mathtt{B}$ are only read. In this paragraph we index bits by their weight, so bit $s$ of a word has weight $2^s$ and is coordinate $w-s$ of the field. For every operation we state what holds after stage $s$, which follows by induction on $s$ from the neurons of the stage.

\emph{Addition and subtraction.} After stage $s$, bits $0$ to $s$ of $\mathtt{X}$ are the corresponding bits of $\mathtt{A}\pm \mathtt{B}$ and $\mathtt{carry}$ holds the carry, or the borrow, into bit $s+1$. Stage $s$ is a full adder: eight neurons per operation, one per assignment of $\mathtt{A}_s$, $\mathtt{B}_s$ and $\mathtt{carry}$, each guarded by $\mathtt{op}_+$ or $\mathtt{op}_-$, set $\mathtt{X}_s$ and update $\mathtt{carry}$. Before stage $0$, $\mathtt{carry}$ is at its default, which is the correct incoming carry $0$.

\emph{Bitwise operations.} Stage $s$ sets $\mathtt{X}_s=\mathtt{A}_s\odot \mathtt{B}_s$ by the neurons with conditions $\mathtt{op}_\odot$, $\mathtt{A}_s$, $\mathtt{B}_s$ whose result bit is $1$.

\emph{Shifts.} Stage $s$ sets $\mathtt{X}_s$ to bit $s-b$ of $\mathtt{A}$ for $\ll$ and to bit $s+b$ of $\mathtt{A}$ for $\gg$, where $b$ is the shift amount held by $\mathtt{B}$, by one neuron per amount $b\in[w]$ with the conditions $\mathtt{op}$, $\mathtt{B}=b$ and the source bit. If the source bit does not exist, or $b\ge w$, no neuron fires and $\mathtt{X}_s$ stays $0$, as the definition of the shifts requires.

\emph{Comparisons.} These scan from the most significant bit: stage $s$ inspects bit $w-1-s$. After stage $s$, $\mathtt{eq}$ holds that the inspected bits of $\mathtt{A}$ and $\mathtt{B}$ agree so far, and $\mathtt{lt}$ holds that they have decided $\mathtt{A}<\mathtt{B}$. The flag $\mathtt{eq}$ starts at $+1$ from the embedding. Stage $s$ has two neurons with the conditions $\mathtt{cmp}$, $\mathtt{eq}$ and differing inspected bits: if the bit of $\mathtt{A}$ is $0$ and the bit of $\mathtt{B}$ is $1$ it clears $\mathtt{eq}$ and sets $\mathtt{lt}$, otherwise it only clears $\mathtt{eq}$. After stage $w-1$, $\mathtt{eq}=\ind{\mathtt{A}=\mathtt{B}}$ and $\mathtt{lt}=\ind{\mathtt{A}<\mathtt{B}}$.

\emph{Most significant bit.} The same scan uses $\mathtt{eq}$ as the flag that no $1$ has been seen yet. Stage $s$ has one neuron with the conditions $\mathtt{op}_{\operatorname{msb}}$, $\mathtt{eq}$ and bit $w-1-s$ of $\mathtt{A}$, which clears $\mathtt{eq}$ and sets the $1$-bits of the number $w-1-s$ in $\mathtt{X}$. Afterwards $\mathtt{X}$ holds the index of the leading $1$ of $\mathtt{A}$, or $0$ if $\mathtt{A}=0$.

\emph{Multiplication.} Let $a$ and $b$ be the numbers held by $\mathtt{A}$ and $\mathtt{B}$, which do not change, and let $b_s$ be bit $s$ of $b$. Before stage $s$, the workspace satisfies
\begin{equation}\label{eq:mul-invariant}
 \mathtt{X}+\mathtt{Y}=\left\lfloor\frac{a\,(b\bmod2^s)}{2^s}\right\rfloor
 \qquad\text{and}\qquad
 \mathtt{Q}=ab\bmod2^s,
\end{equation}
where $\mathtt{X}$, $\mathtt{Y}$ and $\mathtt{Q}$ stand for the numbers held. This holds before stage $0$ since all three words are $0$. Stage $s$ applies one row of full adders: for every bit $j$ let $u_j$ and $c_j$ be the sum and carry bit of $\mathtt{X}_j+\mathtt{Y}_j+\mathtt{A}_jb_s$, and set $\mathtt{X}_j:=u_{j+1}$ with $u_w:=0$, $\mathtt{Y}_j:=c_j$ and $\mathtt{Q}_s:=u_0$. Every new bit depends on at most five old ones, including the old bit it replaces, so the row costs $\O(w)$ neurons. Writing $U$ and $C$ for the numbers with bits $u_j$ and $c_j$, the full adders give $U+2C=\mathtt{X}+\mathtt{Y}+ab_s$, and the new workspace holds $\mathtt{X}+\mathtt{Y}=(U-u_0)/2+C=\lfloor(\mathtt{X}+\mathtt{Y}+ab_s)/2\rfloor$. Now $a(b\bmod2^{s+1})=2^s(\mathtt{X}+\mathtt{Y}+ab_s)+\varrho$ for some $0\le\varrho<2^s$ by \eqref{eq:mul-invariant}, so this equals $\lfloor a(b\bmod2^{s+1})/2^{s+1}\rfloor$, and $u_0=(\mathtt{X}+\mathtt{Y}+ab_s)\bmod2$ is bit $s$ of $ab$, because the higher bits of $b$ contribute multiples of $2^{s+1}$ to the product. No carry is lost, since $\mathtt{X}+\mathtt{Y}\le a<2^w$ throughout. After stage $w-1$, $\mathtt{Q}$ holds $ab\bmod2^w$.

\emph{Jump targets and output.} The comparison scan is reused twice. At a \texttt{step} token with $\mathtt{jnz}$ it compares $\mathtt{B}$ with the constant $|P|$, one neuron per stage with the conditions $\mathtt{jnz}$, $\mathtt{eq}$ and the inspected bit of $\mathtt{B}$ differing from that of $|P|$, so that afterwards $\mathtt{lt}=\ind{\mathtt{B}<|P|}$. If $|P|=2^w$, every target is in range and this scan is omitted. At output control positions it compares $\mathtt{X}$, which holds the address $a$ of the cell to emit, with $\mathtt{B}=m$ under $\mathtt{outsep}$, and $\mathtt{A}=m$ with the constant $0$ under $\mathtt{outstart}$, so that afterwards $\mathtt{eq}=\ind{a=m}$ tells whether this is the last cell.

\emph{Finalization.} Layer $2w+4$ brings every result into $\mathtt{X}$. Under $\mathtt{op}_\times$ it copies $\mathtt{Q}$ into $\mathtt{X}$ and clears $\mathtt{Y}$, and for a comparison it sets bit $0$ of $\mathtt{X}$ under the conditions $\mathtt{lt}$ for $<$, $\mathtt{lt}$ or $\mathtt{eq}$ for $\le$, $\mathtt{eq}$ for $=$ and $\neg\mathtt{eq}$ for $\neq$, where the two conditions for $\le$ exclude each other. After this layer, the \texttt{step} token of every instruction of the form $R_i\leftarrow R_j\odot R_{j'}$ or $R_i\leftarrow\operatorname{msb}(R_j)$ holds the result of the operation in $\mathtt{X}$, a load holds the loaded value in $\mathtt{X}$, and every other \texttt{step} token has $\mathtt{X}=0$.

\paragraph{Assembly (layer $2w+5$).}
This module turns the results into what the step emits: the value in $\mathtt{X}$, the address in $\mathtt{Y}$, the next program counter in $\mathtt{Z}$, and the flags $\mathtt{write}$, $\mathtt{wreg}$ and $\mathtt{stop}$. For a store, the MLP copies $\mathtt{A}$ into $\mathtt{Y}$ and $\mathtt{B}$ into $\mathtt{X}$, which moves the address and the value into place. One neuron per instruction $g$ other than \texttt{halt}, firing on $\mathtt{tok}_{\texttt{step}}\wedge\mathtt{win}=g$, and for a conditional jump additionally on $\mathtt{iszero}$, writes the destination register $i$ into $\mathtt{Y}$ for the register-writing forms, writes $c$ into $\mathtt{X}$ for $R_i\leftarrow c$, and writes $g+1$ into $\mathtt{Z}$ if $g+1<|P|$ and sets $\mathtt{stop}$ otherwise. Here $\mathtt{Y}$ and $\mathtt{Z}$ are at their defaults, and so is $\mathtt{X}$ for $R_i\leftarrow c$. For a taken jump, that is under $\mathtt{jnz}\wedge\neg\mathtt{iszero}$, $w$ neurons write the target $\mathtt{B}$ into $\mathtt{Z}$ by setting $\mathtt{Z}_j$ on $\mathtt{B}_j$, and, if $|P|<2^w$, a neuron with the further condition $\neg\mathtt{lt}$ sets $\mathtt{stop}$ if the target is out of range. For \texttt{halt}, $\mathtt{stop}$ was set by the decoder. Comparing with \Cref{def:ramexec}, the \texttt{step} token of step $k$ now holds the write of the step, if any, as $(\mathtt{wreg},\mathtt{Y},\mathtt{X})$ with $\mathtt{write}$ set, the next program counter $\pc_k$ in $\mathtt{Z}$ if $k<t$, and $\mathtt{stop}=\ind{k=t}$, since the machine halts exactly when the program counter becomes $\bot$.

At output control positions, the MLP copies $\mathtt{X}$ into $\mathtt{Y}$ under $\mathtt{outsep}$, which moves the address $a$ of the cell to emit into $\mathtt{Y}$, where it is already $0$ under $\mathtt{outstart}$, and copies $\mathtt{A}$ into $\mathtt{X}$ under both, which moves its content $\mu_t(a)$ into $\mathtt{X}$. Both set $\mathtt{output}$, and set $\mathtt{last}$ if $\mathtt{eq}$ holds, that is if $a=m$.

\paragraph{Emission (layer $2w+6$).}
This module predicts the next token. Three heads, at positions with $\neg\mathtt{ctrl}$, copy $\mathtt{X}$ together with the five flags $\mathtt{write}$, $\mathtt{wreg}$, $\mathtt{stop}$, $\mathtt{output}$, $\mathtt{last}$, the word $\mathtt{Y}$, and the word $\mathtt{Z}$ from the latest position with $\mathtt{ctrl}$. The destinations are at their defaults there, as noted above. After the heads, every position holds in these fields the values of the latest control position at or before it, or the defaults if there is none, and we call this its \emph{broadcast state}. The MLP then writes $\mathtt{next}$ by the rules of \Cref{tab:emission}, one neuron per rule and, for the rules involving a bit position, per value of $j$ and per bit token, where each neuron clears $\mathtt{next}_{\texttt{0}}$ and sets $\mathtt{next}_\sigma$ for its token $\sigma$, or does nothing if $\sigma=\texttt{0}$. Rules for different tokens exclude each other since $\mathtt{tok}$ is one-hot, the rules for a bit token exclude each other since exactly one $\mathtt{dist}_j$ is set, and the alternatives within a rule exclude each other by their flags, so $\mathtt{next}$ stays one-hot.

\begin{table}[t]
\caption{The emission rules. The distance of a bit token to its marker is given by $\mathtt{dist}_j$ and the marker by $\mathtt{near}$. A ``bit $\mathtt{Y}_{j+1}$'' means the token \texttt{1} if this coordinate of the broadcast state is $+1$ and the token \texttt{0} otherwise.}
\label{tab:emission}
\centering\small
\begin{tabular}{@{}ll@{}}
\toprule
current token and broadcast state & next token\\
\midrule
\texttt{step} with $\mathtt{write}$ & \texttt{reg} if $\mathtt{wreg}$, else \texttt{mem}\\
\texttt{step} without $\mathtt{write}$, or \texttt{\#} with $\neg\mathtt{out}$ & \texttt{<out>} if $\mathtt{stop}$, else \texttt{pc}\\
\texttt{\#} with $\mathtt{out}$, or \texttt{<out>} & \texttt{mem}\\
\texttt{mem} or \texttt{reg} & bit $\mathtt{Y}_1$\\
\texttt{:} & bit $\mathtt{X}_1$\\
\texttt{pc} & bit $\mathtt{Z}_1$\\
bit at distance $j<w$ from \texttt{mem} or \texttt{reg} & bit $\mathtt{Y}_{j+1}$\\
bit at distance $w$ from \texttt{mem} or \texttt{reg} & \texttt{:}\\
bit at distance $j<w$ from \texttt{:} & bit $\mathtt{X}_{j+1}$\\
bit at distance $w$ from \texttt{:} & \texttt{<eos>} if $\mathtt{output}$ and $\mathtt{last}$, else \texttt{\#}\\
bit at distance $j<w$ from \texttt{pc} & bit $\mathtt{Z}_{j+1}$\\
bit at distance $w$ from \texttt{pc} & \texttt{step}\\
\bottomrule
\end{tabular}
\end{table}

It remains to check the rules against the transcript \eqref{eq:ram-transcript}, which proves \Cref{prop:predictor}(i). The last token of $\enc(x)$ is a bit at distance $w$ from \texttt{:} with the default broadcast state, so it predicts \texttt{\#}. This generated \texttt{\#} has $\neg\mathtt{out}$ and the default $\mathtt{stop}$, so it predicts \texttt{pc}, and the \texttt{pc} and its bits predict the bits of the default $\mathtt{Z}=0=\pc_0$ and then \texttt{step}. Now consider the \texttt{step} token of step $k$, which holds the assembled state of the step. If the step writes, it predicts $D_k$, the marker predicts the first bit of $a_k$ from $\mathtt{Y}$, the address bits predict the following bits and then \texttt{:}, the \texttt{:} and the value bits predict the bits of $v_k$ from $\mathtt{X}$ and then \texttt{\#}, all from the broadcast state of the \texttt{step} token, and this \texttt{\#} predicts \texttt{<out>} if $\mathtt{stop}$ and \texttt{pc} otherwise. If the step does not write, the \texttt{step} token itself predicts \texttt{<out>} or \texttt{pc} by the same flag. For $k<t$ the \texttt{pc} and its bits predict $\bits_w(\pc_k)$ from $\mathtt{Z}$ and then \texttt{step}, the \texttt{step} token of step $k+1$. For $k=t$ the token \texttt{<out>} predicts \texttt{mem}, and this \texttt{mem} is the control position for cell $0$, holding $\mathtt{Y}=0$, $\mathtt{X}=\mu_t(0)=m$ and $\mathtt{last}=\ind{m=0}$. It predicts the first bit of the address $0$, the following tokens predict the rest of the record of cell $0$ from its broadcast state, and its last bit predicts \texttt{<eos>} if $m=0$ and \texttt{\#} otherwise. That \texttt{\#} is the control position for cell $1$, and so on: the control position for cell $a$ holds $\mathtt{Y}=a$, $\mathtt{X}=\mu_t(a)$ and $\mathtt{last}=\ind{a=m}$, so the records of the cells $0,\dots,m$ are predicted in turn and the last one is followed by \texttt{<eos>}. This is $\enc(y)\ \texttt{<eos>}$, and every position from the last token of $\enc(x)$ on has predicted its successor.

\paragraph{Sizes, magnitudes and keys.}
The remaining parts of \Cref{prop:predictor} are read off \Cref{tab:layers,tab:fields}. There are $2w+6$ layers, $d=9w+62$ coordinates, and at most three heads per layer. The dictionary heads have $w+3$ query and key coordinates, the broadcast head of $\mathtt{X}$ with the five flags has $w+5$ value coordinates, and all other heads have fewer, so $d_h=w+5$. The decoder has one neuron per instruction and $w+2$ further neurons, every arithmetic stage has at most $21w+23$ neurons, the assembly at most $|P|+11w+5$, and every other layer uses at most $8w+16$ neurons, so $\dff\le|P|+21w+23$. The parameters are the embeddings in $\{-1,1\}$, the unembeddings in $\{0,1\}$, query and key weights in $\{-1,0,1\}$, value weights in $\{0,1,2\}$, output weights in $\{0,1\}$, hidden weights in $\{-2,\dots,2\}$, and biases of magnitude at most $\max(w+1,5)$, since no neuron reads more than $\max(w+2,6)$ coordinates, where $w+2$ is attained by the shifts and $6$ by the full adders of the multiplication.

For (ii), every residual coordinate is $\pm1$ at sublayer boundaries by the invariants above. Queries and keys have coordinates $\pm1$ and values have coordinates $0$ or $2$. Attention heads write to disjoint destination coordinates, and every nonzero update changes a default $-1$ to $+1$, so the residual stream remains binary after attention. A hidden pre-activation is a sum of at most $\max(w+2,6)$ terms $\pm1$ followed by a bias of magnitude at most $\max(w+1,5)$, so every partial sum has magnitude at most $\max(2w+3,11)$. The hidden activations are $0$ or $1$, the outgoing weights are $\pm2$, and at most one firing neuron writes a coordinate, so the MLP outputs have magnitude at most $2$. The logits are the coordinates of $\mathtt{next}$.

For (iii), consider the transcript up to the position before \texttt{<eos>}. The key of a dictionary head is $(\mathtt{one},\mathtt{commit},-\mathtt{wtype},\mathtt{addr})$. Since $\mathtt{addr}$ and $\mathtt{wtype}$ are written only at positions with $\mathtt{commit}$, all other positions share one key, and the positions with $\mathtt{commit}$ contribute one key per distinct pair of type and address of a record before \texttt{<out>}. These pairs are the memory cells $0,\dots,n$, the written registers and the stored cells, all of which are counted by $s_M(x)$ in \Cref{def:ramcomp}, so the three dictionary heads have at most $3s_M(x)+3$ keys together. The $w$ previous-position heads have one key each, the latest-marker head, the latest-\texttt{<out>} head, the two record heads and the three broadcast heads two each, and the $5w+8$ padded heads one each. In total $s_T\le3s_M(x)+3+w+14+5w+8=3s_M(x)+6w+25$.\qed

\subsection{Proof of \texorpdfstring{\Cref{thm:lemasim}}{Theorem 2}}\label{app:lemasim-proof}
Fix a LEMA transformer $T$ with vocabulary $\voc$, $L$ layers, $H$ heads per layer, dimensions $d,d_h,\dff$, precision $p$ and $N$ parameters. We describe a word-RAM program that evaluates $T$ one token at a time. We first explain the dictionaries and floating-point operations it uses, then its memory layout and execution, and finally bound its program length, register count, time and space and determine the required word size.

\subsubsection{Building blocks}

\paragraph{Trie-based dictionaries.}
The program maintains one dictionary for each attention head, mapping a binary key of length $d_h$ to its latest value vector of length $d_h$. A key coordinate $-1$ is represented by the bit $0$ and $+1$ by the bit $1$. Each value coordinate occupies one memory word, using the floating-point encoding described below.

A dictionary is stored as a binary trie of depth $d_h$. It has one fixed memory cell holding its root pointer. An internal node occupies two consecutive cells: if its address is $u$, cells $u$ and $u+1$ hold the pointers to its $0$-child and $1$-child. Nodes at depth $j$ correspond to prefixes of $j$ key bits. At depth $d_h$, a leaf is an array of $d_h$ consecutive cells holding the value of the corresponding key. No key needs to be stored at the leaf, since the path from the root already identifies it. The depth determines whether a pointer refers to an internal node or a value array. 

The pointer $0$ denotes an absent node. In particular, a zero root pointer represents an empty dictionary. A lookup starts at the root and follows the child selected by each successive key bit. If a pointer is zero, the key is absent and the lookup returns the zero vector. Otherwise, after $d_h$ child pointers it reaches the leaf and copies its value into a buffer. Thus both finding and reading a value take $\O(d_h)$ instructions.

An insertion follows the same path, keeping track of the cell containing the current pointer. Whenever this pointer is zero, the program allocates the missing node and writes its address into that cell. New internal nodes have both child pointers initialized to zero. At depth $d_h$, the program allocates a value array if there is none and writes the new value into it. If the key was already present, it simply overwrites the existing array. A leaf exists exactly when its key has been inserted, even if its stored value is the zero vector.

All dictionaries share an allocator. In our construction, the fixed workspace occupies the highest memory addresses, and dictionary storage grows from there toward lower addresses, leaving the lowest addresses for input and output. The allocator maintains a pointer $\mathtt{hp}$ to the lowest address occupied by the workspace and dictionaries, initially the start of the fixed workspace. To allocate $a$ cells, it reserves the addresses $\mathtt{hp}-a,\dots,\mathtt{hp}-1$ and decreases $\mathtt{hp}$ by $a$. Here $a=2$ for an internal node and $a=d_h$ for a leaf. The program never deletes nodes, and replacing an existing value allocates nothing.

An insertion creates at most $d_h$ internal nodes and one leaf, so it takes $\O(d_h)$ instructions, including allocation and initialization. After $s$ distinct keys have been inserted, there are at most $sd_h$ internal nodes and $s$ leaves. Besides its root-pointer cell, the dictionary therefore occupies at most
\[
 2sd_h+sd_h=3sd_h
\]
memory cells. Both traversals are iterative and need only a depth index and constantly many pointers in addition to their key and value buffers.

\paragraph{Floating-point operations.}
The word-RAM implements the rounded scalar operations of \Cref{def:float}. A float is stored in one word by packing its sign, biased exponent and fraction into $p=1+p_e+p_m$ bits; zero has the all-zero encoding. We show that rounded addition, multiplication and comparison each require only constantly many instructions of \Cref{def:wordram}, provided $w\ge C_0p$ for a universal constant $C_0$.

\emph{Unpacking and exact arithmetic.}
Shifts and masks extract the three fields. The integer significand is the stored fraction plus $2^{p_m}$ for a normal value, and just the fraction for a subnormal value. Zero operands are handled separately. For a nonzero operand, the instruction $\operatorname{msb}$ locates the leading significand bit, so shifts normalize the value to
\[
 (-1)^s a2^E,\qquad 2^{p_m}\le a<2^{p_m+1},
\]
also for subnormal inputs. The exponent $E$ can be negative; a fixed format-dependent offset represents it and the other signed exponent quantities by unsigned integers. All these integers have $\O(p)$ bits.

Multiplication forms the integer product of the two significands, adds their exponents and xors their signs. The exact product significand has at most $2p_m+2$ bits. It remains to round this integer times its power of two to the target format, as described below.

For addition, the routine orders the nonzero operands by magnitude as $(-1)^{s_a}a2^{E_a}$ and $(-1)^{s_b}b2^{E_b}$, with $E_a\ge E_b$, and sets $D=E_a-E_b$. If $D\le p_m+2$, the integer
\[
 Z=(a\ll D)+b\quad\text{or}\quad Z=(a\ll D)-b,
\]
according to whether the signs agree, gives the exact sum $(-1)^{s_a}Z2^{E_b}$. The magnitude $Z$ is nonnegative and has at most $2p_m+4$ bits. Cancellation gives $Z=0$ and returns the canonical zero. If instead $D>p_m+2$, then
\[
 b2^{E_b}<2^{E_a-2}.
\]
Every other representable value is at distance at least $2^{E_a-1}$ from the larger operand. Hence neither adding nor subtracting the smaller operand changes its rounded value, and the routine returns the larger operand directly. In particular, it never needs an alignment shift exceeding $p_m+2$.

\emph{Rounding and packing.}
Both operations reduce to rounding an exact magnitude $Z2^E$ with a separately retained sign. For $Z>0$, the routine sets $e_{\min}=1-e_{\max}$ and computes
\[
 \ell=\operatorname{msb}(Z),\qquad
 \lambda=\max(E+\ell-p_m,\ e_{\min}-p_m),\qquad
 \kappa=\lambda-E.
\]
The spacing $2^\lambda$ gives $p_m+1$ significant bits in the normal range and the fixed spacing in the subnormal range. If $\kappa\le0$, the left shift $q=Z\ll(-\kappa)$ gives the exact result in these units, with $-\kappa\le p_m$. If $\kappa>\ell+1$, the magnitude is below half a unit and rounds to zero. In the remaining case $1\le\kappa\le\ell+1$, a shift and a mask give the quotient and remainder
\[
 q=Z\gg\kappa,\qquad \delta=Z\bmod2^\kappa.
\]
The routine increments $q$ exactly when
\[
 \delta>2^{\kappa-1}\quad\text{or}\quad
 \bigl(\delta=2^{\kappa-1}\ \text{and}\ q\text{ is odd}\bigr),
\]
which implements round-to-nearest ties-to-even. It then restores the sign and packs the result, accounting for a significand carry and the normal/subnormal boundary. A zero result receives the canonical encoding. The case of large $\kappa$ is handled before forming a mask, so every integer formed in rounding has $\O(p)$ bits. The simulation only needs these routines when the floating-point operation does not overflow, as required for a defined forward pass.

\emph{Comparison, binarization and ReLU.}
For nonnegative finite values, the packed encodings increase with numerical value. Inspecting the signs and reversing the magnitude comparison for two negative operands therefore implements comparison. Binarization returns $-1$ on a negative operand and $+1$ otherwise, including zero; the corresponding key bit is $0$ or $1$. ReLU returns zero on a negative operand and leaves a nonnegative operand unchanged. These decisions use only the sign field and the zero encoding.

Every routine consists of a constant number of arithmetic operations, shifts, masks, comparisons and calls to $\operatorname{msb}$, with $\O(p)$-bit constants and intermediates. Consequently a sufficiently large universal $C_0$ makes all of them exact word-RAM computations at every $w\ge C_0p$. Their instruction sequences depend on the format but not on $w$, and they use only constantly many registers and temporary cells.

\subsubsection{The simulation}

\paragraph{Initialization and memory layout.}
The vocabulary order of \Cref{def:lemaforward} assigns the tokens the indices $0,\dots,|\voc|-1$. These indices are the input and output words of the simulating RAM. On a nonempty prompt $x=(x_1,\dots,x_n)$, its initial memory therefore holds $n$ in cell $0$ and the index of $x_i$ in cell $i$ for $1\le i\le n$, as in \Cref{def:ramcomp}. The program reads this input in place and does not overwrite it during prompt processing.

All parameters of $T$ are constants in the program, not an array in memory. The fixed workspace consists of the following blocks, reused for every token:
\begin{center}
\small
\begin{tabular}{@{}llp{8cm}@{}}
\toprule
block & cells & purpose\\
\midrule
$\mathtt{X}$ & $d$ & the current residual representation\\
$\mathtt{S}$ & $d$ & the sum of head outputs, or the MLP output\\
$\mathtt{U}$ & $d$ & one head's projected output\\
$\mathtt{Q},\mathtt{K}$ & $2d_h$ & the current head's query and key bits\\
$\mathtt{V},\mathtt{O}$ & $2d_h$ & its new value and retrieved value\\
$\mathtt{R}$ & $LH$ & one root pointer for each head's dictionary\\
scalar workspace & $c$ & input and output cursors, current token, generation phase, allocator pointer, arithmetic temporaries and other scalar working values\\
\bottomrule
\end{tabular}
\end{center}
Here $c$ is a fixed universal number of cells sufficient for the scalar routines and control logic; none of this scalar storage grows with a model dimension or sequence length. The total number of workspace cells is
\[
 S_0=3d+4d_h+LH+c.
\]
Registers hold only a constant number of operands and pointers at a time; vectors and persistent working values reside in the listed memory blocks.

The program places these blocks consecutively, in the listed order, in the $S_0$ cells with the highest addresses. Their base address is
\[
 \mathtt{base}=0-S_0\pmod{2^w}=2^w-S_0.
\]
Every buffer and root-pointer cell consequently has a fixed offset from $\mathtt{base}$ determined by $T$. The program computes these addresses from the base; it does not contain $2^w$ as an instruction constant---otherwise, $P$ would depend on $w$, which we avoid for generality. The allocator starts with $\mathtt{hp}=\mathtt{base}$ and places all trie nodes and leaves at lower addresses than the fixed workspace. Cell $0$ remains reserved for the input or output length, so no allocated node or leaf has address $0$.

By \Cref{def:ramcomp}, memory outside the input region is initially zero. The capacity bound below ensures that the workspace and dictionary storage stay disjoint from that region, so every root pointer starts at zero and every dictionary is empty. Initialization sets the base and allocator pointers, the input cursor to $1$, and the input length to $n$; the output length and output-mode flag are already zero. No trie nodes are allocated yet.

\paragraph{Evaluating one token.}
The program processes each token through the embedding and all $L$ layers, reusing the same buffers. Its matrix-vector multiplications are compiled row by row: a scalar accumulator starts at zero, and for each coordinate in increasing order the code loads the vector coordinate from memory and the matrix entry as an immediate constant, performs a rounded multiplication, and adds the rounded product to the accumulator. In other words, the update for a row of a matrix $W$ and a vector $u$ is
\[
 a\leftarrow a\oplus(W_{ij}\otimes u_j),\qquad j=1,2,\dots.
\]
The routines above implement each update in $\O(1)$ instructions. A bias, when present, is added after the dot product. The code for the matrices and biases is fixed by $T$ and is reused on every processed token.

\emph{Embedding.}
The current token is held as its vocabulary index. The program branches on this index to a block of instructions that writes the corresponding embedding row into $\mathtt{X}$. This supplies the initial $d$-dimensional representation.

\emph{Attention.}
At the start of a layer, $\mathtt{X}$ holds its input representation and the program clears $\mathtt{S}$. It processes the heads in increasing order. For head $h$ of layer $\ell$, the code for $W_Q^{\ell,h}$ and $W_K^{\ell,h}$ computes the projections of $\mathtt{X}$, binarizes each coordinate and stores the resulting bits in $\mathtt{Q}$ and $\mathtt{K}$. The code for $W_V^{\ell,h}$ similarly writes the floating-point value vector into $\mathtt{V}$.

The program next looks up $\mathtt{Q}$ in this head's dictionary, whose root pointer is in its designated cell of $\mathtt{R}$, and copies the retrieved vector into $\mathtt{O}$, or fills $\mathtt{O}$ with zeros on a miss. Only after this copy does it insert the pair $(\mathtt{K},\mathtt{V})$. This order makes the lookup strictly causal, including when query and key agree: replacing a leaf must not replace the value used for the current attention output.

The code for $W_O^{\ell,h}$ projects $\mathtt{O}$ into $\mathtt{U}$, and the program adds $\mathtt{U}$ coordinatewise to $\mathtt{S}$ using rounded addition. The head buffers can then be reused for the next head. Throughout these computations, $\mathtt{X}$ is unchanged, so every head reads the same layer input. After all heads, the program adds $\mathtt{S}$ to $\mathtt{X}$ coordinatewise. Thus each head's projection is computed separately, the projected outputs are summed in head order, and the residual addition comes last, exactly as in \Cref{def:lemaforward}.

\emph{MLP.}
The program clears $\mathtt{S}$ again and processes the hidden neurons in increasing order. For neuron $j$, it computes row $j$ of $W_1^\ell$ against $\mathtt{X}$, adds $b_j^\ell$ and applies ReLU. The resulting scalar $z_j$ is retained while the program updates all $d$ output coordinates by
\[
 \mathtt{S}_i\leftarrow\mathtt{S}_i\oplus\bigl((W_2^\ell)_{ij}\otimes z_j\bigr),
 \qquad i=1,\dots,d.
\]
It then discards $z_j$ and continues with neuron $j+1$. Each coordinate of $W_2^\ell z$ is therefore accumulated in the prescribed order, without keeping the $\dff$ hidden activations in memory. The input $\mathtt{X}$ stays unchanged until all neurons have been processed, when the program adds $\mathtt{S}$ to it. The same buffers now hold the input for the next layer.

\emph{Unembedding.}
After the final layer, the program computes the dot product of $\mathtt{X}$ with each unembedding row in vocabulary order. It retains only the largest logit seen so far and its token index. The first logit initializes this pair, and each later logit replaces it only on a strict improvement, so ties are resolved by the fixed vocabulary order. The winning index is the next predicted token. No vector of logits is stored.

\paragraph{Generation and output.}
The program first processes the prompt tokens from cells $1,\dots,n$ in order. It evaluates their layers and updates the dictionaries, but skips unembedding except at the final prompt token, since only that prediction starts generation. The input cursor and input length determine when the prompt has been consumed.

Afterwards, each predicted token is retained as the current token for the next pass through the embedding and layers. Earlier generated tokens are not stored as a transcript; their effects remain in the dictionaries. Before the predicted token \texttt{<out>}, generation writes nothing to the output region. When \texttt{<out>} is predicted, the program sets the output-mode flag and then processes this token like any other token to obtain the next prediction. In output mode, each predicted payload token increments the output length and is written to the corresponding cell $1,2,\dots$, as well as being retained for the next forward pass. Since the entire prompt has already been consumed, it being overwritten by the output is not a problem.

When \texttt{<eos>} is predicted, the program writes the output length to cell $0$ and halts. It does not process \texttt{<eos>} through the transformer. At this point cells $1,\dots,|y|$ contain the indices of $y$, which is exactly the output convention of \Cref{def:ramcomp}. The program stores input and output cursors but needs no counter for the total number of generated tokens.

\paragraph{Correctness.}
Let $\tau$ be the prompt followed by the tokens generated by $T$, excluding the final \texttt{<eos>}. For a position $i$, let $k_i^{\ell,h}$ and $v_i^{\ell,h}$ be the key and value of head $h$ in layer $\ell$ in the forward pass of \Cref{def:lemaforward}. Immediately before the program processes position $i$, the dictionary for this head contains exactly the keys from positions $1,\dots,i-1$. For each such key it stores $v_j^{\ell,h}$ at the largest index $j<i$ having that key.

This invariant holds initially because all dictionaries are empty. Assuming it for position $i$, induction over the layers shows that the computed residuals, queries, keys and values agree with the transformer: each lookup returns the latest strictly earlier matching value, and every scalar operation and every reduction uses the specified floating-point order. In particular, processing MLP neurons one by one interleaves independent output sums but does not reorder any of them. Inserting the current key and value then establishes the dictionary invariant for position $i+1$. Strict causality ensures that representations at earlier positions do not change when a new token is appended.

Unembedding consequently predicts the same next token, including ties. Induction over the processed positions proves that the program follows $T$'s generation and halts with the required output whenever $T$ generates $y$ from $x$, provided the words and memory are large enough for the operations just described. We verify these requirements next.

\subsubsection{Resource bounds}

\paragraph{Program length and registers.}
The program contains the matrix entries and biases as immediate constants. The code for a dot-product term or a scalar update has constant length, including the floating-point routines, and every parameter, including those of the unembedding matrix, occurs in only constantly many such blocks. The embedding dispatch uses $\O(|\voc|d)$ instructions. The remaining code for dictionary traversals, buffer operations and control has length $\O(LH(d+d_h)+L\dff+|\voc|)$, which is also $\O(N)$. Thus
\[
 |P|=\O(N).
\]
The loops over model dimensions can be unrolled, leaving a single outer loop over tokens and the input and output control logic. The scalar routines, memory accesses and pointer updates use a constant number of registers, while the buffers and saved scalar state reside in memory. Reusing the same register pool throughout gives $r=\O(1)$; in particular, neither a vector dimension nor the number of layers contributes to $r$.

\paragraph{Time.}
For one token, embedding dispatch takes $\O(|\voc|+d)$ instructions. Each head takes $\O(dd_h)$ instructions for its three projections and output projection, $\O(d_h)$ for dictionary lookup and insertion, and $\O(d)$ to add its projected output to the head sum. Each MLP takes $\O(d\dff)$ instructions for its two matrices and $\O(\dff+d)$ for bias additions, ReLU and the residual addition. Unembedding and selecting its maximum take $\O(|\voc|d)$ instructions. Buffer clearing and input/output bookkeeping fit in these bounds.

The parameter count is
\[
 N=2|\voc|d+L\bigl(4Hdd_h+2d\dff+\dff\bigr).
\]
Since all dimensions are positive, the total cost per processed token is therefore $\O(N)$. Initialization takes $\O(1)$ instructions. If $T$ generates $y$ from $x$ in $t=t_T(x)$ steps, the program processes the $n=|x|$ prompt tokens and the $t-1$ generated tokens before \texttt{<eos>}, so
\[
 t_M(x)=\O\bigl((|x|+t_T(x))N\bigr).
\]

\paragraph{Space and capacity.}
For each head, the number of allocated leaves is the number of distinct keys seen so far. Summed over all heads, this is at most $s_T(x)$ throughout the execution. The dictionary bound above therefore gives at most $3s_T(x)d_h$ cells for all nodes and value arrays, in addition to the $LH$ root-pointer cells already counted in $S_0$. The low-memory input and output occupy the union of cells $0,\dots,n$ and $0,\dots,|y|$, since their storage is reused. Hence the entire memory layout fits in
\[
 \max\{|x|,|y|\}+1+{\underbrace{3d+4d_h+LH+c}_{S_0}}+3s_T(x)d_h
\]
cells. This counts all buffers, roots, scalar working cells, trie nodes, stored values and the input/output length cell; the parameters remain in the program.

The prompt is nonempty, so every head inserts at least one key and $s_T(x)\ge LH\ge1$. Also $d,d_h\ge1$. The number of cells above is thus at most
\[
 \max\{|x|,|y|\}+(c+4)d+8s_T(x)d_h.
\]
Choosing the universal constant $C\ge\max\{c+4,8\}$ in the theorem makes its capacity condition sufficient for this allocation to fit below $2^w$. Since the fixed workspace and heap occupy one region growing downward from the top of memory, while input and output occupy the low region, they remain disjoint at every stage. The allocator needs neither $s_T(x)$ nor $|y|$ in advance. Including the constant number of registers gives
\[
 s_M(x)=\O\bigl(|x|+|y|+d+s_T(x)d_h\bigr).
\]

\paragraph{Word size and uniformity.}
The remaining requirement is that words hold the numerical quantities and instruction constants used by the program. The floating-point routines require $w\ge C_0p$. Packed parameters have $p$ bits, and instruction indices, vocabulary indices, model dimensions and offsets within the $S_0=\O(N)$ workspace have $\O(\log N)$ bits. The additional format-dependent constants of the scalar routines have $\O(p)$ bits. Consequently a sufficiently large universal $C_1$ gives a threshold
\[
 w_0=\left\lceil C_1\bigl(p+\log_2 N\bigr)\right\rceil
       =\O(p+\log N)
\]
at which all these constants and scalar arithmetic intermediates fit, and $P$ and the fixed register count define a valid word-RAM.

Neither the instructions nor their constants depend on the eventual word size $w$. The only dependence of the memory layout on $w$ comes from computing $\mathtt{base}=0-S_0$ by modular subtraction; increasing $w$ simply moves the workspace and heap to the top of the larger memory. All allocated addresses and input/output cursors fit whenever the capacity condition holds. Thus the same program $P$ and register count $r$ work for every $w\ge w_0$ and every input satisfying that condition, with the time and space bounds proved above.\qed

\subsection{Round trip}\label{app:roundtrip}
Fix a word-RAM $M=(P,r,w)$. We apply \Cref{thm:lemasim} to the transformer $T$ of \Cref{thm:ramsim}, choosing a word size that works for every halting input. This transformer has
\[
 N=\O(w^2(w+|P|)),\qquad d,d_h=\O(w),\qquad p=\O(\log w).
\]
For any input $x$ on which $M$ halts with output $y$, write $t=t_M(x)$ and $s=s_M(x)$. Then $T$ generates $\enc(y)$ from $\enc(x)$ with $t_T(\enc(x))=\O((t+|y|)w)$ and $s_T(\enc(x))=\O(s+w)$.

By \Cref{def:enc}, $|\enc(a)|=(|a|+1)(2w+3)-1$. Since $|x|,|y|<s\le r+2^w\le2^{w+1}$, the capacity expression of \Cref{thm:lemasim} satisfies, for a universal constant $A$,
\[
 \max\{|\enc(x)|,|\enc(y)|\}
   +C\bigl(d+s_T(\enc(x))d_h\bigr)
 \le A(s+w)w\le 3Aw2^w.
\]
Also, $|P|\le2^w$ gives $w_0=\O(p+\log N)=\O(w)$. Thus a sufficiently large universal constant $b$ makes $w'=\lceil bw\rceil$ satisfy both $w'\ge w_0$ and $3Aw2^w<2^{w'}$, uniformly over all halting inputs.

Applying \Cref{thm:lemasim} at this word size gives a single word-RAM $M'$ mapping $\enc(x)$ to $\enc(y)$, with tokens identified with their vocabulary indices. Substituting the bounds above gives
\begin{align*}
 t_{M'}(\enc(x))&=\O\bigl((|x|+t+|y|)(w+|P|)w^3\bigr),\\
 s_{M'}(\enc(x))&=\O\bigl((s+w)w\bigr).
\end{align*}
Since $t,s,|P|\ge1$, these imply the time and space bounds in the main text.

\begin{remark}[Executable constructions]\label{rem:validation}
The released code implements both compilers in Python by defining classes for LEMA transformers and word-RAMs reflecting \Cref{def:lematransformer,def:wordram} and implementing two functions to convert between the two, corresponding to the constructions in \Cref{thm:ramsim,thm:lemasim}. Instructions are in \texttt{verification/README.md}. For the forward construction it checks complete predicted transcripts against RAM executions, the binary-state and arithmetic invariants, and the architecture, token, magnitude, and key bounds. For the reverse construction it compares predictions, residual states, and stored dictionaries with direct transformer evaluation and checks program length, word size, and occupied space. The floating-point tests compare the integer algorithms and emitted RAM routines with exact rounding, exhaustively on small formats and on boundary cases for IEEE binary16/32/64. These implementations are mainly provided to analyze fine details of the constructions and automatically check their correctness on selected examples. Of course, they do not replace the proofs above in any way.
\end{remark}

\section{Additional material for \texorpdfstring{\Cref{sec:training}}{Section 4}}\label{app:training}

\subsection{Proof of \texorpdfstring{\Cref{lem:surrogate}}{Lemma 1}}\label{app:surrogate}

\begin{proof}[Proof of \Cref{lem:surrogate}]
Fix $i$, write $\sigma_m=\sigma\bigl(\alpha(\ip{q_i}{k_m}-d_h+1)\bigr)$ and $V=\max_{j<i}\norm{v_j}$. Since $\ip{q_i}{k_m}$ equals $d_h$ minus twice the number of disagreeing coordinates, $\sigma_m=\sigma(\alpha)$ if $k_m=q_i$ and $\sigma_m\le\sigma(-\alpha)\le e^{-\alpha}$ otherwise.

If no $j<i$ satisfies $k_j=q_i$, the LEMA output is $0$ and every weight satisfies $w_{ij}\le\sigma_j\le e^{-\alpha}$, so the difference is at most $(i-1)e^{-\alpha}V\le ne^{-\alpha}V$.

Otherwise let $\ell=\ell_i$ be the latest match, so the LEMA output is $v_\ell$ and every $m$ with $\ell<m<i$ is a mismatch. Since $\sigma(\alpha)\ge1-e^{-\alpha}$,
\[
    w_{i\ell}=\sigma(\alpha)\prod_{m=\ell+1}^{i-1}(1-\sigma_m)
    \ge(1-e^{-\alpha})^{i-\ell}
    \ge1-(i-\ell)e^{-\alpha}
    \ge1-ne^{-\alpha}.
\]
Moreover, the telescoping identity $\sum_{j<i}w_{ij}=1-\prod_{m<i}(1-\sigma_m)$ gives $\sum_{j\neq\ell}w_{ij}\le1-w_{i\ell}$. The difference is therefore
\[
    \Bigl\|(1-w_{i\ell})v_\ell-\sum_{j\neq\ell}w_{ij}v_j\Bigr\|
    \le(1-w_{i\ell})V+(1-w_{i\ell})V
    \le2ne^{-\alpha}V. \qedhere
\]
\end{proof}

\subsection{Model and training details shared by all experiments}\label{app:model-details}
All models are pre-norm decoder-only transformers with RMSNorm~\citep{zhang2019rmsnorm}, SwiGLU MLPs~\citep{shazeer2020glu} of width $8d/3$ rounded up to a multiple of $128$, untied input and output embeddings, and neither biases nor dropout. Weights are initialized from $\mathcal{N}(0,1/\mathrm{fan\text{-}in})$, embeddings from $\mathcal{N}(0,1/d)$, and the output projections of the attention and MLP blocks are additionally scaled by $1/\sqrt{2L}$ for depth $L$. Softmax transformers use RoPE of base $10^4$~\citep{su2024roformer}. LEMA and GDN models use no positional encoding, and GDN layers are those of \citet{yang2025gateddelta} in the flash-linear-attention implementation~\citep{yang2024fla}. All models are trained with AdamW~\citep{loshchilov2019decoupled} ($\beta_1=0.9$, $\beta_2=0.999$, $\epsilon=10^{-8}$), weight decay $0.1$ on all matrices including the embeddings, gradient clipping at norm $1$, a linear learning rate warmup and bf16 mixed precision. All LEMA models have the head dimension $d_h=64$ of the softmax transformers, except in \Cref{app:head-sizes}. They use $\beta=4$ in the binarization of \Cref{sec:training}, the initial value $\alpha=1/\sqrt{d_h}$, the final value $\alpha=10$ and the backward cap of $2$. The timing of the $c$ and $\alpha$ ramps is given per experiment. The surrogate runs on the Triton kernel of \citet{tan2025stickbreaking}, modified to take the threshold $c$ as a runtime logit bias, to accumulate the log-space sums in fp32 and to use lock-free atomics in the backward pass.

\subsection{Hardening schedule}\label{app:hardening}
To understand the hardening schedule better, we run some ablation studies specifically for the $77$M-parameter LEMA language models of \Cref{sec:lm}.
\paragraph{Trajectory of a run.}
First, we rerun the $77$M model to record its hardening trajectory. We plot $c$ and the $\alpha$ used in the forward and backward pass of the model together with the soft and hard loss in \Cref{fig:hardening}. Here, \emph{soft} loss refers to the loss on the probe set, $128$ held-out windows of $2048$ tokens evaluated every $200$ steps, using the current forward $\alpha$, i.e.\ using the forward pass that training uses at that step, while \emph{hard} loss refers to the loss when using the actual LEMA operation with the current model parameters. These diverge heavily in the beginning, where the model learns parameters that work well with the low-$\alpha$ soft forward pass but not when substituting in the LEMA operation. Then, as $\alpha$ grows, the stick-breaking surrogate approaches LEMA (\Cref{lem:surrogate}) and thus the soft and hard losses become nearly identical late in training.

\begin{figure}[htbp]
\centering
\includegraphics[width=\linewidth]{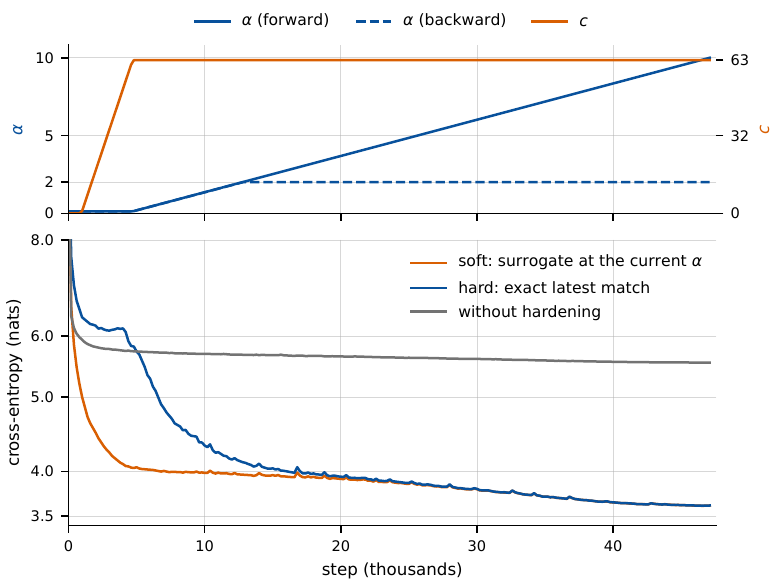}
\caption{Hardening trajectory of a $77$M LEMA model. \emph{Top:} the gate threshold $c$ and the forward and backward $\alpha$ over training. \emph{Bottom:} cross-entropy with the surrogate at the current forward $\alpha$ (soft) and with the exact latest-match operation (hard), and the cross-entropy of the run without hardening.}
\label{fig:hardening}
\end{figure}

\paragraph{Training without hardening.}
To understand why we use a hardening schedule instead of using LEMA in the forward pass throughout training, we train the same model in that exact way, with the exact latest-match operation in the forward pass from step $0$ and the surrogate at $\alpha = 2$ in the backward pass. No query ever finds a matching key, every attention output stays zero and the model trains without attention (\Cref{fig:hardening}, bottom), ending $1.95$ nats above the hardened run in validation cross-entropy ($5.497$ against $3.542$).

\paragraph{Uncapped backward $\alpha$.}
Finally, to understand why we cap $\alpha$ in the backward pass, we train the model again without capping the backward $\alpha$, i.e.\ the stick-breaking surrogate uses the same $\alpha$ in the forward and backward passes throughout training. \Cref{fig:no-cap} shows the (soft) loss and the gradient magnitudes of the query and key projections $W_Q, W_K$ of the model during training with and without the backward cap on $\alpha$. The gradients of $\sbop_{\alpha, d_h-1}$ with respect to the queries and keys for queries and keys in $\{-1,1\}^{d_h}$ go to zero as $\alpha \to \infty$, explaining why the gradients for the query and key projections die out. This hinders further adaptation of these matrices and leads to a final validation cross-entropy $0.13$ nats worse ($3.67$ against $3.542$).

\begin{figure}[htbp]
\centering
\includegraphics[width=\linewidth]{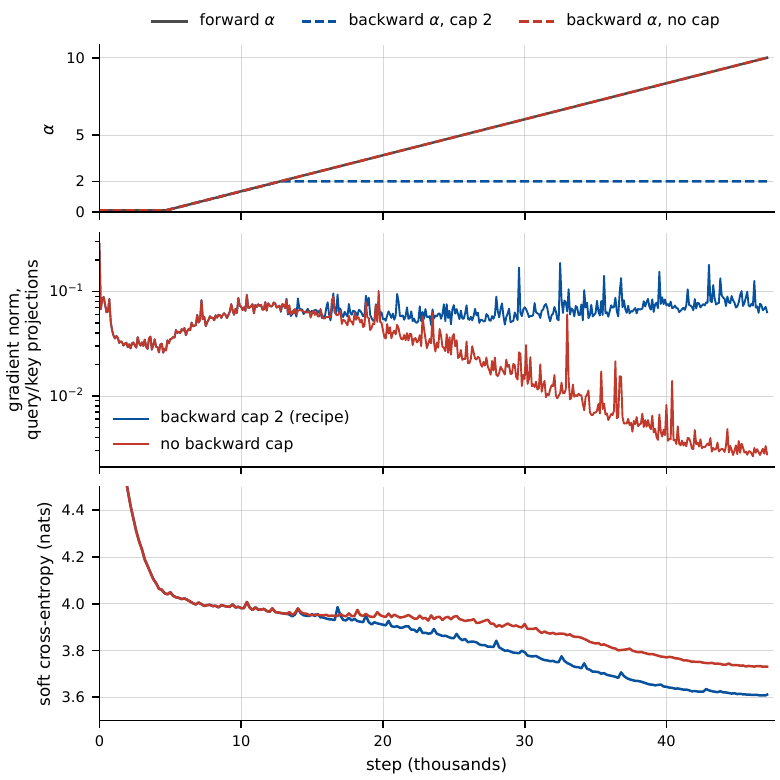}
\caption{Backward $\alpha$ capped at $2$ (the recipe) against uncapped, $77$M model. \emph{Top:} forward and backward $\alpha$. \emph{Middle:} pre-clipping gradient norm of the query and key projections every $100$ steps. \emph{Bottom:} soft cross-entropy on the probe set, which becomes nearly identical to the hard loss late in training in both runs.}
\label{fig:no-cap}
\end{figure}

\subsection{Associative recall details}\label{app:recall-details}
\paragraph{Task in prior work.}
Several prior variants of this task~\citep{arora2023zoology, arora2024based, okpekpe2025revisiting} use at most $64$ or $256$ pairs and lengthen sequences with filler tokens between the positions where recall is tested. We instead chose to use larger numbers of associations, as this actually requires a larger state to solve.
\paragraph{Training.}
Settings shared with the language modeling experiment are listed in \Cref{app:model-details}. The softmax transformers and GDN models are trained at each $n$ of the curriculum, i.e.\ $2, 4, \dots, 4096$, for at most $50\,000$ steps with a batch size of $\lfloor 2^{15}/n \rfloor$ sequences, learning rate $10^{-3}$ with a linear warmup over the first $1000$ steps and cosine decay to $0$. If the accuracy on a held-out batch, measured every $250$ steps, hits $1.000$, the stage ends and the curriculum advances to the next larger $n$ from that checkpoint with a fresh optimizer; a stage that never hits $1.000$ runs for the full $50\,000$ steps and the curriculum advances from its checkpoint with the highest held-out accuracy. Successful stages typically finish within a few thousand steps. For GDN models, increasing the head dimension increases their state size and hence memory capacity and would likely increase the $n$ up to which the models learn the task. The GDN models use the published value expansion factor of $2$, i.e.\ the state of the single GDN head is a matrix of size $64 \times 128$. Lowering the peak learning rate to $3\cdot10^{-4}$ or raising it to $3\cdot10^{-3}$ does not improve GDN's long-range recall (\Cref{fig:recall-gdn-lr}). Lastly, the curriculum is particularly important for softmax transformers to aid discovery of an algorithm. In fact, in our setup they do not learn the task within $100\,000$ steps when training directly at $n=4$, as shown in \Cref{fig:recall-discovery}. Some prior works avoid this problem e.g.\ by training on a mixture of pair counts instead of a curriculum~\citep{arora2024based}.

For LEMA models, the stick-breaking surrogate discovers a solution directly at $n=8$. We train for $50\,000$ steps with a batch size of $4096$ sequences and learning rate $3\cdot10^{-4}$ after a linear warmup over the first $1000$ steps. The threshold $c$ rises linearly from $0$ to $d_h-1=63$ between steps $1000$ and $15\,000$, and $\alpha$ rises linearly from $1/\sqrt{d_h}$ at step $15\,000$ to $10$ at the last step. With cosine decay from the same peak learning rate to zero, hardening is unstable for some seeds (\Cref{fig:recall-lr-schedule}). We therefore lower the learning rate to $5\cdot10^{-5}$ throughout the $\alpha$ ramp and evaluate the final checkpoint. At $n=4096$, final exact-match accuracies with cosine decay are $97.34\%$, $0.05\%$ and $99.33\%$, compared with $99.58\%$--$99.73\%$ with our schedule.

\paragraph{Evaluation.}
Every model is evaluated at $n \in \{4, 8, \dots, 4096\}$ on $4$ held-out batches of $\lfloor 8192/n \rfloor$ sequences, i.e.\ $32\,768$ predictions per $n$. LEMA models are evaluated with the exact latest-match operation from the final checkpoint of their single run, the baselines from the checkpoint of the stage trained at that $n$ that the curriculum advanced from.

\begin{figure}[htbp]
\centering
\includegraphics[width=0.6\linewidth]{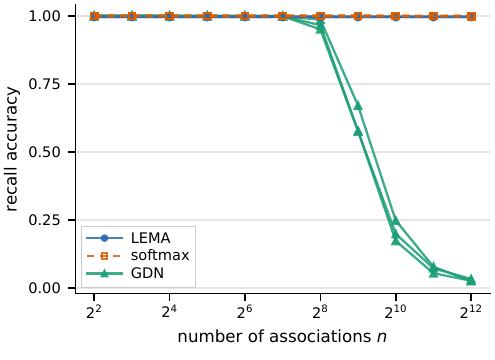}
\caption{Same as \Cref{fig:recall} (left), but showing each of the three seeds separately.}
\label{fig:recall-seeds}
\end{figure}

\begin{figure}[htbp]
\centering
\includegraphics[width=0.5\linewidth]{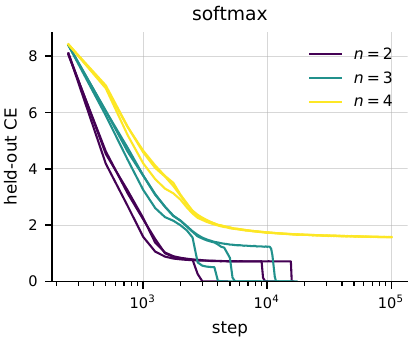}
\caption{Loss development of softmax transformers on the synthetic recall task for different numbers of pairs $n$, three seeds each. These runs use learning rate $10^{-3}$ after $1000$ warmup steps, without cosine decay. Models converge to a suboptimal solution, then discover the perfect one, while at $n=4$ this discovery does not happen.}
\label{fig:recall-discovery}
\end{figure}

\begin{figure}[htbp]
\centering
\includegraphics[width=\linewidth]{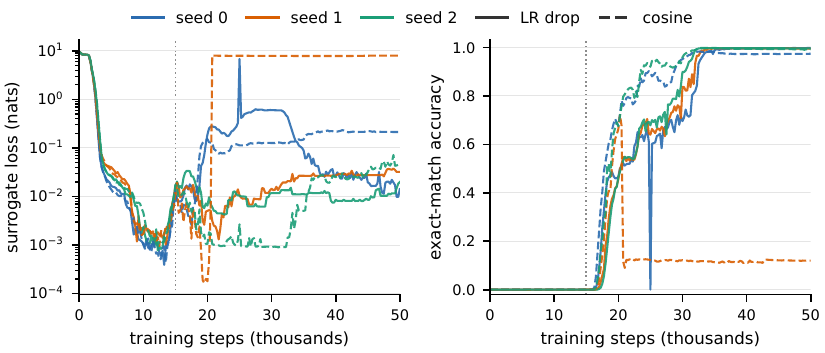}
\caption{LEMA hardening on synthetic recall at $n=8$, three seeds under each learning-rate schedule. \emph{Left:} held-out loss with the training surrogate, on a logarithmic scale. \emph{Right:} held-out accuracy with exact latest-match attention. Solid lines use the learning-rate drop to $5\cdot10^{-5}$, dashed lines cosine decay from the same peak of $3\cdot10^{-4}$. The vertical dotted line marks the start of the $\alpha$ ramp at step $15\,000$.}
\label{fig:recall-lr-schedule}
\end{figure}

\begin{figure}[htbp]
\centering
\includegraphics[width=0.6\linewidth]{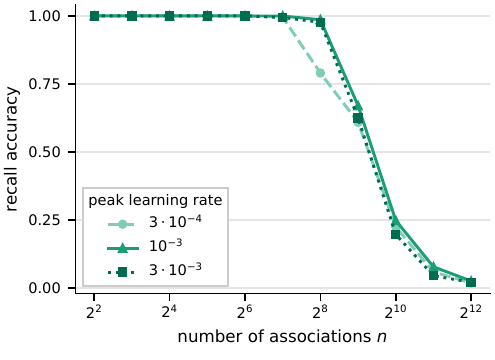}
\caption{GDN recall accuracy against the number of associations at different peak learning rates, using seed $0$.}
\label{fig:recall-gdn-lr}
\end{figure}

\subsection{Mechanistic analysis of the associative recall models}\label{app:recall-mechanism}
\paragraph{LEMA models.}
\Cref{fig:attention-seeds} (left) shows the attention graphs of all three LEMA seeds on the same sample with $n=4$. The first one is the picture of \Cref{fig:recall} (right). All three seeds share the circuit described in the main text: in the first layer, each $b_i$ attends to the $a_i$ directly before it, and in the second layer, each $a_i$ after $\mathrm{SEP}$ attends to its paired $b_i$.

To check that the same mechanism is used at large $n$, we record the codes on two samples with $n=4096$, giving $8192$ positions after $\mathrm{SEP}$. Every possible $a_i$ appears in exactly one pair per sample. In the first layer, all tokens $a_i$ before $\mathrm{SEP}$ emit the same key code within each seed, and $99.93\%$, $99.95\%$ and $99.56\%$ of the tokens $b_i$ emit exactly this code as their query code, hence attending to the preceding $a_i$. The first layer thus copies almost every $a_i$ into the residual stream at $b_i$. In the second layer, the key code emitted at $b_i$ depends almost only on its preceding $a_i$: for $99.85\%$, $99.88\%$ and $99.12\%$ of the $4096$ possible tokens $a_i$, this code agrees across the two evaluated samples. Across all pairs in both samples, the tokens $b_i$ emit $4100$, $4099$ and $4107$ distinct key codes. After $\mathrm{SEP}$, the head at $a_i$ retrieves its paired $b_i$ in $99.77\%$, $99.56\%$ and $99.44\%$ of cases, matching the accuracies of $0.9977$, $0.9957$ and $0.9946$ on these samples.

\paragraph{Softmax transformers.}
\Cref{fig:attention-seeds} (right) shows the corresponding circuit in all three softmax transformers, with attention also spread over other positions. At $n=4096$, on the same two samples as above, the tokens $b_i$ place on average $0.51$, $0.59$ and $1.00$ of their first-layer attention mass on the preceding $a_i$, for seeds $0$, $1$ and $2$, respectively. The tokens $a_i$ after $\mathrm{SEP}$ place $0.93$, $0.84$ and $0.69$ of their second-layer mass on the corresponding $b_i$. All three seeds solve the task near-perfectly.
\paragraph{Competing associations.}
We present $64$ distinct tokens $a_i$, each paired four times with different following tokens, with all pairs in random order. After $\mathrm{SEP}$, each $a_i$ appears once more. No model saw such competing associations in training. We use the $n=64$ curriculum checkpoints and the final LEMA checkpoints, evaluating $8192$ predictions per seed. \Cref{tab:repeated-keys} lists the mean probability mass on the token paired with $a_i$ at each occurrence. LEMA models put almost all their mass on the most recently paired token, as the earlier associations have already been overwritten in the dictionary. Meanwhile, softmax transformers and GDN show some recency bias, assigning the most mass to the most recently paired token on average.
\begin{table}[htbp]
\centering
\caption{Mean probability mass on the tokens paired with $a_i$ at its first through fourth occurrences, and on all other tokens, with $64$ distinct $a_i$ each appearing in four pairs. Averaged over three seeds.}
\label{tab:repeated-keys}
\begin{tabular}{lccccc}
\toprule
model & 1st & 2nd & 3rd & 4th & elsewhere \\
\midrule
LEMA & 0.000 & 0.000 & 0.002 & 0.996 & 0.002 \\
softmax & 0.005 & 0.037 & 0.171 & 0.528 & 0.258 \\
GDN & 0.093 & 0.138 & 0.159 & 0.233 & 0.377 \\
\bottomrule
\end{tabular}
\end{table}
\begin{figure}[htbp]
\centering
\includegraphics[width=\linewidth]{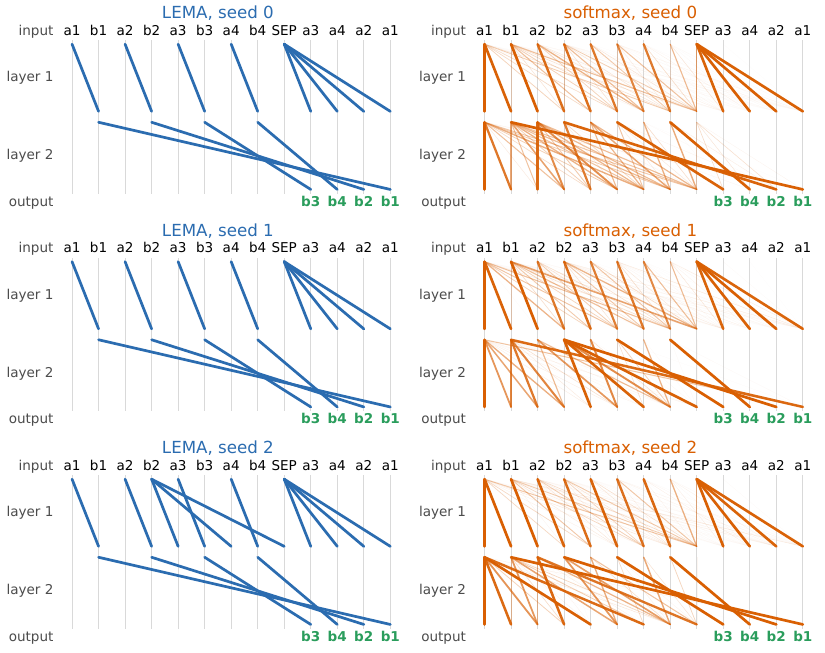}
\caption{Attention graphs of the three LEMA (left) and softmax (right) recall models on the same sample with $n=4$, one row per seed. The top row repeats \Cref{fig:recall} (right). Position $i$ attending to position $j$ is drawn as a line from $j$ at the top of the layer's band to $i$ at its bottom, with width and opacity following the attention weight. LEMA weights are $0$ or $1$. The output row of each panel shows the token predicted at each position after $\mathrm{SEP}$, green when correct.}
\label{fig:attention-seeds}
\end{figure}

\subsection{Stick-breaking attention on the recall task}\label{app:recall-sb}
For comparison with LEMA, we train plain stick-breaking models with real-valued queries and keys, $c=0$ and $\alpha=1/\sqrt{d_h}$, using the same model dimensions, $n=8$, $50\,000$ steps and initial learning rate, without the learning rate drop, for three seeds. \Cref{fig:recall-sb} (left) shows their accuracy at every $n$. The solution found at $n=8$ extrapolates to a few hundred pairs and then fails, with large differences between seeds: at $n=512$ the three seeds reach $0.06$, $0.98$ and $0.94$, and at $n=4096$ $0.00$, $0.23$ and $0.09$. A candidate explanation is that the stick is consumed by the many small gate values $\sigma(\alpha\ip{q_i}{k_j})$ of non-matching keys before the matching key is reached, an effect that grows with $n$. \Cref{fig:recall-sb} (right) tests this by zeroing every gate below a threshold $\tau$ at evaluation time. With $\tau=0.1$, mean accuracy stays between $0.93$ and $0.97$ across $n$, with all three seeds above $0.90$ at $n=4096$. Increasing the threshold to $\tau=0.3$ substantially reduces accuracy. This recovery shows that leakage through small gates limits length extrapolation. LEMA's discrete rule eliminates such leakage.
\begin{figure}[htbp]
\centering
\includegraphics[width=\linewidth]{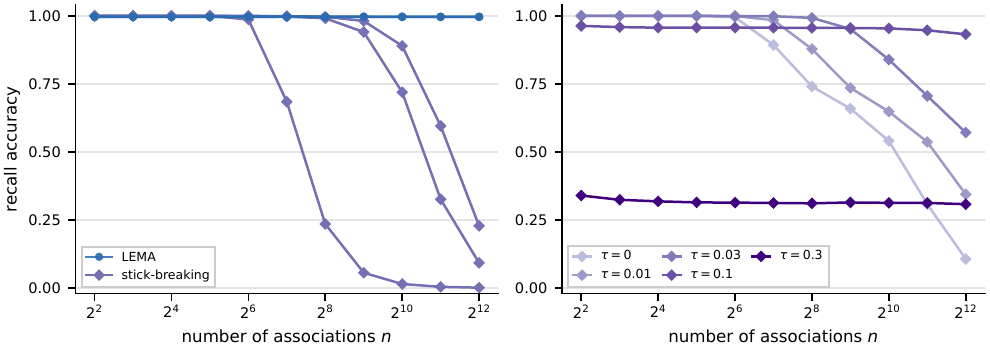}
\caption{Plain stick-breaking attention trained at $n=8$ on the recall task, three seeds. \emph{Left:} accuracy against the number of pairs, with the three LEMA seeds of \Cref{fig:recall-seeds}. \emph{Right:} the same models evaluated with every gate value below $\tau$ set to zero, averaged over the seeds.}
\label{fig:recall-sb}
\end{figure}

\subsection{Language modeling details}\label{app:lm-details}
\paragraph{Training.}
Settings shared with the recall experiment are listed in \Cref{app:model-details}. We use the \texttt{sample-100BT} subset of FineWeb-Edu~\citep{penedo2024fineweb}, tokenized with the GPT-2 tokenizer~\citep{radford2019language}. The last of its $140$ shards is held out, with its first half of documents used for validation. A training batch consists of $16$ windows of $2048$ tokens drawn uniformly at random from a subset of the training shards holding at least $31$B tokens, i.e.\ $2^{15}$ tokens per step. At each model dimension $d$ and depth $L$, all architectures receive the same training-token budget, approximately $20N$, where $N$ is the softmax/LEMA parameter count including the embedding and output matrices~\citep{hoffmann2022training}. \Cref{tab:lm-models} lists the sizes. Softmax transformers and LEMA models use $d/64$ heads of dimension $64$. GDN uses $d/128$ heads of dimension $128$ without value expansion and has slightly more parameters at the same model dimension and depth. The learning rate follows a cosine schedule from a peak of $10^{-3}\cdot1024/d$ to $1\%$ of the peak over the full run, with a linear warmup during the first $2\%$ of the steps. For LEMA models, $c$ increases linearly from $0$ to $d_h-1$ between $2\%$ and $10\%$ of the steps, and $\alpha$ increases linearly from $1/\sqrt{d_h}$ at $10\%$ of the steps to $10$ at the last step. The main results use one seed per configuration; three-seed results at $d=256$ and $d=512$ are shown in \Cref{fig:lm-recall-seeds}.
\paragraph{Evaluation.}
Validation cross-entropy is measured on the first $2^{15}$ non-overlapping windows of $2048$ tokens of the validation set, $2^{26}$ tokens in total, averaged over all predicted tokens and identical for every model. Every validation cross-entropy in this paper is measured on these $2^{26}$ tokens, which form $2^{12}$ windows at context length $16\,384$. LEMA models are evaluated with the exact latest-match operation.
\begin{table}[htbp]
\centering
\caption{Model dimension $d$, depth $L$, parameter counts and shared training-token budgets of the language models of \Cref{sec:lm}.}
\label{tab:lm-models}
\begin{tabular}{rrrrr}
\toprule
$d$ & $L$ & softmax/LEMA params & GDN params & tokens \\
\midrule
$256$  & $4$  & $29$M  & $29$M  & $0.6$B  \\
$512$  & $8$  & $77$M  & $79$M  & $1.5$B  \\
$768$  & $12$ & $162$M & $170$M & $3.2$B  \\
$1024$ & $16$ & $309$M & $326$M & $6.2$B  \\
$1280$ & $20$ & $525$M & $559$M & $10.5$B \\
$1536$ & $24$ & $834$M & $892$M & $16.7$B \\
\bottomrule
\end{tabular}
\end{table}
\paragraph{Validation losses.}
\Cref{tab:lm-ce} lists the validation cross-entropies behind \Cref{fig:lm} (left). Interpolating the softmax row log-linearly in the parameter count between neighboring sizes, the $77$M to $834$M LEMA models match softmax transformers of $44$M, $90$M, $174$M, $292$M and $456$M parameters.
\begin{table}[htbp]
\centering
\caption{Validation cross-entropy at context length $2048$ of the language models of \Cref{sec:lm}.}
\label{tab:lm-ce}
\begin{tabular}{rccc}
\toprule
$d$ & softmax & GDN & LEMA \\
\midrule
$256$ & 3.751 & 3.664 & 4.345 \\
$512$ & 3.256 & 3.200 & 3.545 \\
$768$ & 2.968 & 2.924 & 3.194 \\
$1024$ & 2.759 & 2.735 & 2.945 \\
$1280$ & 2.620 & 2.600 & 2.777 \\
$1536$ & 2.506 & 2.492 & 2.657 \\
\bottomrule
\end{tabular}
\end{table}

\paragraph{Variation across seeds.}
We repeat the experiments at $d=256$ and $d=512$ with two additional seeds, including context extension and the repeated-bigram evaluation. Validation losses vary little across seeds (\Cref{tab:lm-seeds}), and the qualitative recall patterns persist (\Cref{fig:lm-recall-seeds}).

\begin{table}[htbp]
\centering
\caption{Validation cross-entropy at context length $2048$ for three seeds at the two smallest sizes. Seed $0$ is the run of \Cref{tab:lm-ce}.}
\label{tab:lm-seeds}
\begin{tabular}{lrrrrrr}
\toprule
 & \multicolumn{3}{c}{$d=256$} & \multicolumn{3}{c}{$d=512$} \\
\cmidrule(lr){2-4} \cmidrule(lr){5-7}
 & seed $0$ & seed $1$ & seed $2$ & seed $0$ & seed $1$ & seed $2$ \\
\midrule
softmax & $3.751$ & $3.763$ & $3.761$ & $3.256$ & $3.257$ & $3.258$ \\
GDN & $3.664$ & $3.666$ & $3.686$ & $3.200$ & $3.198$ & $3.209$ \\
LEMA & $4.345$ & $4.334$ & $4.313$ & $3.545$ & $3.554$ & $3.556$ \\
\bottomrule
\end{tabular}
\end{table}
\begin{figure}[htbp]
\centering
\includegraphics[width=\linewidth]{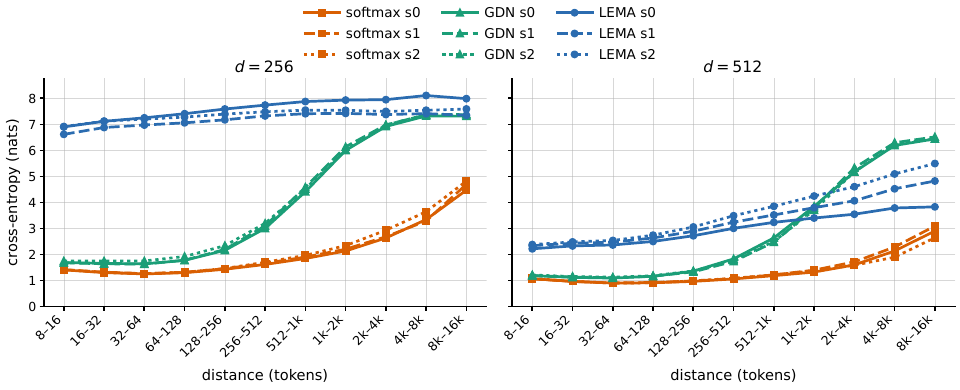}
\caption{Repeated-bigram recall for three seeds at $d=256$ and $d=512$, with softmax and GDN after context extension to $16$k and LEMA as trained.}
\label{fig:lm-recall-seeds}
\end{figure}

\subsection{Language modeling ablations}\label{app:lm-ablations}
\paragraph{Learning rate ablation.}
\Cref{tab:lm-lr} shows the validation cross-entropy of the three smallest sizes trained at half and at double the peak learning rate $10^{-3}\cdot1024/d$. Our rule is optimal or close to optimal for each architecture and size and is hence used for the larger models as well.
\begin{table}[htbp]
\centering
\caption{Validation cross-entropy at context length $2048$ with the peak learning rate of \Cref{app:lm-details} (rule), halved and doubled. The softmax run at $162$M and double learning rate diverged.}
\label{tab:lm-lr}
\begin{tabular}{lrrccc}
\toprule
model & params & $d$ & half & rule & double \\
\midrule
softmax & $29$M & $256$ & 3.780 & 3.751 & 3.739 \\
softmax & $77$M & $512$ & 3.276 & 3.256 & 3.257 \\
softmax & $162$M & $768$ & 2.973 & 2.968 & 4.325 \\
GDN & $29$M & $256$ & 3.713 & 3.664 & 3.661 \\
GDN & $79$M & $512$ & 3.227 & 3.200 & 3.198 \\
GDN & $170$M & $768$ & 2.940 & 2.924 & 2.926 \\
LEMA & $29$M & $256$ & 4.339 & 4.345 & 4.354 \\
LEMA & $77$M & $512$ & 3.551 & 3.545 & 3.597 \\
LEMA & $162$M & $768$ & 3.193 & 3.194 & 3.218 \\
\bottomrule
\end{tabular}
\end{table}

\paragraph{Stick-breaking ablation.}
For comparison, we train a $77$M model with ordinary stick-breaking attention~\citep{tan2025stickbreaking}, using real-valued queries and keys, no positional encoding and head dimension $64$, following the recipe of \Cref{app:lm-details}. Its validation cross-entropy is $3.189$ at context length $2048$ against $3.256$ for the softmax transformer, and on the recall analysis of \Cref{sec:lm} it roughly matches the context-extended softmax transformer (\Cref{fig:sb-recall}). Stick-breaking attention keeps the growing kv-cache and the quadratic cost of softmax attention, so it offers none of the inference gains of \Cref{sec:inference}.
\begin{figure}[htbp]
\centering
\includegraphics[width=0.6\linewidth]{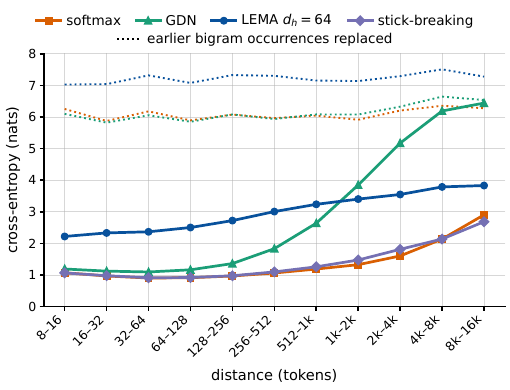}
\caption{As \Cref{fig:lm-recall}, for the models at $d=512$, including the model trained with stick-breaking attention throughout.}
\label{fig:sb-recall}
\end{figure}

\subsection{Head dimensions}\label{app:head-sizes}
Up to $309$M parameters, we also train LEMA models on FineWeb-Edu with head dimensions $d_h \in \{8, 16, 32\}$, with $d/d_h$ heads per layer. \Cref{tab:lm-ce-heads} lists their validation cross-entropies. At the small model sizes, smaller head dimensions reach a slightly lower cross-entropy, $d_h=16$ is best up to $162$M parameters and $d_h=32$ at $309$M, but the spread between head dimensions shrinks from $0.22$ nats at $29$M to $0.02$ nats at $309$M parameters.
\begin{table}[htbp]
\centering
\caption{Validation cross-entropy at context length $2048$ of the LEMA models at every head dimension, best per size in bold.}
\label{tab:lm-ce-heads}
\begin{tabular}{rrcccc}
\toprule
$d$ & params & $d_h=8$ & $d_h=16$ & $d_h=32$ & $d_h=64$ \\
\midrule
$256$ & $29$M & 4.141 & \textbf{4.126} & 4.176 & 4.345 \\
$512$ & $77$M & 3.486 & \textbf{3.481} & 3.502 & 3.545 \\
$768$ & $162$M & 3.157 & \textbf{3.143} & 3.162 & 3.194 \\
$1024$ & $309$M & 2.949 & 2.950 & \textbf{2.926} & 2.945 \\
\bottomrule
\end{tabular}
\end{table}

\Cref{tab:state-heads} shows that heads of smaller dimension hold fewer entries at every context length, recorded as described in \Cref{app:lm-analysis}.
\begin{table}[htbp]
\centering
\caption{Mean number of entries per head after $t$ tokens for the LEMA models with $309$M parameters, averaged over heads and over $32$ held-out windows.}
\label{tab:state-heads}
\begin{tabular}{lrrrrrrrr}
\toprule
$d_h$ & 2k & 4k & 8k & 16k & 32k & 64k & 128k & 256k \\
\midrule
$d_h=8$ & 34 & 39 & 45 & 51 & 56 & 61 & 66 & 71 \\
$d_h=16$ & 98 & 137 & 188 & 256 & 335 & 431 & 539 & 660 \\
$d_h=32$ & 178 & 279 & 432 & 669 & 1\,005 & 1\,514 & 2\,232 & 3\,261 \\
$d_h=64$ & 210 & 340 & 542 & 872 & 1\,358 & 2\,146 & 3\,347 & 5\,249 \\
\bottomrule
\end{tabular}
\end{table}

\Cref{fig:lm-recall-heads} shows the repeated-bigram analysis of \Cref{sec:lm} for the $309$M LEMA models at every head dimension and \Cref{fig:sniah1-heads} their accuracy on S-NIAH-1. With $d_h=8$, recall declines worst with distance, while $d_h=32$ and $64$ decline more gently than GDN. On S-NIAH-1, every head dimension is flat with the context length, at $0.14$ to $0.17$ for $d_h=8$, $0.42$ to $0.47$ for $d_h=16$, $0.67$ to $0.73$ for $d_h=32$ and $0.54$ to $0.57$ for $d_h=64$. As the loss penalty of $d_h=64$ shrinks with model size and fewer heads per layer make training and inference faster, we use $d_h=64$ for our main models, accepting its lower S-NIAH-1 accuracy at $309$M.
\begin{figure}[htbp]
\centering
\includegraphics[width=0.6\linewidth]{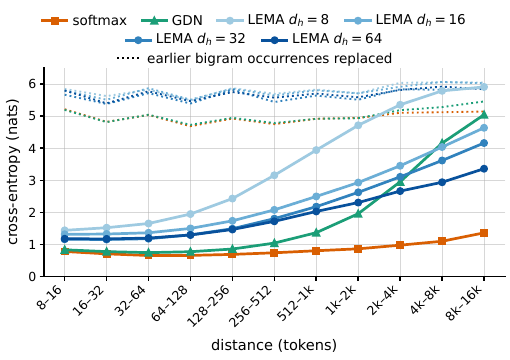}
\caption{As \Cref{fig:lm-recall}, for the models at $d=1024$ and every LEMA head dimension.}
\label{fig:lm-recall-heads}
\end{figure}

\begin{figure}[htbp]
\centering
\includegraphics[width=0.6\linewidth]{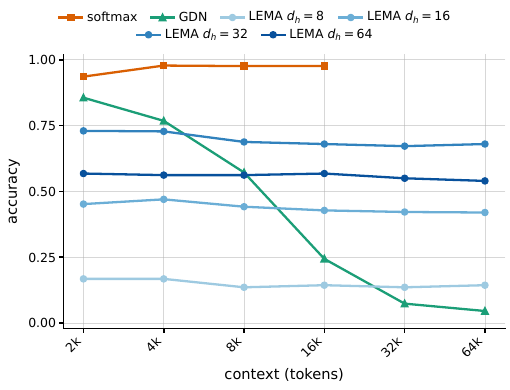}
\caption{Accuracy on S-NIAH-1 (\Cref{app:sniah1}) against the context length for the models at $d=1024$ and every LEMA head dimension.}
\label{fig:sniah1-heads}
\end{figure}

\subsection{State growth and attention distances}\label{app:lm-analysis}
\paragraph{State growth.}
For a trained LEMA model, we feed $32$ held-out windows of $2^{18}$ tokens and record for every head, after every prefix length $t$, the number of distinct key codes among its first $t$ keys, i.e.\ the number of entries the head's dictionary holds after $t$ tokens. \Cref{tab:state-sizes} lists the means over heads and windows at selected context lengths for every model size. Heads differ widely in how many distinct keys they use, and at every context length, larger models use more entries per head.
\begin{table}[htbp]
\centering
\caption{Mean number of entries per head after $t$ tokens for the LEMA models of every size, averaged over heads and over $32$ held-out windows.}
\label{tab:state-sizes}
\begin{tabular}{lrrrrrrrr}
\toprule
params & 2k & 4k & 8k & 16k & 32k & 64k & 128k & 256k \\
\midrule
$29$M & 58 & 84 & 122 & 176 & 251 & 357 & 499 & 700 \\
$77$M & 144 & 228 & 356 & 561 & 853 & 1\,306 & 1\,946 & 2\,869 \\
$162$M & 190 & 306 & 482 & 767 & 1\,181 & 1\,833 & 2\,785 & 4\,209 \\
$309$M & 210 & 340 & 542 & 872 & 1\,358 & 2\,146 & 3\,347 & 5\,249 \\
$525$M & 258 & 435 & 735 & 1\,257 & 2\,128 & 3\,672 & 6\,368 & 11\,192 \\
$834$M & 319 & 558 & 982 & 1\,748 & 3\,099 & 5\,567 & 10\,029 & 18\,189 \\
\bottomrule
\end{tabular}
\end{table}

\paragraph{Attention distance and hit rate.}
For the largest model ($834$M parameters), we record on $8$ held-out windows of $16\,384$ tokens where every head reads: the fraction of its queries that find a matching key, and for those the distance $i-\ell_i$ to the position read. \Cref{fig:distances} shows the distribution of the distance for every head. Of the $576$ heads, $56$ never find a key. These are dead and indicate that the training procedure could be improved. Most heads retrieve mostly from quite short distances with more long-range heads in the later layers. This is qualitatively somewhat similar to softmax transformers, where lower layers attend locally, attention distance tends to grow with depth and only few heads attend far back~\citep{sukhbaatar2019adaptive, vig2019analyzing, wu2024retrievalheads}.
\begin{figure}[htbp]
\centering
\includegraphics[width=\linewidth]{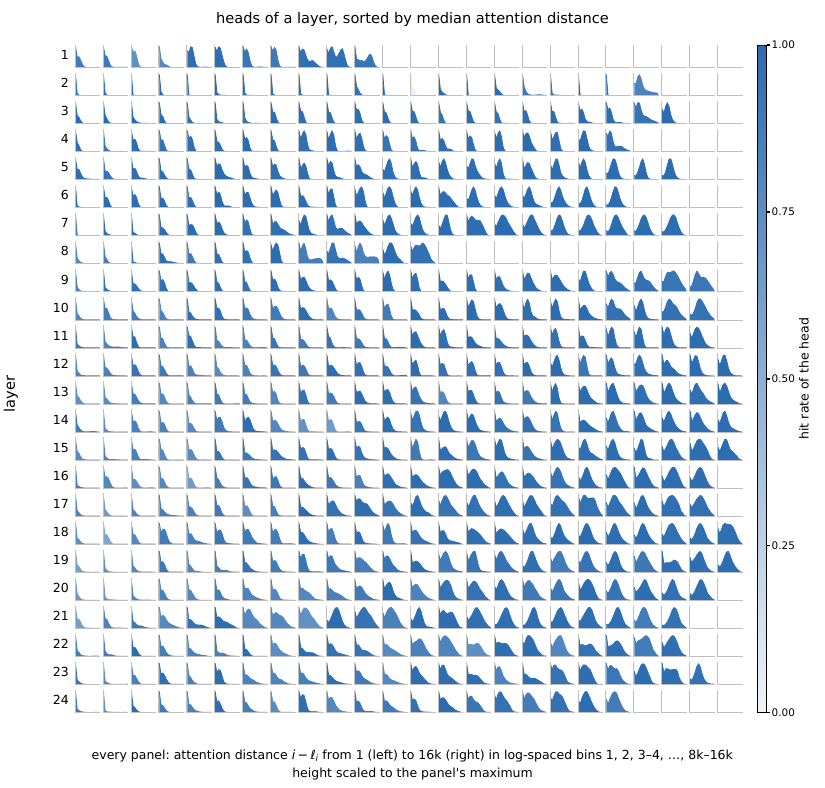}
\caption{Attention distance of every head of the largest LEMA model ($834$M parameters) on $8$ held-out windows of $16\,384$ tokens. One panel per head, layers as rows, heads of a layer sorted by their median distance from left to right. Each panel shows the distribution of the distance $i-\ell_i$ between a query and the position it attends to, over all positions with a match, in log-spaced bins from $1$ to $16\,384$ and scaled to the panel's maximum. The color encodes the hit rate of the head and fades towards white as fewer of its queries find a key. A head that never finds one is empty.}
\label{fig:distances}
\end{figure}

\subsection{Context extension}\label{app:lm-extension}
For context extension, models are retrained from their finished runs for $1$B tokens at context length $16\,384$ (results for the largest models in \Cref{tab:lm-extension}) with a fresh AdamW using the betas, weight decay and gradient clipping of pretraining, a peak learning rate of a tenth of the pretraining peak, $2\%$ linear warmup and cosine decay to the final learning rate of pretraining, within the range of \citet{chen2023extending} and \citet{xiong2023effective}. Softmax transformers raise the RoPE base from $10^4$ to $10^6$~\citep{roziere2023code}, GDN is unchanged. For LEMA, we fix $c=63$, forward $\alpha=10$ and backward $\alpha=2$. The batch size remains $2^{15}$ tokens for all models.
\begin{table}[htbp]
\centering
\caption{Validation cross-entropy of the largest models at context lengths $2048$ and $16\,384$, as trained and after context extension.}
\label{tab:lm-extension}
\begin{tabular}{lrrrr}
\toprule
 & \multicolumn{2}{c}{as trained} & \multicolumn{2}{c}{after extension} \\
\cmidrule(lr){2-3}\cmidrule(lr){4-5}
model & $2048$ & $16\,384$ & $2048$ & $16\,384$ \\
\midrule
softmax & 2.506 & 6.747 & 2.521 & 2.476 \\
GDN & 2.492 & 2.459 & 2.507 & 2.470 \\
LEMA & 2.657 & 2.635 & 2.668 & 2.642 \\
\bottomrule
\end{tabular}
\end{table}
\Cref{fig:lm-extension} shows the recall curves before and after extension. As is well known, softmax transformers with RoPE do not handle contexts far exceeding the ones seen in training without adapting positional encodings and/or retraining. GDN handles these contexts, but nonetheless its recall ability improves with the context extension. For the LEMA model, extension leaves recall at short distances unchanged and worsens it increasingly with distance, in the farthest bucket from $2.80$ to $3.19$ nats.

\begin{figure}[htbp]
\centering
\includegraphics[width=\linewidth]{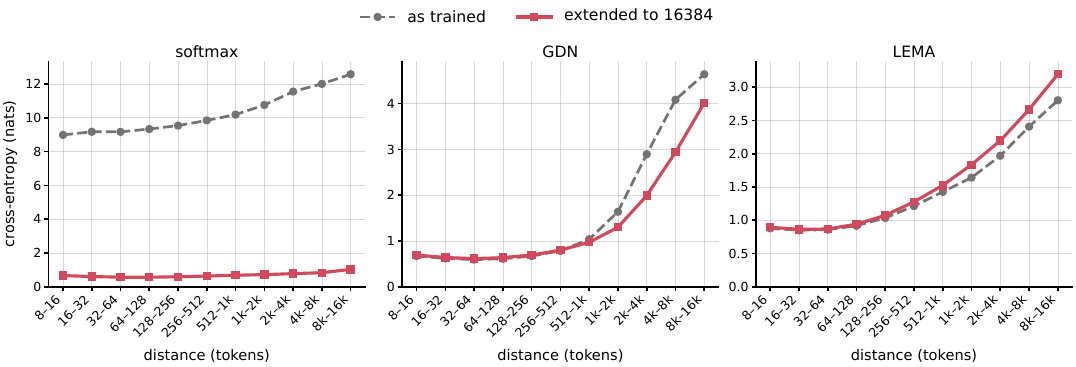}
\caption{Cross-entropy on the second token of repeated rare bigrams against the distance to the earlier occurrence for the largest models, as trained and after context extension to $16$k.}
\label{fig:lm-extension}
\end{figure}

\subsection{Repeated rare bigrams}\label{app:lm-recall}
Targets are taken from windows of $16\,384$ tokens of the validation split, within which documents are delimited by end-of-text tokens. Rarity counts use the first $10$B training tokens. A repeated occurrence of a rare bigram $xy$ is a target if $x$ does not recur between it and the previous occurrence of $xy$. We score $y$ and bucket targets by the distance between these occurrences; target counts range from $117\,588$ in the $32$ to $64$ bucket down to $740$ in the $8$k to $16$k bucket. \Cref{fig:lm-recall} shows the loss of every model size against this distance. The dotted baselines are measured on $1024$ targets sampled per bucket, all $740$ in the farthest one, each in its own copy of its window in which every earlier occurrence of the bigram is replaced by tokens drawn from the unigram distribution of the validation split, so that corruptions never interfere and every model sees the same windows. The difference to this baseline shows whether a model uses the earlier occurrence, unlike e.g.\ taking the loss at the first occurrence as baseline, which would be confounded by the available context. \Cref{fig:lm-recall-examples} (top) shows ten random targets of the farthest bucket, each one a rare phrase to be copied from context.
\begin{figure}[htbp]
\centering
\includegraphics[width=\linewidth]{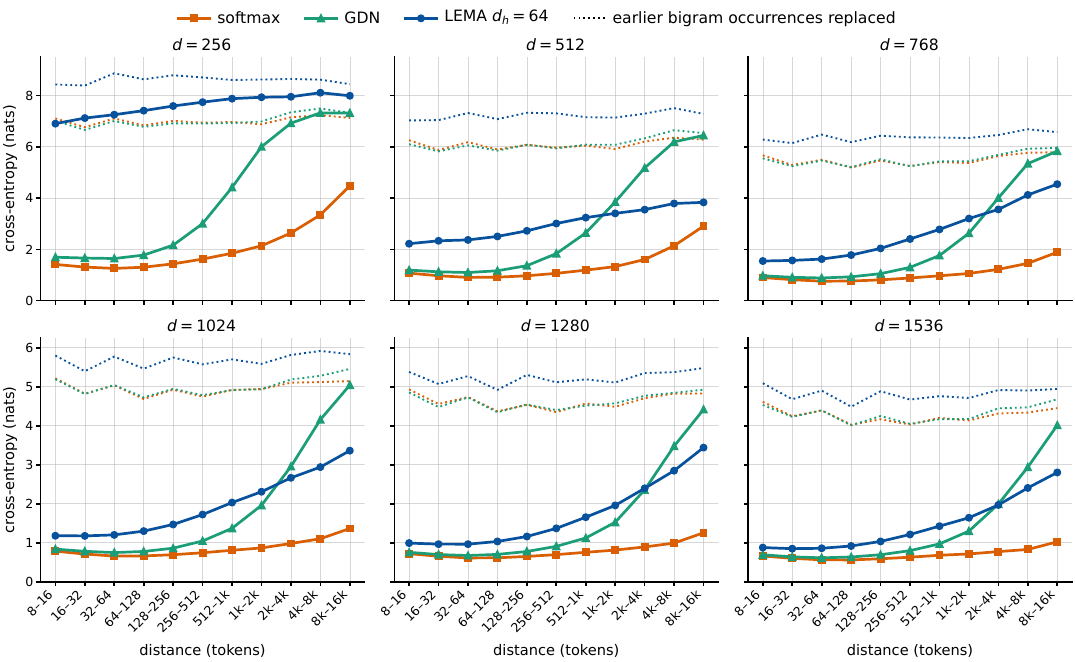}
\caption{Cross-entropy on the second token of repeated bigrams occurring rarely in training against the distance to the earlier occurrence of the bigram, one panel per model dimension, for softmax transformers and GDN after context extension to $16$k and for LEMA models as trained. Dotted lines estimate the cross-entropy of the same model after replacing all earlier occurrences of the bigram by random tokens.}
\label{fig:lm-recall}
\end{figure}

Our restriction makes $y$ the latest observed continuation of $x$, so the copying rule above suffices. With intervening competing continuations (\Cref{fig:lm-recall-examples}, bottom), that rule instead predicts a different token: even perfect retention of $xy$ leaves the model to choose among continuations. \citet{olsson2022induction} illustrate how copying previous continuations can hurt prediction in conventional transformers. Allowing such intervening occurrences, as in \citet{arora2023zoology}, adds targets that constitute $16\%$ of the broader set below $128$ tokens but $72\%$ in the farthest bucket. \Cref{fig:lm-recall-union} shows the largest models on this broader target set. All models decline more steeply with distance there, softmax transformers included. While LEMA only barely outperforms GDN in the farthest bucket in terms of the loss on this slice, it is still further below its baseline and hence uses the earlier occurrence of the bigram more to improve its prediction than GDN does ($0.9$ against $0.3$ nats difference to baseline).
\begin{figure}[htbp]
\centering
\includegraphics[width=\linewidth]{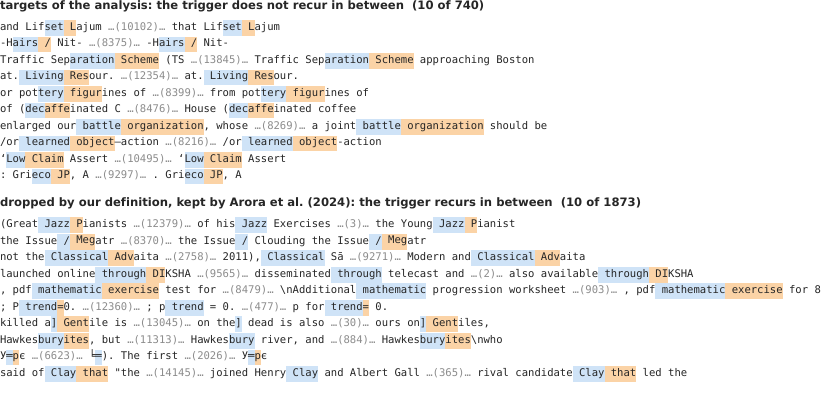}
\caption{\emph{Top:} ten random targets of the distance bucket $8$k to $16$k, with two tokens of context on either side of each occurrence and the number of tokens hidden in between. The trigger is marked blue, the token after it, which is scored at the later occurrence, orange. \emph{Bottom:} ten random targets of the definition of \citet{arora2023zoology} that ours drops, with the last occurrence of the trigger before the scored one as the middle span.}
\label{fig:lm-recall-examples}
\end{figure}
\begin{figure}[htbp]
\centering
\includegraphics[width=0.6\linewidth]{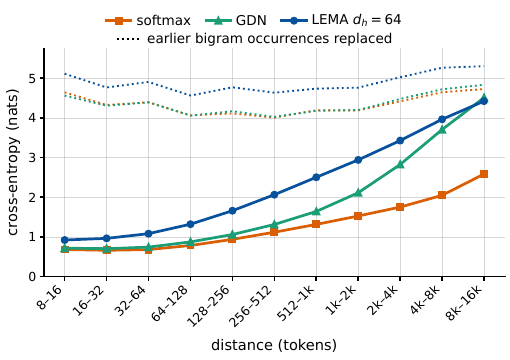}
\caption{The largest models on all targets of the definition of \citet{arora2023zoology}, as in \Cref{fig:lm-recall}.}
\label{fig:lm-recall-union}
\end{figure}

\subsection{Single-needle retrieval}\label{app:sniah1}
We use the task S-NIAH-1 of RULER~\citep{hsieh2024ruler} as released. The haystack repeats the sentence ``The grass is green. The sky is blue. The sun is yellow. Here we go. There and back again.'', a needle ``One of the special magic numbers for \{key\} is: \{number\}.'' with a random adjective-noun key and a random seven-digit number is inserted at a random depth, and the prompt ends with ``What is the special magic number for \{key\} mentioned in the provided text? The special magic number for \{key\} mentioned in the provided text is''. The model generates $128$ tokens greedily, and a prompt counts as solved if the number occurs in the generation. We use $500$ prompts per context length, the context-extended softmax transformers and GDN models (\Cref{app:lm-extension}) and the LEMA models as trained. \Cref{tab:sniah1} lists the accuracies, \Cref{fig:sniah1-depth} their dependence on the depth of the needle at $16$k.
\begin{table}[htbp]
\centering
\caption{Accuracy on S-NIAH-1 against the context length, $500$ prompts per cell. Softmax transformers are tested only up to their extended context length.}
\label{tab:sniah1}
\begin{tabular}{llrrrrrr}
\toprule
$d$ & model & $2$k & $4$k & $8$k & $16$k & $32$k & $64$k \\
\midrule
$1024$ & softmax, extended & 0.94 & 0.98 & 0.98 & 0.98 & -- & -- \\
 & GDN, extended & 0.86 & 0.77 & 0.57 & 0.24 & 0.07 & 0.05 \\
 & LEMA & 0.57 & 0.56 & 0.56 & 0.57 & 0.55 & 0.54 \\
$1536$ & softmax, extended & 0.94 & 0.97 & 0.99 & 1.00 & -- & -- \\
 & GDN, extended & 0.99 & 0.87 & 0.61 & 0.39 & 0.22 & 0.10 \\
 & LEMA & 0.93 & 0.92 & 0.91 & 0.91 & 0.90 & 0.89 \\
\bottomrule
\end{tabular}
\end{table}
\begin{figure}[htbp]
\centering
\includegraphics[width=0.6\linewidth]{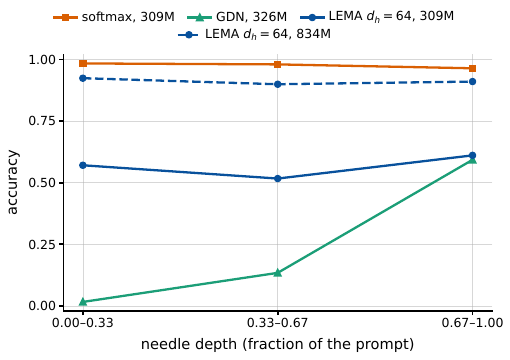}
\caption{Accuracy on S-NIAH-1 at context length $16$k against the depth of the needle.}
\label{fig:sniah1-depth}
\end{figure}

Since we use LEMA transformers without positional encoding, their hidden states at a position depend only on the tokens and on the dictionaries. For a repeated sentence, the dictionaries of an $L$-layer model therefore stop changing after at most $L$ repetitions: the first layer sees the same input in every repetition, so its dictionary is fixed after the first one and its outputs are the same from the second on, and by induction the inputs of layer $l$ are the same from repetition $l$ on, so its dictionary is fixed after repetition $l$. We feed the filler one repetition at a time and read out every dictionary after each ingestion. The dictionaries of the $16$-layer $309$M model stop changing after the $14$th repetition and those of the $24$-layer $834$M model after the $21$st. The needle then changes the dictionaries, and the filler after it reaches a fixed point again after $13$ and $21$ repetitions, with $7\,591$ and $27\,557$ entries in total. Hence the state at the question, and with it the answer, is the same for any number of filler sentences before and after the needle beyond these counts.

\subsection{Further RULER tasks}\label{app:ruler-more}
\Cref{fig:ruler-more} shows the largest models on further RULER tasks with the protocol of \Cref{app:sniah1}. S-NIAH-2 and S-NIAH-3 place a number or a UUID in the Paul Graham essays that RULER ships, the multi-key task MK-NIAH-1 adds three distractor needles to the essays, and in MK-NIAH-2 the haystack consists of key-value lines, so that the number of stored pairs grows with the context. As GDN and LEMA mostly answer with a blank instead of a number on these tasks, we also report the cross-entropy of the correct answer, the negative log-probability of its tokens given the prompt and the colon that every model emits first, summed over the tokens of the answer and averaged over prompts. On S-NIAH-2, S-NIAH-3 and MK-NIAH-1, GDN generates the answer more often than LEMA and has the lower cross-entropy up to $4$k, while LEMA has the lower cross-entropy at $16$k. On MK-NIAH-2, GDN stays ahead at every length.
\begin{figure}[htbp]
\centering
\includegraphics[width=\linewidth]{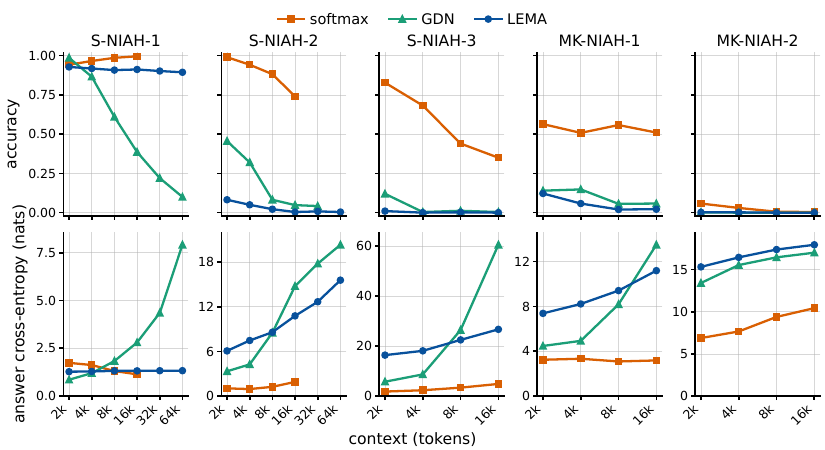}
\caption{Accuracy (top) and cross-entropy of the correct answer (bottom) of the largest models on RULER tasks against the context length, softmax transformers and GDN after context extension to $16$k.}
\label{fig:ruler-more}
\end{figure}

\subsection{Longer training and recall}\label{app:lm-tokens}
We train all three architectures at $d=512$ on five times the tokens of \Cref{app:lm-details}, $7.7$B instead of $1.5$B, with the recipe otherwise unchanged, so that every schedule stretches with the run. \Cref{tab:lm-tokens} lists the validation cross-entropies: softmax transformers and GDN improve by $0.16$ nats at context length $2048$ and LEMA by $0.09$ nats, so the gap widens. More importantly, on the recall of repeated rare bigrams (\Cref{fig:lm-tokens}), the two baselines, again after context extension, improve at every distance, while the LEMA model worsens at every distance.
\begin{table}[htbp]
\centering
\caption{Validation cross-entropy of the models at $d=512$ trained on $1.5$B and on $7.7$B tokens, at context length $2048$ as trained and at $16\,384$ after context extension for softmax transformers and GDN and as trained for LEMA.}
\label{tab:lm-tokens}
\begin{tabular}{lcccc}
\toprule
 & \multicolumn{2}{c}{$1.5$B tokens} & \multicolumn{2}{c}{$7.7$B tokens} \\
\cmidrule(lr){2-3}\cmidrule(lr){4-5}
model & $2048$ & $16\,384$ & $2048$ & $16\,384$ \\
\midrule
softmax & 3.256 & 3.170 & 3.101 & 3.060 \\
GDN & 3.200 & 3.130 & 3.043 & 3.017 \\
LEMA & 3.545 & 3.538 & 3.452 & 3.442 \\
\bottomrule
\end{tabular}
\end{table}
\begin{figure}[htbp]
\centering
\includegraphics[width=\linewidth]{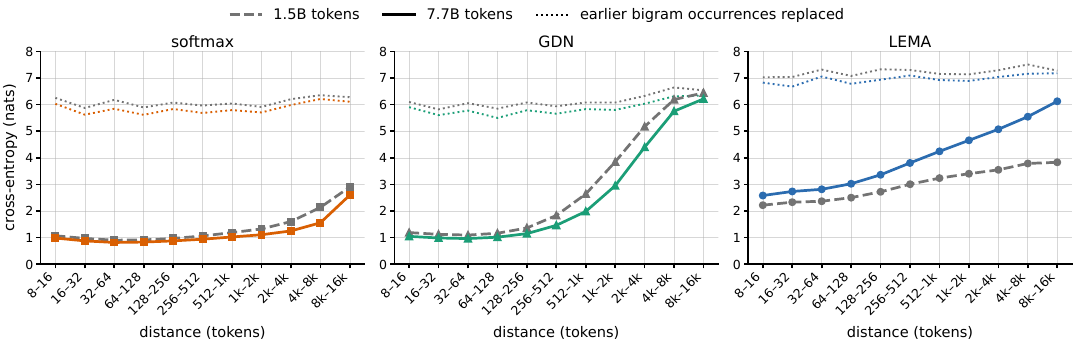}
\caption{Cross-entropy on the second token of repeated rare bigrams against the distance to the earlier occurrence for the models at $d=512$ trained on $1.5$B and on $7.7$B tokens, softmax transformers and GDN after context extension to $16$k and LEMA as trained. Dotted lines estimate the cross-entropy of the same model after replacing all earlier occurrences of the bigram by random tokens.}
\label{fig:lm-tokens}
\end{figure}

\section{Additional material for \texorpdfstring{\Cref{sec:inference}}{Section 5}}\label{app:inference}
\subsection{Softmax and GDN implementation}\label{app:inf-softmax}
The softmax transformers are run with vLLM~0.17~\citep{kwon2023efficient} on Llama models of the geometries of \Cref{tab:inference-models} with random weights, with prefix caching disabled and $95\%$ of the VRAM for weights and cache, and with the faster of its attention kernels in each scenario: FlashAttention-2~\citep{dao2024flashattention2} for prefill and for every grouped-query or batched generation, its Triton kernel for generation at batch size $1$ with multi-head attention.

An idealized bandwidth lower bound assumes one read of each required weight and cached key and value per decode step. With $W$ bytes of weights accessed per step, excluding unused input-embedding rows, and $4LHd_h$ bytes of cache per token for $L$ layers, $H$ heads and head dimension $d_h$ in bf16, the bound at context length $n$ is
\[
    \frac{W + 4LHd_h\,n}{B}
\]
for a memory bandwidth $B$, a constant plus a term linear in $n$. Offloading the kv-cache can extend the context, but incurs additional data transfers or CPU attention computation~\citep{sheng2023flexgen, lee2024infinigen}. At the peak bandwidth of the RTX~3090, vLLM is slower than this bound by a factor between $1.29$ and $1.56$ for all three models and all context lengths.

The GDN models are Qwen3-Next models~\citep{qwen2025next} run in vLLM with every layer set to GDN and dense MLPs, in the geometries of \Cref{tab:inference-models} with the GDN heads of \Cref{app:lm-details}, half as many of dimension $128$, and random weights. vLLM's bookkeeping of experts is bypassed, as the models have none. The output gate of the GDN block adds $7\%$ of parameters over the softmax model of the same geometry. The state of a sequence is $2LHd_h^2$ bytes for the recurrent part, with GDN's head count and head dimension, plus the short convolution, $10$\,MB for the $0.8$B model, so its state size does not limit context length.

\subsection{LEMA implementation}\label{app:inf-lema}
\paragraph{Codes.}
The binarized query or key of a head with $d_h \le 64$ is packed into one $64$-bit integer whose $i$-th bit is set when the $i$-th pre-activation is nonnegative, so binarization and packing are one operation on the GPU and no $\pm1$ vectors are formed. Two codes match exactly when the integers are equal. Larger head dimensions would need more than one word per code, which changes the constants below but nothing else.

\paragraph{The hash table.}
All dictionaries of a model reside in one hash table in main memory with open addressing and linear probing~\citep{knuth1998sorting}, shared by every layer, every head and every sequence of a batch. A slot holds a $64$-bit code, a $64$-bit identifier of the layer, sequence and head the entry belongs to, and the value, $16 + 2d_h$ bytes in total. The key of an entry is the pair of identifier and code, and its home slot is a hash of this pair. A lookup walks from the home slot over consecutive slots until it finds the key or an empty slot, a miss, which returns the zero vector. An insert walks the same way and either overwrites the value of the key it finds, so that the latest value wins as in \Cref{def:lema}, or claims the first empty slot. With one table for all heads, a head using many distinct codes takes more slots, the table fills at the total number of distinct codes over all heads instead of at the number of the busiest head times the number of heads, and the load of the table, the fraction of occupied slots that determines the length of the walks, is one number for the whole model. The table is allocated once with a fixed number of slots, every page of it is touched at allocation, and it is never resized. The available memory is known in advance, as for the preallocated kv-cache of the baseline.

\paragraph{Decoding.}
For each generated token and each layer, the GPU computes the codes and values of every head and copies them into pinned buffers in main memory, one call into a C++ routine looks up the query code and then inserts the key code with its value for each of the $bH$ dictionaries of the layer at batch size $b$, and the retrieved values, or zeros, are copied back to the GPU, where the rest of the layer follows. At batch size $1$, a token thus costs $LH$ lookups and $LH$ inserts, independent of the context length. The GPU work of a step is captured into $L+1$ CUDA graphs, where graph $i$ finishes layer $i-1$ and starts layer $i$, and the C++ call runs between two replays. The routine prefetches the home slots of upcoming items, so that the memory accesses of different dictionaries overlap, and processes different dictionaries in parallel during prefill and sufficiently large batches.

\paragraph{Prefill.}
A prompt is processed in chunks of $2048$ tokens, so that only the cache grows with the prompt and not the memory used for other activations. For a LEMA model, the projections and MLPs of a chunk are computed for all its positions at once on the GPU, and the C++ routine then processes the chunk's $n_c \le 2048$ positions in every dictionary in order, so that each query sees the keys of all earlier positions, of earlier chunks and of the same chunk, and not its own. No matching takes place on the GPU, the total work equals that of processing the whole prompt at once, and a decode step is the case $n_c = 1$ of the same call. 

\paragraph{Random codes.}
Random codes are close to the worst case for occupancy and locality due to practically no collisions at $d_h=64$. Generation and prefill times of softmax transformers do not depend on the weights. Those of LEMA models depend on them only through the codes, which determine which slots are accessed and how quickly the table fills. All measurements of \Cref{fig:inference} use random weights, and for the LEMA models the codes of every layer, queries and keys alike, are replaced by fresh random codes in every step. Codes then never repeat, so every lookup misses, every insert claims a new slot, the table gains $LH$ entries per token, the fastest it can fill, and no access pattern is left for the caches of the CPU to exploit. Trained models repeat codes (\Cref{tab:state-sizes}) and fill the table far more slowly. \Cref{fig:trained-vs-worst} compares the $834$M LEMA model of \Cref{sec:lm} to a model of the same geometry with random codes, in generation, where the model samples its own text, and in prefill, where it processes held-out text. After $573$k generated tokens, the trained model occupies $6\%$ of the table, whereas the random-code model occupies $95\%$, and its time per token stays at $3.1$\,ms. In prefill, the trained model processes $571$k tokens in $28$\,s and the random-code model in $40$\,s.
\begin{figure}[htbp]
\centering
\includegraphics[width=\linewidth]{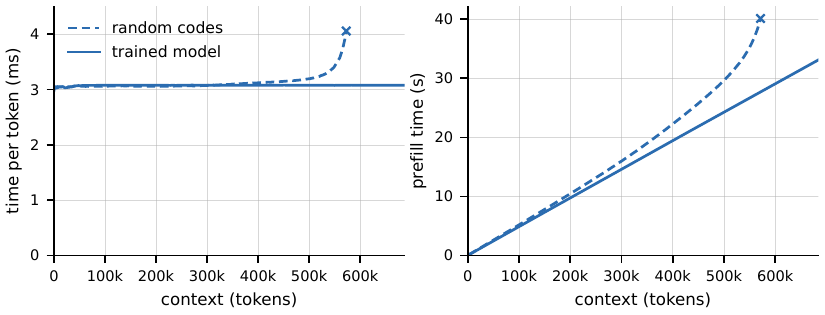}
\caption{Time per generated token (left) and prefill time (right) against context length for the $834$M LEMA model of \Cref{sec:lm}, generating its own text and processing held-out text, and for a model of the same geometry with random codes, both with the $50$\,GB hash table of \Cref{fig:inference}. Crosses mark $95\%$ table occupancy.}
\label{fig:trained-vs-worst}
\end{figure}

\subsection{Benchmark details}\label{app:inf-benchmark}
\paragraph{Hardware and software.}
All measurements are taken on one machine with an NVIDIA RTX~3090 with $24$\,GB of VRAM and a peak memory bandwidth of $936$\,GB/s, an Intel Core i9-12900K and $64$\,GB of main memory. The LEMA implementation runs on PyTorch~2.8 with CUDA~12.8, and vLLM~0.17 on PyTorch~2.10 with CUDA~13.0. Weights, the kv-cache and the values in the hash table are stored in bf16.

\paragraph{Models.}
\Cref{tab:inference-models} lists the three geometries. They follow \Cref{app:model-details} with head dimension $64$ throughout, one key and value per query head and the MLP widths of the table, that of Llama~3 for the $8$B model. Softmax models use RoPE with base $10^4$ and LEMA models no positional encoding. The $0.8$B geometry is that of the largest trained model of \Cref{sec:lm}.
\begin{table}[htbp]
\centering
\caption{The three model geometries of \Cref{fig:inference}: depth $L$, model dimension $d$, heads $H$ of dimension $64$, MLP width $\dff$, parameters and their size in bf16, and the parameters of the GDN model of the same geometry.}
\label{tab:inference-models}
\begin{tabular}{lrrrrrrr}
\toprule
model & $L$ & $d$ & $H$ & $\dff$ & params & weights & GDN params \\
\midrule
$0.8$B & $24$ & $1536$ & $24$ & $4096$ & $0.83$B & $1.7$\,GB & $0.89$B \\
$3$B & $28$ & $3072$ & $48$ & $8192$ & $3.48$B & $7.0$\,GB & $3.75$B \\
$8$B & $32$ & $4096$ & $64$ & $14\,336$ & $8.20$B & $16.4$\,GB & $8.74$B \\
\bottomrule
\end{tabular}
\end{table}

\paragraph{Provisioning.}
Both caches are allocated once before a run and are never reallocated. vLLM provisions its kv-cache from $95\%$ of the VRAM minus the weights and its workspace, which is $22$, $17$ and $7$\,GB for the three sizes. The hash table of a LEMA model takes $50$\,GB of main memory for every size, $347$ million slots of $144$ bytes.

\paragraph{Generation.}
One run gives one curve of \Cref{fig:inference} (top). Starting from the end-of-text token and an empty cache, the model generates one token at a time, sampled from its own prediction at temperature $1$. Softmax and LEMA runs stop at the VRAM limit and $95\%$ table occupancy, respectively; GDN runs stop at prescribed context lengths. The wall-clock time of every step is recorded and averaged over $400$ bins of consecutive tokens per curve. \Cref{tab:inference-ends} lists initial decode times, softmax capacity limits, and LEMA measurements at $80\%$ and $95\%$ occupancy. GDN generates at $2.9$, $11.3$ and $24.6$\,ms per token at the three sizes, constant to within $1.1$\,ms over the whole run.
\begin{table}[htbp]
\centering
\caption{Time per generated token in ms over the first bin of the LEMA curve of \Cref{fig:inference} (top), for softmax over the same tokens, at the end of the softmax curves, where the kv-cache is full, and at loads $0.8$ and $0.95$ of the hash table for LEMA, with the context length in tokens at these points.}
\label{tab:inference-ends}
\begin{tabular}{lrrrrrrrr}
\toprule
 & \multicolumn{3}{c}{softmax} & \multicolumn{5}{c}{LEMA} \\
\cmidrule(lr){2-4} \cmidrule(lr){5-9}
model & first & full & tokens & first & load $0.8$ & tokens & load $0.95$ & tokens \\
\midrule
$0.8$B & $2.66$ & $35.9$ & $150\,497$ & $3.04$ & $3.17$ & $481\,152$ & $4.06$ & $572\,675$ \\
$3$B & $10.53$ & $34.8$ & $48\,913$ & $10.61$ & $10.95$ & $206\,304$ & $12.86$ & $245\,432$ \\
$8$B & $22.82$ & $33.3$ & $13\,905$ & $22.53$ & $23.00$ & $135\,408$ & $25.86$ & $161\,064$ \\
\bottomrule
\end{tabular}
\end{table}

\paragraph{Prefill.}
One run gives one curve of \Cref{fig:inference} (bottom). For a LEMA model, a prompt of uniformly random tokens is fed into a freshly provisioned cache in chunks of $2048$ tokens, and the elapsed time after every chunk is the time to process a prompt of that length. One chunk is processed beforehand through a separate small cache, so that compilation is not part of any recorded time. For vLLM, a request with a random prompt of every multiple of $1024$ tokens is submitted to the running engine, after warm-up requests, and timed until it returns its first token. The curves end as in generation. At $2048$ tokens, prefill takes $0.07$, $0.25$ and $0.55$\,s for the softmax transformers and $0.11$, $0.32$ and $0.64$\,s for the LEMA models. GDN takes $0.07$, $0.27$ and $0.59$\,s, and its prefill stays linear at $28$k, $7.6$k and $3.5$k tokens per second. The LEMA curve falls below the softmax curve from $16$k and $14$k tokens on for the $0.8$B and $3$B models, and at $12$k tokens, where the kv-cache of the $8$B model is full, the two curves meet.

\subsection{State per token}\label{app:inf-state}
The architectures differ in how much state they keep per processed token. GDN keeps a state of fixed size. A dense softmax transformer stores the keys and values of every head at every position, $144$\,KiB per token in bf16 for our $834$M model, and a LEMA transformer stores at most one entry of $144$ bytes per head and token, $81$\,KiB per token, for which the hash table reserves $85$\,KiB at its load factor of $0.95$, while the trained $834$M model stores only about $5.6$\,KiB per token over its first $256$k tokens due to overwriting (\Cref{tab:state-sizes}). The state per token determines the slope of the time per generated token at batch size $1$ when the state resides in VRAM, as for the softmax curves of \Cref{fig:inference}, whereas LEMA's decode time remains nearly constant until its table approaches capacity. For both, it determines where the context runs out, VRAM for softmax transformers and main memory for LEMA transformers.

Modern language models therefore almost always reduce the state per token by a constant factor, for example through grouped-query attention (GQA)~\citep{shazeer2019fast, ainslie2023gqa}, where $G$ query heads share one kv-cache and thus divide the state per token by $G$, so that the kv-cache of the $0.8$B model with $G=8$ holds $1.2$M tokens on our GPU, and through hybrid architectures that replace most attention layers by fixed-state layers~\citep{minimax2025minimax01, kimi2025linear}. Both can be applied to LEMA transformers just as well: $G$ query heads can share one dictionary, and LEMA layers can take the place of the attention layers of a hybrid model. Fewer stored entries alone do not imply more efficient memory use, since overwriting can discard information needed for recall. LEMA's memory advantages lie elsewhere: the state resides in main memory instead of VRAM, and its growth adapts to the content, which we demonstrated on the synthetic task and on S-NIAH-1 (\Cref{app:sniah1}), while its benefit for language modeling remains unclear.

\subsection{Batched generation}\label{app:inf-batched}
When generating a batch of $b$ sequences at once, the weights are streamed once for all $b$ tokens, but the kv-cache of every sequence is read and the dictionaries of every sequence are updated, and the hash table is shared by the batch. \Cref{fig:batched} shows the throughput at different batch sizes of the trained $834$M LEMA model of \Cref{sec:lm} and of a softmax transformer of the same geometry but using $8$-fold GQA to extend curves further (see \Cref{app:inf-state}). The throughput of the LEMA model remains nearly constant until the table approaches capacity, at $118$k tokens for batch size $64$ and $22$k tokens for batch size $256$, while the softmax transformer slows down from the first thousand tokens on and runs out of VRAM at $19$k and $5$k tokens. GDN, run as in \Cref{app:inf-softmax}, achieves similar throughput to LEMA away from the latter's capacity limit. The decline at batch size $256$ appears to be due to vLLM overhead; GDN's recurrent state and computation per token remain fixed.
\begin{figure}[htbp]
\centering
\includegraphics[width=\linewidth]{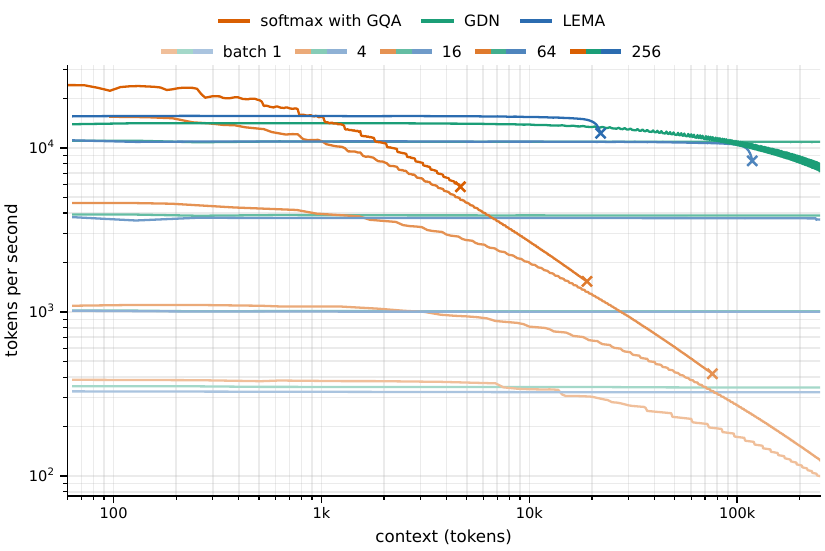}
\caption{Tokens generated per second against context length for the trained $834$M LEMA model, a softmax transformer of the same geometry with $8$-fold GQA and GDN, at batch sizes $1$ to $256$ on the machine of \Cref{app:inf-benchmark}. Crosses mark the VRAM limit for softmax and $95\%$ occupancy of LEMA's hash table; other curves are shown up to $250$k tokens.}
\label{fig:batched}
\end{figure}

\end{document}